\newcommand{\R}{\mathbb{R}}
\newcommand{\NW}{\mathtt{WNW}}
\newcommand{\MNW}{\mathtt{MWNW}}
\newcommand{\efi}{\mathtt{EF1}}
\newcommand{\wef}{\mathtt{WEF}}
\newcommand{\wefi}{\mathtt{WEF1}}
\newcommand{\wwefi}{\mathtt{WWEF1}}
\newcommand{\propi}{\mathtt{PROP1}}
\newcommand{\WP}{\mathtt{WPROP}}
\newcommand{\WPi}{\mathtt{WPROP1}}
\newcommand{\WPc}{\mathtt{WPROP}c}
\newcommand{\WM}{\mathtt{WMMS}}
\newcommand{\po}{\mathtt{PO}}
\newcommand{\wefc}{\mathtt{WEF}c}
\newcommand{\wwefc}{\mathtt{WWEF}c}
\newcommand{\MBB}{\mathrm{MBB}}
\newcommand{\wmin}{\mathrm{wmin}}
\newcommand{\wmax}{\mathrm{wmax_{-1}}}
\newcommand{\poly}{\mathrm{poly}}
\newcommand{\bfp}{\boldsymbol{p}}
\newcommand{\wpefi}{\mathtt{WpEF1}}
\newcommand{\argmin}{\mbox{argmin}}
\newcommand{\ceil}[1]{\lceil #1 \rceil }
\newcommand{\floor}[1]{\lfloor #1 \rfloor }
\newcommand{\calI}{\mathcal{I}}
\title[Weighted Envy-Freeness in Indivisible Item Allocation]{Weighted Envy-Freeness in Indivisible Item Allocation}
\author{Mithun Chakraborty}
\affiliation{%
  \institution{University of Michigan, USA}}
\author{Ayumi Igarashi}
\affiliation{%
  \institution{National Institute of Informatics, Japan}}
\author{Warut Suksompong}
\affiliation{%
  \institution{National University of Singapore, Singapore}}
\author{Yair Zick}
\affiliation{%
  \institution{University of Massachusetts, Amherst, USA}}
\begin{abstract}
We introduce and analyze new envy-based fairness concepts for agents with \emph{weights} that quantify their entitlements in the allocation of indivisible items. 
We propose two variants of weighted envy-freeness up to one item (WEF1): \emph{strong}, where envy can be eliminated by removing an item from the envied agent's bundle, and \emph{weak}, where envy can be eliminated either by removing an item (as in the strong version) or by replicating an item from the envied agent's bundle in the envying agent's bundle. 
We show that for additive valuations, an allocation that is both Pareto optimal and strongly WEF1 always exists and can be computed in pseudo-polynomial time; moreover, an allocation that maximizes the weighted Nash social welfare may not be strongly WEF1, but always satisfies the weak version of the property. 
Moreover, we establish that a generalization of the round-robin picking  sequence algorithm produces in polynomial time a strongly WEF1 allocation for an arbitrary number of agents; for two agents, we can efficiently achieve both strong WEF1 and Pareto optimality by adapting the adjusted winner procedure.
Our work highlights several aspects in which weighted fair division is richer and more challenging than its unweighted counterpart.
\end{abstract}
\begin{document}

\maketitle

\section{Introduction}

The fair allocation of resources to interested parties is a central issue in economics and has increasingly received attention in computer science in the past few decades~\cite{brams1996fair,Moulin03,Thomson16,markakis2017approximation,Moulin19}.
The problem has a wide range of applications, from reaching divorce settlements~\cite{brams1996fair} and dividing land~\cite{SegalhaleviNiHa17} to sharing apartment rent~\cite{GalMaPr17}.
\emph{Envy-freeness} is one of the most commonly studied fairness criterion in the literature; it stipulates that all agents find their assigned bundle to be the best among all bundles in the allocation \citep{Foley67,Varian74}.

Envy-freeness is a compelling notion when all agents have equal entitlements---indeed, in a standard envy-free allocation, no agent would rather take the place of another agent with respect to the assigned bundles.
However, in many division problems, agents may have varying claims on the resource.
For instance, consider a facility that has been jointly funded by three investors---Alice, Bob, and Charlie---where Alice contributed $3/5$ of the construction expenses while Bob and Charlie contributed $1/5$ each.
One could then expect Alice to envy either Bob or Charlie if she does not value her share at least three times as much as each of the latter two investors' share when they divide the usage of the facility.
Besides this interpretation as the \emph{cost} of participating in the resource allocation exercise, the weights may also represent other publicly known and accepted measures of entitlement such as \emph{eligibility} or \emph{merit}.
A prevalent example is inheritance division, wherein closer relatives are typically more entitled to the bequest than distant ones.
Likewise, different countries have differing entitlements when it comes to apportioning humanitarian aid.
Envy-freeness can be naturally extended to the general setting in which agents have \emph{weights} designating their entitlements.
When the resource to be allocated is infinitely divisible (e.g., time to use a facility, or land in a real estate), it is known that a weighted envy-free allocation exists for any set of agents' weights and valuations \cite{RobertsonWe98,zeng2000approximate}.

In this article, we initiate the study of weighted envy-freeness for the ubiquitous setting where the resource consists of \emph{indivisible} items.
Indeed, inheritance division usually involves discrete items such as real estate, cars, and jewelry; similarly, facility usage is often allocated in fixed time slots (e.g., hourly).
Since envy-freeness cannot always be fulfilled even in the canonical setting without weights, for example when all agents agree that one particular item is more valuable than the remaining items combined, recent works have focused on identifying relaxations of envy-freeness that can be satisfied in the case of equal entitlements.
The most salient of these approximations is perhaps \emph{envy-freeness up to one item (EF1)}: for any two agents $i$ and $j$, if agent $i$ envies agent $j$, then we can eliminate this envy by removing a single item from $j$'s bundle~\citep{budish2011combinatorial}.
\citet{lipton2004approximately} showed that an EF1 allocation exists and can be computed efficiently for any number of agents with monotone valuations.\footnote{EF1 is also remarkable for its robustness: it can be satisfied under cardinality constraints~\citep{BiswasB18} and connectivity constraints~\citep{bilo2019almost,Suksompong19}, has a relatively low price in terms of social welfare \citep{BeiLuMa19,barman2020optimal}, and is computable using few queries~\cite{OhPrSu19}.}
Our goal in this work is to extend EF1 to the general case with arbitrary entitlements, and explore the relationship of these extensions to other important justice criteria such as proportionality and Pareto optimality.
The richness of the weighted setting will be evident throughout our work; in particular, we demonstrate that while some protocols from the unweighted setting can be generalized to yield strong guarantees, others are less robust and cease to offer desirable properties upon the introduction of weights.

\subsection{Our Contributions}
We assume that agents have positive (not necessarily rational) weights representing their entitlements and, with the exception of Propositions~\ref{prop:wef-1-identical}, \ref{mwnw_notef1}, and \ref{non_transfer}, that they are endowed with additive valuation functions.
We begin in Section~\ref{sec:prelims} by proposing two generalizations of EF1 to the weighted setting: (strong) weighted envy-freeness up to one item ($\wefi$) and weak weighted envy-freeness up to one item ($\wwefi$). 
While $\wefi$ may appear as the more natural extension, we argue that it can impose a highly demanding constraint when the weights vastly differ, so that $\wwefi$ is a useful alternative.
In Section~\ref{sec:wef1}, we focus on two classical EF1 protocols. 
On the one hand, we show that the envy cycle elimination algorithm of \citet{lipton2004approximately} does not extend to the weighted setting except in the special case of identical valuations.
On the other hand, we construct a weight-based picking sequence which allows us to compute a $\wefi$ allocation efficiently---this generalizes a folklore result from the unweighted setting.
The analysis of this algorithm is significantly more involved than for the unweighted version and requires making intricate algebraic manipulations.
Nevertheless, the algorithm itself is simple both to explain and to implement, so we believe that it is suitable for maintaining fairness in practice. 

In Sections \ref{sec:wef1_po} and \ref{sec:wwef1_po}, we
examine the interplay between fairness and Pareto optimality.
For two agents, we exhibit that a weighted variant of the adjusted winner procedure allows us to compute an allocation that is both $\wefi$ and Pareto optimal in polynomial time---our algorithm provides a natural discretization of the classical procedure, which was designed for the divisible item setting.
We then show by adapting an algorithm of \citet{barman2018finding} that a Pareto optimal and $\wefi$ allocation is guaranteed to exist and can be found in pseudo-polynomial time for any number of agents.
Furthermore, we prove that while an allocation with maximum weighted Nash welfare may fail to satisfy $\wefi$, such an allocation is both Pareto optimal and $\wwefi$, thereby generalizing an important result of \citet{caragiannis2016unreasonable}.
Our proof for the $\wwefi$ result follows a similar outline as that of Caragiannis et al., but we need to make a case distinction based on the comparison between weights.
We continue our investigation in Section~\ref{sec:wef1_propMMS} by exploring the relationship between weighted envy-freeness and the weighted versions of other fairness concepts---in particular, we find that several relationships from the unweighted setting break down---and illustrate through experiments in Section~\ref{sec:expts} that envy-freeness is often harder to satisfy in the presence of weights than otherwise.
Finally, we conclude in Section~\ref{sec:nonadd} by discussing some obstacles that we faced when trying to extend our ideas and results beyond additive valuation functions; specifically, we show that a $\wwefi$ allocation may not exist when agents have non-additive (submodular) valuations.

\subsection{Related Work}
\label{sec:relatedwork}
There is a long line of work on the fair division of indivisible items; see, e.g., the surveys by  \citet{bouveret2016fair} and \citet{markakis2017approximation} for an overview.
Prior work on the fair allocation of indivisible items to asymmetric agents has tackled fairness concepts that are not based on envy. \citet{farhadi2019fair} introduced \emph{weighted maximin share (WMMS)} fairness, a generalization of an earlier fairness notion of \citet{budish2011combinatorial}. 
\citet{aziz2019weighted} explored WMMS fairness in the allocation of indivisible \emph{chores}---items that, in contrast to goods, are valued negatively by the agents---where agents' weights can be interpreted as their shares of the workload. 
\citet{babaioff2019fair} studied competitive equilibrium for agents with different budgets.
Recently, \citet{aziz2019polynomial} proposed a polynomial-time algorithm for computing an allocation of a pool of goods and chores that satisfies both Pareto optimality and \emph{weighted proportionality up to one item (WPROP1)} for agents with asymmetric weights. 
Unequal entitlements have also been considered in the context of divisible items with respect to \emph{proportionality} \citep{Barbanel95,brams1996fair,cseh2018complexity,RobertsonWe98,segalhalevi19cake}.
We remark here that (weighted) proportionality is a strictly weaker notion than (weighted) envy-freeness under additive valuations.
However, while PROP1 is also implied by EF1 in the unweighted setting, the relationship between the corresponding weighted notions is much less straightforward, as we demonstrate in Section~\ref{sec:wef1_propMMS}.

In addition to expressing the entitlement of individual agents, weights can also be applied to settings where each agent represents a group of individuals \citep{benabbou2019fairness,benabbou2020diversity}---here, the \emph{size} of a group can be used as its weight.\footnote{Note that in this model, each group has a valuation function that represents the overall preference of its members. Other group fairness notions do not assume the existence of such aggregate functions and instead take directly into account the preferences of the individual agents in each group \citep{conitzer2019group,kyropoulou2019almost,segal2019democratic,segal2021how}.}
Specifically, in the model of \citet{benabbou2020diversity}, groups correspond to ethnic groups (namely, the major ethnic groups in Singapore, i.e., Chinese, Malay, and Indian).
Maintaining provable fairness guarantees amongst the ethnic groups is highly desirable; in fact, it is one of the principal tenets of Singaporean society.

\section{Preliminaries}
\label{sec:prelims}
Throughout the article, given a positive integer $r$, we denote by $[r]$ the set $\{1,2,\dots,r\}$. 
We are given a set $N = [n]$ of {\em agents}, and a set $O = \{o_1,\dots,o_m\}$ of {\em items} or {\em goods}. Subsets of $O$ are referred to as {\em bundles}, and each agent $i \in N$ has a {\em valuation function} $v_i:2^O \to \R_{\geq 0}$ over bundles; the valuation function for every $i \in N$ is {\em normalized} (i.e., $v_i(\emptyset) = 0$) and {\em monotone} (i.e., $v_i(S) \leq v_i(T)$ whenever $S \subseteq T$). We denote $v_i(\{o\})$ simply by $v_i(o)$ for any $i \in N$ and $o \in O$.

An {\em allocation} $A$ of the items to the agents is a collection of $n$ disjoint bundles $A_1,\dots,A_n$ such that $\bigcup_{i \in N} A_i \subseteq O$; the bundle $A_i$ is allocated to agent $i$ and $v_i(A_i)$ is agent $i$'s {\em realized valuation} under $A$. Given an allocation $A$, we denote by $A_0$ the set $O\setminus \left(\bigcup_{i \in N} A_i\right)$, and its elements are referred to as {\em withheld items}. An allocation $A$ is said to be \emph{complete} if $A_0 = \emptyset$, and \emph{incomplete} otherwise.

In our setting with different entitlements, each agent $i \in N$ has a fixed weight $w_i\in\mathbb{R}_{>0}$; these weights regulate how agents value their own allocated bundles relative to those of other agents, and hence bear on the overall (subjective) fairness of an allocation. More precisely, we define the \emph{weighted envy} of agent $i$ towards agent $j$ under an allocation $A$ as $\max\left\{0,\frac{v_i(A_j)}{w_j}-\frac{v_i(A_i)}{w_i}\right\}$. An allocation is \emph{weighted envy-free} ($\wef$) if no agent has positive weighted envy towards another agent.
Intuitively, agent~$i$ being weighted envy-free towards agent~$j$ means that $i$'s valuation for her share $A_i$, given that $i$'s entitlement is $w_i$, is at least as high as $i$'s valuation for $A_j$ if $i$'s entitlement were $w_j$.
Weighted envy-freeness reduces to traditional envy-freeness when $w_i=w$, $\forall i \in N$ for some positive real constant $w$. Since a complete envy-free allocation does not always exist, it follows trivially that a complete $\wef$ allocation may not exist in general.
We briefly remark here that with indivisible items, it is possible to define variations of weighted envy-freeness---for example, if $w_i=1$ and $w_j=2$, one could require that agent~$j$'s bundle can be divided into two parts neither of which agent~$i$ finds more valuable than her own bundle.
Nevertheless, the definition that we use is mathematically natural and can be directly applied to arbitrary (not necessarily rational) weights.

We now state the main definitions of our article, which naturally extend \emph{envy-freeness up to one item (EF1)} \citep{lipton2004approximately,budish2011combinatorial} to the weighted setting.

\begin{definition}
An allocation $A$ is said to be \emph{(strongly) weighted envy-free up to one item} ($\wefi$) if for any pair of agents $i,j$ with $A_j\neq\emptyset$, there exists an item $o \in A_j$ such that \[\frac{v_i(A_i)}{w_i} \ge  \frac{v_i(A_j \setminus \{o\})}{w_j}.\]
More generally, $A$ is said to be \emph{weighted envy-free up to $c$ items} ($\wefc$) for an integer $c \ge 1$ if for any pair of agents $i,j$, there exists a subset $S_c \subseteq A_j$ of size at most $c$ such that \[\frac{v_i(A_i)}{w_i} \ge \frac{v_i(A_j\setminus S_c)}{w_j}.\]
\end{definition}

\begin{definition}
An allocation $A$ is said to be \emph{weakly weighted envy-free up to one item} ($\wwefi$) if for any pair of agents $i,j$ with $A_j\neq\emptyset$, there exists an item $o \in A_j$ such that 
\[\text{either } \frac{v_i(A_i)}{w_i} \ge  \frac{v_i(A_j \setminus \{o\})}{w_j} \text{ or } \frac{v_i(A_i\cup\{o\})}{w_i} \ge  \frac{v_i(A_j)}{w_j}.\]
More generally, $A$ is said to be \emph{weakly weighted envy-free up to $c$ items} ($\wwefc$) for an integer $c \ge 1$ if for any pair of agents $i,j$, there exists a subset $S_c \subseteq A_j$ of size at most $c$ such that \[\text{either }\frac{v_i(A_i)}{w_i} \ge \frac{v_i(A_j\setminus S_c)}{w_j} \text{ or } \frac{v_i(A_i\cup S_c)}{w_i} \ge  \frac{v_i(A_j)}{w_j}.\]
\end{definition}

In other words, an allocation is $\wefi$ if any (weighted) envy from an agent~$i$ towards another agent~$j$ can be eliminated by removing a single item from $j$'s bundle.
Similarly, $\wwefi$ requires that any such envy can be eliminated by either removing an item from $j$'s bundle or adding a copy of an item from $j$'s bundle to $i$'s bundle.

A valuation function $v:2^O \to \R_{\geq 0}$ is said to be \emph{additive} if $v(S)= \sum_{o \in S} v(o)$ for every $S \subseteq O$. We will assume additive valuations for most of the article; this is a very common assumption in the fair division literature and offers a tradeoff between expressiveness and simplicity~\citep{BouveretLe16,caragiannis2016unreasonable,KurokawaPrWa18}. 
Under this assumption, both $\wefi$ and $\wwefi$ reduce to EF1 in the unweighted setting. 
Moreover, one can check that under additive valuations, an allocation satisfies $\wwefi$ if and only if for any pair of agents $i,j$ with $A_j\neq\emptyset$, there exists an item $o \in A_j$ such that
\[\frac{v_i(A_i)}{w_i} \ge \frac{v_i(A_j)}{w_j} - \frac{v_i(o)}{\min\{w_i,w_j\}}.\]

The criterion $\wefi$ can be criticized as being too demanding in certain circumstances, when the weight of the envied agent is much larger than that of the envying agent. To illustrate this, consider a problem instance where agent $1$ has an additive valuation function and is indifferent among all items taken individually, e.g., $v_1(o)=1$ for every $o \in O$. Now, if $w_1=1$ and $w_2=100$, then eliminating one item from agent $2$'s bundle reduces agent $1$'s weighted valuation of this bundle by merely $0.01$. As such, we may trigger a substantial adverse effect on the overall welfare of the allocation by aiming to eliminate agent $1$'s weighted envy towards agent $2$. This line of thinking was our motivation for introducing the weak weighted envy-freeness concept. 
We also note that $\wwefi$ can be viewed as a stronger version of what one could refer to as ``transfer weighted envy-freeness up to one item'': agent $i$ is \emph{transfer weighted envy-free up to one item} towards agent~$j$ under the allocation $A$ if there is an item $o \in A_j$ that would eliminate the weighted envy of $i$ towards $j$ upon being transferred from $A_j$ to $A_i$, i.e., $v_i(A_i \cup \{o\}) \ge \frac{w_i}{w_j} \cdot v_i(A_j \setminus \{o\})$.

In addition to fairness, we often want our allocation to satisfy an efficiency criterion. One important such criterion is Pareto optimality. An allocation $A'$ is said to \emph{Pareto dominate} an allocation $A$ if $v_i(A'_i) \ge v_i(A_i)$ for all agents $i \in N$ and $v_j(A'_j) > v_j(A_j)$ for some agent $j \in N$. 
An allocation is \emph{Pareto optimal} (or $\po$ for short) if it is not Pareto dominated by any other allocation. 

Allocations maximizing the {\em Nash welfare}---defined as $\mathtt{NW}(A) := \prod_{i \in N} v_i(A_i)$---are known to offer strong guarantees with respect to both fairness and efficiency in the unweighted setting \citep{caragiannis2016unreasonable}.
For our weighted setting, we define a natural extension called \emph{weighted Nash welfare}---$\NW(A) := \prod_{i \in N} v_i(A_i)^{w_i}$. 
Since it is possible that the maximum attainable $\NW(A)$ is $0$, we define a \emph{maximum weighted Nash welfare} or $\MNW$ allocation along the lines of \citet{caragiannis2016unreasonable} as follows: given a problem instance, we find a maximum subset of agents, say $N_{\max} \subseteq N$, to which we can allocate bundles of positive value, and compute an allocation to the agents in $N_{\max}$ that maximizes\footnote{There can be multiple maximum subsets $N_{\max}$ having the same cardinality but different maximum weighted Nash welfare.
Our main positive result for $\MNW$ (Theorem~\ref{thm:mwnw}) holds for all such subsets $N_{\max}$.} $\prod_{i \in N_{\max}} v_i(A_i)^{w_i}$.
To see why the notion of $\MNW$ makes intuitive sense, consider a setting where agents have a value of $1$ for each item; furthermore, assume that the number of items is exactly $\sum_{i =1}^n w_i$. In this case, one can verify (using standard calculus) that an allocation maximizing $\MNW$ assigns to agent $i$ exactly $w_i$ items. Indeed, following the interpretation of $w_i$ as the number of members of group $i$ (see Section~\ref{sec:relatedwork}), the expression $v_i(A_i)^{w_i}$ can be thought of as each member of group $i$ deriving the same value from the set $A_i$; the group's overall Nash welfare is thus $v_i(A_i)^{w_i}$.

We also examine the extent to which weighted envy-freeness relates to the weighted versions of two other key fairness notions: \emph{proportionality} and the \emph{maximin share guarantee}.

An allocation $A$ is said to be \emph{weighted proportional} ($\WP$) if for every agent $i \in N$, it holds that $v_i(A_i) \ge \frac{w_i}{\sum_{j \in N} w_j}v_i(O)$. For a positive integer $c$, it is \emph{weighted proportional up to $c$ items} ($\WP c$) if for every $i \in N$, there exists a subset of items not allocated to $i$, i.e., $S_c \subseteq O \setminus A_i$, of size at most $c$ such that $v_i(A_i) \ge \frac{w_i}{\sum_{j \in N} w_j}\cdot v_i(O) - v_i(S_c)$; this is a natural extension of the (weighted) PROP1 concept \cite{conitzer2017fair,aziz2019polynomial}.\footnote{For any $a<b$, $\WP a$ implies $\WP b$ but the converse does not hold. Indeed, the former follows directly from the definition. For the latter, consider a problem instance with $n=2$ and $O=\{o_1,\dots,o_b\}$, weights $w_1>b-1$ and $w_2=1$, and identical, additive valuation functions such that $v_i(o)=1$ for all $i\in N$ and $o\in O$. It can be verified that the allocation that gives all items to agent~$2$ is $\WP b$ but not $\WP a$. }

Let $\Pi(O)$ denote the collection of all (ordered) $n$-partitions of the set of items $O$, or, in other words, the collection of all complete allocations of $O$ to $n$ agents. Then, the \emph{weighted maximin share} \cite{farhadi2019fair} of agent $i$  is defined as: $$\WM_i := \max_{(A_1,A_2,\dots,A_n) \in \Pi(O)} \min_{j \in N} \frac{w_i}{w_j} v_i(A_j).$$ 
An allocation $A$ is called $\WM$ if $v_i(A_i) \ge \WM_i$ for every $i \in N$.
More generally, for any approximation ratio $\alpha \in (0,1]$, $A$ is called $\alpha$-$\WM$ if $v_i(A_i) \ge \alpha \cdot \WM_i$ for every $i \in N$.

\section{$\wefi$ allocations}
\label{sec:wef1}

We commence our exploration of weighted envy-freeness by considering extensions of two standard methods for producing $\efi$ allocations in the unweighted setting: the \emph{envy cycle elimination algorithm} and the \emph{round-robin algorithm}.
As we will see, these two procedures experience contrasting fortunes in the presence of weights: while the idea of eliminating envy cycle fundamentally fails, the round-robin algorithm admits an elegant generalization that can take into account arbitrary entitlements of the agents.
 
\subsection{Envy Cycle Elimination Algorithm}

Before we discuss the envy cycle elimination algorithm of \citet{lipton2004approximately}, let us briefly recap how it works in the unweighted setting.
The algorithm allocates one item at a time in an arbitrary order.
It also maintains an ``envy graph'', which captures the envy relation between the agents with respect to the (incomplete) allocation at each stage.
The next item is allocated to an unenvied agent, and any envy cycle that forms as a result is eliminated by letting each agent take the bundle of the agent that she envies. 
This cycle elimination step allows the algorithm to ensure that there is an unenvied agent to whom it can allocate the next item.
 
As far as envy in the traditional sense is concerned, what an agent actually ``envies'' is an allocated bundle regardless of who owns that bundle. However, both the subjective valuations of allocated bundles and the relative weights interact in non-trivial ways to determine weighted envy. It is easy to see that weighted envy of $i$ towards $j$ does not imply traditional envy of $i$ towards $j$, and vice versa. A crucial implication is that even if agent $i$'s bundle is replaced with the bundle of an agent $j$ towards whom $i$ has weighted envy, $i$'s realized valuation, and hence the ratio of her realized valuation to her weight, may decrease as a result. 
Indeed, consider a problem instance with $n=2$ and $O=\{o_1,o_2,o_3\}$, weights $w_1=3$ and $w_2=1$, and identical, additive valuation functions such that  $v_i(o)=1$ for all $i \in N$ and $o \in O$. Under the complete allocation with $A_1=\{o_1,o_2\}$, agent $1$ has weighted envy towards agent $2$ since $v_1(A_2)/w_2=1/1=1>2/3=v_1(A_1)/w_1$, but agent $1$ would not prefer to replace $A_1$ with $A_2$ since that reduces her realized valuation from $2$ to $1$. On the other hand, agent $2$ could benefit from replacing $A_2$ with $A_1$ even though she does not have weighted envy towards agent $1$. As such, the natural extension of the envy cycle elimination algorithm, where an edge exists from agent $i$ to agent $j$ if and only if $i$ has weighted envy towards $j$, does not guarantee a complete $\wefi$ or even $\wwefi$ allocation.

\begin{proposition}\label{prop:wef-1-diff}
	The weighted version of the envy cycle elimination algorithm may not produce a complete $\wwefi$ allocation, even in a problem instance with two agents and additive valuations.	
\end{proposition}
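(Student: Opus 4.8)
The plan is to construct an explicit counterexample: a problem instance with $n=2$ agents, additive valuations, and a specific item ordering, such that the weighted envy cycle elimination algorithm is forced into a complete allocation that violates $\wwefi$. The algorithm processes items one at a time, always assigning the current item to some agent that is unenvied (under the weighted envy graph), and eliminates any weighted envy cycle that forms. With two agents, a cycle means mutual weighted envy, so the only nontrivial behavior is the choice of recipient at each step among the currently unenvied agent(s).

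First I would reuse the structure of the example already worked out in the preceding paragraph: weights $w_1 = 3$ and $w_2 = 1$, and identical additive valuations with $v_i(o) = 1$ for all items. The key observation from that discussion is that when agent~$1$ holds a bundle that agent~$2$ would ``like'' in the traditional sense, swapping bundles inside a cycle can actually hurt agent~$1$'s weighted position, so cycle elimination does not rescue us. I would pick the number of items (likely $m = 3$, matching the running example, possibly more if needed) and trace the algorithm: initially both agents are unenvied, so the first item can go to, say, agent~$1$; then I check the weighted envy graph, continue assigning the next unenvied agent, and so on, documenting at each step which agents are envied and verifying that the algorithm is genuinely permitted to reach the bad terminal allocation $A_1 = \{o_1, o_2\}$, $A_2 = \{o_3\}$ (or whatever the analogous configuration turns out to be).

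Then I would verify the final allocation violates $\wwefi$. Using the additive reformulation stated in the excerpt, agent~$1$ is $\wwefi$ towards agent~$2$ iff there is $o \in A_2$ with $\frac{v_1(A_1)}{w_1} \ge \frac{v_1(A_2)}{w_2} - \frac{v_1(o)}{\min\{w_1,w_2\}}$. In my target allocation, however, the potential violation is the other direction or requires a more lopsided instance, so I would likely need to enlarge the example (e.g., more identical items, or slightly tune valuations) so that after the forced run the envied agent is the \emph{heavy-weight} agent holding enough items that even removing one item — or adding a copy of one item to the light agent's bundle — fails to close the gap. Concretely I want a state where agent~$2$ (light, $w_2$ small) envies agent~$1$ (heavy), $A_1$ is large, and $\min\{w_1,w_2\} = w_2$ makes the ``add a copy'' slack $v_2(o)/w_2$ too small relative to $v_2(A_1)/w_1$; simultaneously removing one item from $A_1$ must also be insufficient.

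The main obstacle is the bookkeeping of the envy graph across the run: I must ensure that at every step the allocation the algorithm reaches is actually reachable — i.e., that the intended recipient is unenvied at that moment, or that a cycle genuinely forms and the prescribed swap occurs — and that no alternative tie-breaking is forced. Since the proposition only claims the algorithm \emph{may} fail, it suffices to exhibit one valid execution trace reaching a non-$\wwefi$ allocation; I do not need to show every execution fails. So the crux is choosing weights and valuations delicate enough that (i) the weighted envy graph stays acyclic long enough for the algorithm to pile items onto one agent, and (ii) the resulting terminal allocation provably fails the $\wwefi$ inequality for at least one ordered pair of agents. Once the instance is fixed, both checks reduce to elementary arithmetic.
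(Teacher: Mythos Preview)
Your approach has a genuine gap: you plan to build the counterexample with \emph{identical} additive valuations ($v_i(o)=1$ for all $i,o$), but this cannot work. With identical valuations the weighted envy graph is always acyclic---if $i$ weighted-envies $j$ then $v(A_i)/w_i < v(A_j)/w_j$, so a cycle would give a strict chain returning to itself. Hence the cycle-elimination step never fires, and the algorithm always allocates the next item to a genuinely unenvied agent; one can check (and the paper in fact proves as Proposition~\ref{prop:wef-1-identical}) that the resulting allocation is $\wefi$, a fortiori $\wwefi$. Your trace will therefore never reach a non-$\wwefi$ state no matter how many identical items you add.

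The paper's proof instead uses two agents with \emph{different} valuations, carefully tuned so that after several items are assigned a weighted-envy cycle forms, and it is the \emph{swap} that destroys $\wwefi$: the swap hands a large bundle of many small items to the lighter-weight agent and a single high-value item to the heavier-weight agent, leaving the heavier agent with weak weighted envy up to more than one item. Note also that your intuition about the slack term is inverted: in the additive $\wwefi$ inequality the subtracted term is $v_i(o)/\min\{w_i,w_j\}$, so when the \emph{light} agent is the one envying, dividing by the small weight makes the slack \emph{large}, making $\wwefi$ easier, not harder, to satisfy. You want the \emph{heavier} agent to be the one envying the lighter agent's suddenly large bundle after the swap---and to produce that swap you need non-identical valuations.
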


\begin{proof}
Consider a problem instance with $n=2$ and $m=12$, weights $w_1=1$ and $w_2=2$, and valuation functions defined by
	\begin{align*}
	v_1(o_r) = \begin{cases}
	1 &\text{for $r=1$};\\
	0.1 &\text{for $r=12$};\\
	0.21 &\text{otherwise};
	\end{cases}\quad \text{and} \quad 
	v_2(o_r) = \begin{cases}
	1.1 &\text{for $r=1$};\\
	0.1 &\text{for $r=12$};\\
	0.2 &\text{otherwise}.
	\end{cases}
	\end{align*}
	
	Suppose that the weighted envy cycle elimination algorithm iterates over $o_1,o_2,\dots,o_{12}$, and starts by allocating $o_1$ to agent $1$ due to, say, lexicographic tie-breaking. At this point, agent $2$ has weighted envy towards agent $1$ and not vice versa; moreover, this condition persists until items $o_2, \dots, o_{10}$ have all been allocated to agent $2$. At this point, item $o_{11}$ also goes to agent $2$, resulting in valuations $v_1(A_1)=v_1(o_1)=1$ and $v_2(A_2)=v_2(\{o_2,\dots,o_{11}\})=2$.
	Agent $2$ still has weighted envy towards agent $1$ since
	$v_2(A_2)/w_2=1<1.1/1=v_2(A_1)/w_1$; 
	on the other hand, agent~$1$ also develops weighted envy towards agent $2$ since 
	$v_1(A_2)/w_2=1.05>1=v_1(A_1)/w_1$.
	Thus, there is a cycle in the induced weighted envy graph. For an unweighted envy graph, we would ``de-cycle'' the graph at this point by swapping bundles over the cycle and that would still maintain the invariant that the allocation is $\efi$. However, if we swap the bundles in this example so that the new allocated bundles are $A'_1=A_2=\{o_2,\dots,o_{11}\}$ and $A'_2=A_1=\{o_1\}$, agent $2$ will end up having (weak) weighted envy up to more than one item towards agent $1$ since 
	$v_2(A'_2)/w_2=1.1/2=0.55$ and $v_2(A'_1 \setminus \{o\})/w_1=(0.2 \times 9)/1=1.8$ for every $o \in A'_1$, and this weighted envy persists no matter how we allocate $o_{12}$.\footnote{Another interesting feature of this example is that the two agents have \emph{commensurable} valuations, i.e., both agents have the same valuation for the entire collection of items $O$: $\sum_{o \in O} v_1(o)=\sum_{o \in O} v_2(o) = 3.2$. This shows that the negative result of Proposition~\ref{prop:wef-1-diff} holds even if we impose the additional restriction of commensurability on the valuation functions.}
\end{proof}

By replacing each of the items $o_2,\dots,o_{11}$ with $c$ smaller items of equal value, one can check that the envy cycle elimination algorithm cannot even guarantee $\wwefc$ for any fixed $c$.
In spite of this negative result, the algorithm does work in the special case where the agents all have the same valuations.

\begin{proposition}\label{prop:wef-1-identical}
	The weighted version of the envy cycle elimination algorithm produces a complete $\wefi$ allocation whenever agents have identical (not necessarily additive) valuations, i.e., $v_i(S)=v(S)$ for some $v:2^O \to \R_{\ge 0}$, $\forall i \in N$, $\forall S \subseteq O$.
\end{proposition}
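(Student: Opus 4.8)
The plan is to show that, under identical valuations, the weighted envy graph maintained by the algorithm is \emph{always acyclic}, so that the cycle-elimination step is never actually triggered; the algorithm then degenerates into a clean one-item-at-a-time greedy rule, for which $\wefi$ is preserved as an invariant. Write $v$ for the common valuation function and, given the current (partial) allocation $A$, set $x_i := v(A_i)/w_i$ for each $i \in N$. By definition agent $i$ has positive weighted envy towards agent $j$ exactly when $x_i < x_j$, so the envy graph is precisely the strict order obtained by comparing the values $x_i$; a directed cycle $i_1 \to i_2 \to \cdots \to i_t \to i_1$ would force $x_{i_1} < x_{i_2} < \cdots < x_{i_t} < x_{i_1}$, which is impossible. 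Consequently the ``de-cycling'' operation --- the only place where the unweighted proof breaks down in the presence of weights, as Proposition~\ref{prop:wef-1-diff} illustrates --- is never invoked, and an agent is unenvied if and only if its value $x_i$ is (weakly) minimal. Thus the algorithm simply fixes some order of the items and hands each one to an agent minimizing $x_i$ at that moment.

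It then suffices to prove, by induction on the number of items allocated so far, that every intermediate allocation $A$ produced in this way is $\wefi$. The empty allocation is vacuously $\wefi$. For the inductive step, suppose $A$ is $\wefi$, let $k$ be an agent with $x_k = \min_{\ell \in N} x_\ell$, and let $A'$ be obtained from $A$ by adding a fresh item $o$ to $A_k$. One checks the $\wefi$ condition for the three kinds of ordered pairs. For a pair $(i,k)$ with $i \neq k$, remove $o$ itself from $A'_k$: this returns the bundle $A_k$, and $v(A_k)/w_k = x_k \le x_i = v(A_i)/w_i$ by the choice of $k$. For a pair $(k,j)$ with $j \neq k$ and $A_j \neq \emptyset$, the $\wefi$ property of $A$ furnishes $o' \in A_j$ with $v(A_k)/w_k \ge v(A_j \setminus \{o'\})/w_j$, and monotonicity of $v$ gives $v(A'_k)/w_k \ge v(A_k)/w_k$, so the same $o'$ works for $A'$. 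For a pair $(i,j)$ with $i,j \neq k$, nothing changed and the claim is inherited from $A$; pairs whose second coordinate has an empty bundle impose no constraint. Hence $A'$ is $\wefi$, completing the induction; since the algorithm terminates with a complete allocation, the result follows.

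The only real content here is the acyclicity observation; once that is in place the argument is in fact shorter than in the unweighted case, and it goes through verbatim for non-additive monotone $v$ because the only structural fact used about $v$ beyond non-negativity is monotonicity (invoked in the $(k,j)$ case). I therefore do not expect a genuine obstacle --- the point worth emphasizing in the write-up is the contrast with Proposition~\ref{prop:wef-1-diff}: with heterogeneous valuations a cycle in the weighted envy graph can arise and swapping bundles around it can create weighted envy of more than one item, whereas identical valuations preclude such cycles entirely and thereby rescue the algorithm.
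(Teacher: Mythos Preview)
Your proposal is correct and takes essentially the same approach as the paper: the key observation is that under identical valuations the weighted envy graph is acyclic (since an edge $i\to j$ forces $v(A_i)/w_i < v(A_j)/w_j$), so the cycle-elimination step never fires and giving each item to an unenvied agent preserves $\wefi$. Your write-up is somewhat more explicit in the inductive step---the paper compresses the $(k,j)$ and $(i,j)$ cases into the tacit remark that only envy \emph{towards} the recipient can be created---but the content is identical.
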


\begin{proof}
	By construction of the algorithm,
	the (incomplete) allocation at the end of each iteration is guaranteed to be $\wefi$ as long as we can find an agent, say $i$, towards whom no other agent has weighted envy at the beginning of the iteration: we give the item under consideration to agent $i$ and thus any resulting weighted envy towards $i$ can be eliminated by removing this item. If there is no unenvied agent, then the weighted envy graph consists of at least one cycle; however, under identical valuations, the envy graph cannot have cycles. 
	Indeed, suppose that agents $1,2,\ldots,\ell$ form a cycle (in that order) for some $\ell \in [n]$. 
	Since agents have identical valuations, it must be that $v(A_1)/w_1 < v(A_2)/w_2 < \dots < v(A_\ell)/w_\ell < v(A_1)/w_1$, a contradiction. 
\end{proof}

\subsection{Picking Sequence Protocols}\label{sec:picking}

We now turn our attention to protocols that let agents pick their favorite item according to some predetermined sequence.
When all agents have equal weight and additive valuations, it is well-known that a round-robin algorithm, wherein the agents take turns picking an item in the order $1,2,\dots,n,1,2,\dots,n,\dots$, produces an $\efi$ allocation.
This is in fact easy to see: If agent $i$ is ahead of agent $j$ in the ordering, then in every ``round'', $i$ picks an item that she likes at least as much as $j$'s pick; by additivity, $i$ does not envy $j$.
On the other hand, if agent $i$ picks after agent $j$, then by considering the first round to begin at $i$'s first pick, it follows from the same argument that $i$ does not envy $j$ up to the first item that $j$ picks.

We show next that in the general setting with weights, we can construct a weight-dependent picking sequence which guarantees $\wefi$ for any number of agents and arbitrary weights. 
The resulting algorithm is efficient, intuitive and can be easily explained to a layperson, so we believe that it has a strong practical appeal.
Unlike in the unweighted case, however, the proof that the algorithm produces a fair allocation is much less straightforward and requires making several intricate arguments.

\begin{theorem}\label{thm:pickseq}
	For any number of agents with additive valuations and arbitrary positive real weights, there exists a picking sequence protocol that computes a complete $\wefi$ allocation in polynomial time.
\end{theorem}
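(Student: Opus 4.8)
The plan is to construct the picking sequence greedily and then verify $\wefi$ by a careful potential/weighted-counting argument. First I would define the sequence: maintain for each agent $i$ a count $t_i$ of how many items she has picked so far (initially $0$), and at each step let the next picker be an agent minimizing $\frac{t_i + 1}{w_i}$ (breaking ties by, say, smallest index), then increment that agent's count and let her take her favorite remaining item. Intuitively, this interleaves picks so that at every moment the ``normalized pick count'' $t_i/w_i$ is as balanced as possible across agents; when all $w_i$ are equal this reduces exactly to round-robin. This step is routine to state, and it is clearly polynomial-time: $m$ iterations, each doing $O(n)$ work to find the minimizer plus $O(m)$ to find the favorite item.

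Next I would isolate the combinatorial invariant the sequence guarantees. Fix two agents $i$ and $j$. Break the run of the protocol into $i$'s picks: let $i$ pick her $k$-th item at global time $\tau_k$, and let $d_k$ be the number of items $j$ picks strictly between $\tau_{k-1}$ and $\tau_k$ (with $\tau_0 = 0$, and $d_{T_i+1}$ the tail after $i$'s last pick, where $T_i$ is $i$'s total number of picks). The selection rule forces a quantitative bound: because $i$ was chosen at time $\tau_k$ over $j$, we must have had $\frac{k}{w_i} \le \frac{s+1}{w_j}$ where $s$ is $j$'s count just before $\tau_k$; and because $j$ got to pick $d_k$ times before $i$'s next turn, $\frac{s+d_k}{w_j} \le \frac{k}{w_i}$ (roughly — the exact inequalities need care at the boundaries and with ties). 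Combining these over a window gives, for any $k$, a bound of the shape $\sum_{\ell \le k} d_\ell \le \frac{w_j}{w_i}\cdot k + O(1)$, i.e. $j$ never gets more than a $\frac{w_j}{w_i}$-fraction (plus a bounded slack) of the items ahead of any prefix of $i$'s picks.

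Then I would convert this into the valuation inequality. By additivity and the greedy choice, in the $k$-th ``block'' agent $i$'s pick is worth at least as much to $i$ as every item $j$ picks after $\tau_k$ (since those items were available when $i$ picked). So $v_i(A_i)$ dominates, term by term with an appropriate shift, the value $v_i$ assigns to $A_j$ minus the items $j$ picked before $i$'s first turn. The counting bound above controls the ``shift'' so that, after dividing by $w_i$ and $w_j$ respectively, one obtains $\frac{v_i(A_i)}{w_i} \ge \frac{v_i(A_j \setminus \{o\})}{w_j}$ for $o$ chosen to be (one of) $j$'s earliest pick(s). The delicate point is handling the case $w_i < w_j$ versus $w_i \ge w_j$ separately, since the slack and the alignment of blocks behave differently; this mirrors the ``case distinction based on the comparison between weights'' the introduction warns about.

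The main obstacle, and where I expect the bulk of the ``intricate algebraic manipulations'' to live, is the passage from the per-step inequality $\frac{t_i+1}{w_i} \le \frac{t_j+1}{w_j}$ to a clean telescoped bound on partial sums of the $d_k$ that is tight enough to leave only a single removable item — a naive summation loses too much and only yields $\wefc$ for some $c>1$. I would attack it by fixing the pair $(i,j)$, writing the indices of $j$'s picks in terms of the selection rule, and showing that the sequence of thresholds $\frac{k}{w_i}$ and $\frac{s}{w_j}$ interleave so tightly that between consecutive $i$-picks $j$ picks either $\lfloor w_j/w_i \rfloor$ or $\lceil w_j/w_i \rceil$ items (a ``Beatty/Sturmian''-type regularity), from which the single-item bound follows; the unweighted round-robin argument recounted just before the theorem is exactly the $w_i = w_j$ degenerate case of this, and I would present it as the sanity check that the general bookkeeping must reduce to.
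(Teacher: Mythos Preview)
Your selection rule is the wrong one, and this is a genuine gap rather than a cosmetic variant. You propose choosing the agent minimizing $\frac{t_i+1}{w_i}$; the paper's algorithm instead picks the agent minimizing $\frac{t_i}{w_i}$ (ties broken lexicographically). These produce different sequences, and yours need not be $\wefi$: take $n=2$, $w_1=1$, $w_2=100$, and two items each valued at $1$ by both agents. Under your rule the initial ratios are $1/1$ versus $1/100$, so agent~$2$ picks; then $1/1$ versus $2/100$, so agent~$2$ picks again; agent~$1$ ends up empty while agent~$2$ holds two items, and $\frac{0}{1} < \frac{1}{100}$ even after removing one item from $A_2$. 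The paper's rule starts with all ratios equal to $0$, so the lexicographic tiebreak hands agent~$1$ the first item, and the resulting allocation (one item each) is $\wefi$. More structurally, the paper's rule forces a full round-robin over $N$ in the first $n$ iterations, and thereafter guarantees that whenever the envied agent $i$ makes her $(r+1)$-st pick ($r\ge 1$), every other agent $j$ has already picked at least $r\cdot w_j/w_i$ times (Lemma~\ref{lem:pickseq}); the ``$+1$'' in your rule destroys exactly this floor.

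Your high-level plan---block the run by one agent's picks, use greedy dominance within blocks, then telescope---is in the right spirit, but the decomposition is inverted relative to what the selection rule actually yields: the paper indexes by the \emph{envied} agent's picks and lower-bounds how often the \emph{envying} agent has picked before each of them, which is what the minimization rule delivers directly. With that orientation the valuation inequality follows from a single induction on $r$ (Lemma~\ref{lem:each_pick_wef1}), with no Beatty/Sturmian regularity and no case split on the sign of $w_i-w_j$. (The ``case distinction based on the comparison between weights'' you cite from the introduction refers to the $\MNW$ proof of Theorem~\ref{thm:mwnw}, not to the picking-sequence argument.)
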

\begin{algorithm}
\caption{Weighted Picking Sequence Protocol}         
\label{alg_pickseq}
\begin{algorithmic}[1]
\STATE Remaining items $\widehat{O} \leftarrow O$.
\STATE Bundles $A_i \leftarrow \emptyset$, $\forall i \in N$.
\STATE $t_i \leftarrow 0$, $\forall i \in N$. \textbf{/*number of times each agent has picked so far*/}\\
\WHILE{$\widehat{O} \neq \emptyset$}
\STATE $i^* \leftarrow \arg\min_{i \in N} \frac{t_i}{w_i}$, breaking ties lexicographically.
\STATE $o^* \leftarrow \arg\max_{o \in \widehat{O}} v_{i^*}(o)$, breaking ties arbitrarily.
\STATE $A_{i^*} \leftarrow A_{i^*} \cup \{o^*\}$.
\STATE $\widehat{O} \leftarrow \widehat{O} \setminus \{o^*\}$.
\STATE $t_{i^*} \leftarrow t_{i^*}+1$.
\ENDWHILE
\end{algorithmic}
\end{algorithm}

To prove Theorem~\ref{thm:pickseq}, we construct a picking sequence such that in each turn, an agent with the lowest weight-adjusted picking frequency picks the next item (Algorithm~\ref{alg_pickseq}). We claim that after the allocation of each item, for any agent $i$, every other agent is weighted envy-free towards $i$ up to the item that $i$ picked first. 

To this end, first note that choosing an agent who has had the minimum (weight-adjusted) number of picks thus far ensures that the first $n$ picks are a round-robin over all of the agents; in this phase, the allocation is obviously $\wefi$ since each agent has at most one item at any point. We will show that, after this phase, the algorithm generates a picking sequence over the agents with the following property: 
	\begin{lemma}\label{lem:pickseq}
		Consider an agent $i$ chosen by Algorithm~\ref{alg_pickseq} to pick an item at some iteration $t$, and suppose that this is not her first pick. Let $t_i$ and $t_j$ be the numbers of times agent $i$ and some other agent $j$ appear in the prefix of iteration $t$ in the sequence respectively, not including iteration $t$ itself. Then $\frac{t_j}{t_i} \ge \frac{w_j}{w_i}$.
	\end{lemma}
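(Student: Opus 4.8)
The plan is to prove Lemma~\ref{lem:pickseq} directly by tracking, for each agent, the sequence of iterations at which she picks. The key observation is that Algorithm~\ref{alg_pickseq} chooses the agent minimizing $t_i/w_i$, so whenever agent $i$ is selected at iteration $t$, she must have the (weakly) smallest weight-adjusted pick-count among all agents; in particular $t_i/w_i \le t_j/w_j$ for every other agent $j$ at that moment. The subtlety is that this inequality, taken at the single iteration $t$, is exactly $t_j/t_i \ge w_j/w_i$ only when $t_i > 0$, so we will need the hypothesis that $t$ is not $i$'s first pick to divide safely; but we should double-check the edge cases where $t_j = 0$ (another agent has not yet picked) or where ties are broken lexicographically.

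First I would set up notation: let $\tau_i(1) < \tau_i(2) < \cdots$ denote the iterations at which agent $i$ picks, so that just before iteration $\tau_i(k)$ agent $i$ has made exactly $k-1$ picks. The heart of the argument is a \emph{monotonicity} claim about the selection rule: at the moment agent $i$ is chosen for her $k$-th pick (with $k \ge 2$), her current ratio is $(k-1)/w_i$, and since she is the argmin, every other agent $j$ currently has ratio $t_j/w_j \ge (k-1)/w_i$, hence $t_j \ge (k-1) w_j / w_i$. Setting $t_i = k-1$ this rearranges immediately to $t_j/t_i \ge w_j/w_i$, which is precisely the statement. So in fact the lemma is close to a one-line consequence of the greedy selection rule — the real content is making sure the tie-breaking does not break the weak inequality, which it does not, since lexicographic tie-breaking only occurs when ratios are \emph{equal}, and equality still yields $t_j \ge (k-1)w_j/w_i$.

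The main obstacle, and the reason the lemma is stated as a stepping stone rather than the final result, is not the lemma itself but its \emph{use}: translating ``$t_j/t_i \ge w_j/w_i$ after every item'' into the $\wefi$ guarantee requires a delicate accounting argument pairing up the items picked by $i$ and $j$ round by round, in the spirit of the unweighted round-robin proof but complicated by the fact that $i$ and $j$ do not alternate one-for-one. Since the theorem statement only asks me to prove up through Lemma~\ref{lem:pickseq}, I would (i) prove the monotonicity claim above carefully, handling the $t$-is-not-first-pick hypothesis and the tie-breaking, and (ii) remark that a symmetric bound $t_i/t_j \ge w_i/w_j - $ (a correction term of order $1/\min\{w_i,w_j\}$) also holds when we instead condition on $j$'s picks, since that two-sided control over the pick-count ratio is what the subsequent $\wefi$ argument will consume. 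I would flag that the genuinely intricate algebra is deferred to the proof that Lemma~\ref{lem:pickseq} implies $\wefi$, exactly as the surrounding text warns.
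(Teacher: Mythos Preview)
Your proof is correct and essentially identical to the paper's: both observe that since $i$ is chosen at iteration $t$, the selection rule gives $t_i/w_i \le t_j/w_j$, and the hypothesis $t_i>0$ lets you rearrange to $t_j/t_i \ge w_j/w_i$. The paper's version is a two-line argument without the extra notation $\tau_i(k)$ or the discussion of tie-breaking and the symmetric bound (the latter is not needed for, nor part of, this lemma).
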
	
	
\begin{proof}
	Since agent $i$ is picked at iteration $t$, it must be that $i \in \arg\min_{k \in N} \frac{t_k}{w_k}$. This means that $\frac{t_i}{w_i} \le \frac{t_j}{w_j}$, i.e.,  $\frac{t_j}{t_i} \ge \frac{w_j}{w_i}$ since $t_i > 0$.
\end{proof}

The property guaranteed by Lemma~\ref{lem:pickseq} is sufficient to ensure that the latest picker does not attract weighted envy up to more than one item towards herself after her latest pick:

\begin{lemma}\label{lem:each_pick_wef1}
	 Suppose that, for every iteration $t$ in which agent $i$ picks an item after her first pick, the numbers of times that agent $i$ and some other agent $j$ appear in the prefix of the iteration in the sequence, not including iteration $t$ itself---call them $t_i$ and $t_j$ respectively---satisfy the relation $\frac{t_j}{t_i} \ge \frac{w_j}{w_i}$. Then, in the partial allocation up to and including $i$'s latest pick, agent $j$ is weighted envy-free towards $i$ up to the first item $i$ picked.
\end{lemma}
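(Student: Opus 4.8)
The plan is to turn the one-to-one "pairing" argument behind unweighted round-robin into a quantitative one, accounting for the fact that $i$ and $j$ now pick at different frequencies. Fix the agents $i,j$ and the iteration $t$ of $i$'s latest pick. Write $i$'s picks in chronological order as $o^i_1,\dots,o^i_{p+1}$, where $p=t_i\ge 1$ and $o^i_{p+1}$ is the item picked at iteration $t$; write $j$'s picks so far as $o^j_1,\dots,o^j_q$, where $q=t_j$. Applying the hypothesis at iteration $t$ gives $q\,w_i\ge p\,w_j$, equivalently $p/w_i\le q/w_j$, and in particular $q\ge 1$, so $A_j\neq\emptyset$. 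Since $v_j$ is additive, taking $o=o^i_1$ the desired conclusion $\frac{v_j(A_j)}{w_j}\ge\frac{v_j(A_i\setminus\{o^i_1\})}{w_i}$ is exactly
\[
\frac{1}{w_j}\sum_{s=1}^{q} v_j(o^j_s)\ \ge\ \frac{1}{w_i}\sum_{k=1}^{p} v_j(o^i_{k+1}).
\]

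The first key step is an "availability" inequality: for every $k\in[p]$ and every $s$ with $1\le s\le\ceil{w_j k/w_i}$, we have $v_j(o^j_s)\ge v_j(o^i_{k+1})$. Indeed, applying the hypothesis at the iteration of $i$'s $(k+1)$-st pick shows that just before that iteration $j$ has already picked at least $w_j k/w_i$ times, hence at least $\ceil{w_j k/w_i}$ times; so each of $j$'s first $\ceil{w_j k/w_i}$ picks occurs strictly before $i$ takes $o^i_{k+1}$, at which moment $o^i_{k+1}$ is still unclaimed. As $j$ always picks a highest-valued available item, $v_j(o^j_s)\ge v_j(o^i_{k+1})$ for every such $s$. (This also certifies $\ceil{w_j k/w_i}\le q$, so every $o^j_s$ referred to indeed exists.)

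It remains to aggregate these pointwise comparisons with the correct weights, which I would do via a "staircase" device. Define on $(0,\infty)$ the step functions $X(\theta)=v_j(o^i_{k+1})$ for $\theta\in((k-1)/w_i,\,k/w_i]$ with $k=1,\dots,p$, and $Y(\theta)=v_j(o^j_s)$ for $\theta\in((s-1)/w_j,\,s/w_j]$ with $s=1,\dots,q$, so that the right- and left-hand sides of the displayed inequality equal $\int_0^{p/w_i} X$ and $\int_0^{q/w_j} Y$ respectively. Since $Y\ge 0$ and $p/w_i\le q/w_j$, it suffices to prove $X(\theta)\le Y(\theta)$ for every $\theta\in(0,p/w_i]$. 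For such $\theta$ put $k=\ceil{w_i\theta}\in[1,p]$ and $s=\ceil{w_j\theta}\ge 1$; from $\theta\le k/w_i$ we get $s=\ceil{w_j\theta}\le\ceil{w_j k/w_i}$, so the availability inequality applies and gives $Y(\theta)=v_j(o^j_s)\ge v_j(o^i_{k+1})=X(\theta)$.

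I expect the only genuine obstacle to be this last aggregation step: the weighted picking sequence pairs up $i$'s and $j$'s picks at incommensurate rates, so no naive one-to-one matching works, and one must check that the $1/w_i$- and $1/w_j$-scaled sums line up correctly. The staircase reformulation makes this bookkeeping transparent; alternatively one can argue directly that each index $s$ is "charged" by at most about $w_i/w_j$ of the indices $k$, by counting integers in intervals of the form $((w_i s-w_i)/w_j,\,w_i s/w_j]$ and then summing by parts, but this route is considerably messier. Everything else—additivity, the availability inequality, and the reduction $q\,w_i\ge p\,w_j$—falls out immediately from the hypothesis and the structure of Algorithm~\ref{alg_pickseq}.
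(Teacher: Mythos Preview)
Your proof is correct and takes a genuinely different route from the paper's. Both arguments rest on the same ``availability inequality'' (your claim that $v_j(o^j_s)\ge v_j(o^i_{k+1})$ whenever $s\le\lceil w_jk/w_i\rceil$ is exactly the paper's inequality $\alpha^x_y\ge\max\{\beta_x,\dots,\beta_{t_i}\}$), but the aggregation step is handled quite differently. The paper groups $j$'s picks into phases between consecutive picks of $i$ and proves by induction on the phase index $r$ a strengthened inequality carrying a surplus term $(\sum_{x\le r}\tau_x-r\gamma)\max\{\beta_r,\dots,\beta_{t_i}\}$, which tracks the excess ``water'' $j$ has accumulated for pouring into $i$'s later buckets; this is elementary but intricate. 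Your staircase reformulation replaces that induction with a single pointwise comparison $Y(\theta)\ge X(\theta)$ on $(0,p/w_i]$ followed by integration, which is cleaner and makes the role of the hypothesis $t_j/t_i\ge w_j/w_i$ (it aligns the two time axes so that $j$'s $s$-th step always covers $i$'s $k$-th step with $s\le\lceil w_jk/w_i\rceil$) completely transparent. The paper's approach has the minor advantage of using only finite sums, but your argument is shorter, avoids the auxiliary surplus term entirely, and would generalize more readily if one wanted to vary the weight ratio along the sequence.
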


\begin{figure}[!ht]
\centering
\begin{tikzpicture}[scale=0.85]
\draw [fill = gray!50] (3,2.4) rectangle (4,3.9);
\draw (4.5,2.4) rectangle (5.5,3.4);
\draw [fill = gray!50] (6,2.4) rectangle (7,3.9);
\draw (7.5,2.4) rectangle (8.5,3.4);
\draw (9,2.4) rectangle (10,3.4);
\node at (2,2) {$i$};
\node at (3.5,2) {$j$};
\node at (5,2) {$i$};
\node at (6.5,2) {$j$};
\node at (8,2) {$i$};
\node at (9.5,2) {$i$};
\node at (3.5,3.15) {$1$};
\node at (5,2.9) {$\frac{2}{3}$};
\node at (6.5,3.15) {$1$};
\node at (8,2.9) {$\frac{2}{3}$};
\node at (9.5,2.9) {$\frac{2}{3}$};
\end{tikzpicture}
\caption{Illustration of the intuition behind the proof of Lemma~\ref{lem:each_pick_wef1}. Here, $i<j$, $w_i = 3$, and $w_j = 2$. The rectangles represent the agents' buckets, and the numbers therein correspond to their capacities. Note that agent~$i$ does not receive a bucket in her first pick. Agent~$j$'s buckets are filled, while those of agent~$i$ are empty.}
\label{fig:water}
\end{figure}
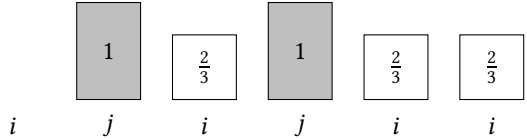

Before we prove Lemma~\ref{lem:each_pick_wef1}, we first provide a high-level intuition.
Recall the argument for the unweighted case at the beginning of Section~\ref{sec:picking}.
One way to visualize this argument is that when we consider envy from agent~$j$ towards agent~$i$, every time agent~$j$ picks an item, we give her a bucket with $1$ unit of water, while every time agent~$i$ picks an item from the second time onwards, we give her an empty bucket of capacity $1$.
Agent~$j$ is allowed to pour water from any of her buckets into any of $i$'s buckets that comes later in the sequence.
Since $j$ values an item that she picks at least as much as any item that $i$ picks in a later turn, in order to establish $\efi$, it suffices to show that $j$ can fill up all of $i$'s buckets using such operations.
A similar idea can be used in the weighted setting, except that in order to account for the weights, every time agent~$i$ picks after the first time, we give her an empty bucket of capacity $w_j/w_i$ units (see Figure~\ref{fig:water} for an example when $w_i > w_j$).
Note in particular that this bucket setup is entirely independent of the agents' valuations for the items.
However, unlike in the unweighted setting, where agent~$j$ can accomplish the task by simply pouring all the water from each of her buckets into $i$'s following bucket, in the weighted case, $j$ may need to pour water from a bucket into several of $i$'s buckets, even those coming after $j$'s subsequent bucket. 
The proof below formalizes this intuition.

\begin{proof}[Proof of Lemma~\ref{lem:each_pick_wef1}]
	Let $\gamma := \frac{w_j}{w_i}$. Consider any iteration $t$ in which agent $i$ is chosen after her first pick. Let agent $j$'s values for the items allocated to agent $i$ in the latter's second, third, $\dots$, $(t_i+1)^\mathrm{st}$ picks (the last one occurring at the iteration $t$ under consideration) be $\beta_1$, $\beta_2$, $\dots$, $\beta_{t_i}$ respectively. If $o^*$ is the first item picked by agent $i$ and $A^t$ the partial allocation up to and including iteration $t$, then clearly $v_j(A_i^t \setminus \{o^*\}) = \sum_{x=1}^{t_i} \beta_x$. Let the number of times agent $j$ appears in the prefix of agent $i$'s second pick be $\tau_1$; that between agent $i$'s second and third picks be $\tau_2$; $\dots$; that between agent $i$'s $t_i^\mathrm{th}$ and $(t_i+1)^\mathrm{st}$ picks be $\tau_{t_i}$. Let agent $j$'s values for the items she herself picked during phase $x\in[t_i]$ be $\alpha^x_1, \alpha^x_2, \dots, \alpha^x_{\tau_x}$ respectively, where the phases are defined as in the previous sentence. Then, $v_j(A^t_j)= \sum_{x=1}^{t_i} \sum_{y=1}^{\tau_x} \alpha^x_y$.
	Now, for $r\in[t_i]$, since $\sum_{x=1}^{r} \tau_x$ and $r$ are the numbers of times agents $j$ and $i$ appear in the prefix of the latter's $(r+1)^\mathrm{st}$ pick respectively, the condition of the lemma dictates that
	\begin{align}
	\sum_{x=1}^{r} \tau_x \ge r \gamma \quad \forall r \in [t_i].\label{spill_and_fresh}
	\end{align}
	Note that $\tau_1 \ge \gamma >0$; however, $\tau_x$ can be zero for $x \in \{2,3,\dots,t_i\}$---this corresponds to the scenario where agent $i$ picked more than once without agent $j$ picking in between.
	Moreover, every time agent $j$ was chosen, she picked one of the items she values the most among those available, including the items picked by agent $i$ later. Hence, if $\tau_x > 0$ for some $x\in [t_i]$, then
	\begin{align}
	\alpha^x_y &\ge \max\{\beta_x, \beta_{x+1}, \dots, \beta_{t_i}\} \quad \forall y \in [\tau_x]  \notag\\ 
	\Rightarrow \quad \sum_{y=1}^{\tau_x} \alpha^x_y &\ge \tau_x \max\{\beta_x, \beta_{x+1}, \dots, \beta_{t_i}\}.  \label{alphaxy}
	\end{align}
	Note that Inequality~\eqref{alphaxy} holds trivially if $\tau_x=0$ since both sides are zero; hence it holds for every $x \in [t_i]$.
	
	We claim that for each $r\in[t_i]$,
	\begin{align*}
	\sum_{x=1}^r \sum_{y=1}^{\tau_x} \alpha^x_y \ge \gamma\sum_{x=1}^r\beta_x + \left(\sum_{x=1}^r\tau_x - r\gamma\right)\max\{\beta_r, \beta_{r+1},\dots,\beta_{t_i}\}.
	\end{align*}
	To prove the claim, we proceed by induction on $r$. For the base case $r=1$, we have from Inequality~\eqref{alphaxy} that
	\begin{align*}
	\sum_{y=1}^{\tau_1} \alpha^1_y 
	&\ge \tau_1\max\{\beta_1,\beta_2,\dots,\beta_{t_i}\} \\
	&\ge \gamma\beta_1 + (\tau_1-\gamma)\max\{\beta_1,\beta_2,\dots,\beta_{t_i}\}.
	\end{align*}
	For the inductive step, assume that the claim holds for $r-1$; we will prove it for $r$.
	We have
	\begin{align*}
	\qquad\sum_{x=1}^r\sum_{y=1}^{\tau_x} \alpha^x_y &= \sum_{x=1}^{r-1}\sum_{y=1}^{\tau_x} \alpha^x_y + \sum_{y=1}^{\tau_r} \alpha^r_y \\
	&\ge \gamma\sum_{x=1}^{r-1}\beta_x + \left(\sum_{x=1}^{r-1}\tau_x - (r-1)\gamma\right)\max\{\beta_{r-1}, \beta_r,\dots,\beta_{t_i}\}  + \sum_{y=1}^{\tau_r} \alpha^r_y \\
	&\ge \gamma\sum_{x=1}^{r-1}\beta_x + \left(\sum_{x=1}^{r-1}\tau_x - (r-1)\gamma\right)\max\{\beta_{r-1}, \beta_r,\dots,\beta_{t_i}\}  + \tau_r\max\{\beta_r,\beta_{r+1},\dots,\beta_{t_i}\} \\
	&\ge \gamma\sum_{x=1}^{r-1}\beta_x + \left(\sum_{x=1}^{r-1}\tau_x - (r-1)\gamma\right)\max\{\beta_r, \beta_{r+1},\dots,\beta_{t_i}\}  + \tau_r\max\{\beta_r,\beta_{r+1},\dots,\beta_{t_i}\} \\
	&= \gamma\sum_{x=1}^{r-1}\beta_x + \left(\sum_{x=1}^{r}\tau_x - (r-1)\gamma\right)\max\{\beta_r, \beta_{r+1},\dots,\beta_{t_i}\}\\	
	&= \gamma\sum_{x=1}^{r-1}\beta_x + \gamma\max\{\beta_r,\beta_{r+1},\dots,\beta_{t_i}\} + \left(\sum_{x=1}^r\tau_x - r\gamma\right)\max\{\beta_r, \beta_{r+1},\dots,\beta_{t_i}\} \\
	&\ge \gamma\sum_{x=1}^{r-1}\beta_x + \gamma \beta_r + \left(\sum_{x=1}^r\tau_x - r\gamma\right)\max\{\beta_r, \beta_{r+1},\dots,\beta_{t_i}\} \\
	&= \gamma\sum_{x=1}^r\beta_x + \left(\sum_{x=1}^r\tau_x - r\gamma\right)\max\{\beta_r, \beta_{r+1},\dots,\beta_{t_i}\},
	\end{align*}	
	where the first inequality follows directly from the inductive hypothesis, the second from Inequality~\eqref{alphaxy}, and the third from the following facts: 
	\begin{align*}
		&\sum_{x=1}^{r-1}\tau_x - (r-1)\gamma \ge 0 && \text{due to Inequality~\eqref{spill_and_fresh}},\\
		\text{and} \quad  &\max\{\beta_{r-1}, \beta_r,\dots,\beta_{t_i}\} \ge \max\{\beta_r,\beta_{r+1},\dots,\beta_{t_i}\}.&&
	\end{align*}
	  This completes the induction and establishes the claim.
	
	Now, taking $r=t_i$ in the claim, we get
	\begin{align*}
	\sum_{x=1}^{t_i} \sum_{y=1}^{\tau_x} \alpha^x_y 
	&\ge \gamma\sum_{x=1}^{t_i}\beta_x + \left(\sum_{x=1}^{t_i}\tau_x-t_i\gamma\right)\beta_{t_i} \\
	&\ge \gamma\sum_{x=1}^{t_i}\beta_x,
	\end{align*}
	where we use Inequality~\eqref{spill_and_fresh} again for the second inequality.
	This implies that $v_j(A^t_j) \ge \frac{w_j}{w_i} \cdot v_j(A_i^t \setminus \{o^*\})$, i.e., agent $j$ is weighted envy-free towards agent $i$ up to one item (specifically, the first item picked by agent $i$), concluding the proof of the lemma and therefore the proof of correctness.
\end{proof}

With Lemmas~\ref{lem:pickseq} and \ref{lem:each_pick_wef1} in hand, we are now ready to prove Theorem~\ref{thm:pickseq}.

\begin{proof}[Proof of Theorem~\ref{thm:pickseq}]
It is easy to see that directly after an agent picks an item, her envy towards other agents cannot get any worse than before. Since the partial allocation after the initial round-robin phase is $\wefi$ and every agent is weighted envy-free up to one item towards every subsequent picker due to Lemmas~\ref{lem:pickseq} and~\ref{lem:each_pick_wef1}, the allocation is $\wefi$ at every iteration, and in particular at the end of the algorithm. This establishes the correctness of the algorithm.

For the time complexity, note that there are $O(m)$ iterations of the \textbf{while} loop. In each iteration, determining the next picker takes $O(n)$ time, while letting the picker pick her favorite item takes $O(m)$ time. Since we may assume that $m > n$ (otherwise it suffices to allocate at most one item to every agent), the algorithm runs in time $O(m^2)$.
\end{proof}

If $w_i$ equals a positive constant $w$ for every $i \in N$, then Algorithm~\ref{alg_pickseq} degenerates into the traditional round-robin procedure which is guaranteed to return an $\efi$ allocation for additive valuations, but may not be Pareto optimal; as such, Algorithm~\ref{alg_pickseq} may not produce a $\po$ allocation either. This is easily seen in the following example: $n=m=2$, $w_1=w_2=1$, $v_1(o_1)=v_1(o_2)=0.5$, $v_2(o_1)=0.8$, and $v_2(o_2)=0.2$. With lexicographic tie-breaking for both agents and items, our algorithm will give us $A_1=\{o_1\}$ and $A_2=\{o_2\}$, which is Pareto dominated by $A'_1=\{o_2\}$ and $A'_2=\{o_1\}$. 
On the other hand, if each agent has the same value for all items, the algorithm is equivalent to an apportionment method called \emph{Adams' method} \citep{BalinskiYo01}.\footnote{We are grateful to Ulrike Schmidt-Kraepelin for pointing out this connection.}
In the apportionment setting, agents correspond to states of a country, and items to seats in a parliament.
Since all seats are considered identical, the states can simply ``pick'' any seat from the remaining seats in apportionment, whereas for item allocation it is important that each agent picks her favorite item in her turn.

\section{$\wefi$ and $\po$ allocations}\label{sec:wef1_po}

As the picking sequence that we construct in Section~\ref{sec:picking} yields an allocation that is $\wefi$ but may fail Pareto optimality, our next question is whether $\wefi$  can be achieved in conjunction with the economic efficiency notion. 
We show that this is indeed possible, by generalizing the classic adjusted winner procedure for two agents and an algorithm of \citet{barman2018finding} for higher numbers of agents.

\subsection{Two Agents}
\label{sec:wef1_po_twoagents}

When agents have equal entitlements, it is known that fairness and efficiency are compatible: \citet{caragiannis2016unreasonable} showed that an allocation maximizing the Nash social welfare satisfies both $\po$ and $\efi$. 
Unfortunately, this approach is not applicable to our setting---we show that the $\MNW$ allocation may fail to be $\wefi$. 
In fact, we prove a much stronger negative result: for any fixed~$c$, the allocation may fail to be $\wefc$ even for two agents with identical valuations.

\begin{proposition}
Let $c$ be an arbitrary positive integer. There exists a problem instance with two agents having identical additive valuations for which any $\MNW$ allocation is not $\wefc$. 	
\end{proposition}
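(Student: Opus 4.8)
The plan is to construct an instance with two agents whose weights are highly unequal, with identical additive valuations and many goods of equal value, and to show that every maximum-weighted-Nash-welfare allocation is forced to give agent~$1$ all but one good, so that agent~$2$'s weighted envy towards agent~$1$ survives the deletion of any $c$ goods from agent~$1$'s bundle.

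Concretely, for a given target $c$ I would take $n = 2$, weights $w_1 = W$ and $w_2 = 1$ with $W$ a (large) integer to be fixed, and $m = W + c + 2$ goods, each worth $1$ to both agents. The heart of the argument is to pin down the $\MNW$ allocation. Since both agents can receive a good, the maximizing agent set is all of $N$, and because valuations are identical and additive the weighted Nash welfare of an allocation depends only on $\ell := |A_1|$ and equals $\ell^{W}(m-\ell)$; an optimum must give each agent at least one good, so we only need to maximize $\phi(\ell) := W\ln\ell + \ln(m-\ell)$ over $\ell \in \{1,\dots,m-1\}$. This $\phi$ is strictly concave with real maximizer $\ell^{\ast} = \frac{W}{W+1}\,m = W + c + 1 - \frac{c+1}{W+1}$, so $\ell^{\ast} \in (W+c,\ W+c+1)$ whenever $W > c$; hence the integer maximizer is $W+c$ or $W+c+1$. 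Comparing the two, $\phi(W+c+1) - \phi(W+c) = W\ln\bigl(1 + \frac{1}{W+c}\bigr) - \ln 2$, which tends to $1 - \ln 2 > 0$ as $W \to \infty$, so for every sufficiently large $W$ the unique maximizer is $\ell = W+c+1$. Therefore every $\MNW$ allocation assigns $W+c+1$ goods to agent~$1$ and exactly one good to agent~$2$.

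With this allocation in hand the failure of $\wefc$ is a one-line check: $\frac{v_2(A_2)}{w_2} = 1$, while for every $S_c \subseteq A_1$ with $|S_c| \le c$ we have $\frac{v_2(A_1 \setminus S_c)}{w_1} = \frac{(W+c+1) - |S_c|}{W} \ge \frac{W+1}{W} > 1$. Hence deleting $c$ goods from $A_1$ never removes agent~$2$'s weighted envy towards agent~$1$, so the allocation is not $\wefc$.

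I expect the only delicate point to be the computation of the $\MNW$ allocation---in particular the fact that the discrete optimum rounds \emph{up} to $W+c+1$ rather than down to $W+c$, which is exactly why $W$ must be large (for small $W$ the sign of $\phi(W+c+1)-\phi(W+c)$ can flip, and then agent~$2$ receives two or more goods and the allocation does satisfy $\wefc$). All remaining steps are routine, and a concrete threshold such as $W \ge 3c+2$ can be checked to suffice.
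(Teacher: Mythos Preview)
Your proposal is correct and is essentially the paper's own construction with the roles of the two agents swapped: the paper takes $w_1=1$, $w_2=k$ large, $m=k+c+2$ unit-valued goods, and shows (via the same key inequality $k\ln(1+\tfrac{1}{k+c})>\ln 2$ for large $k$) that every $\MNW$ allocation gives one good to the light agent and the rest to the heavy agent, leaving the light agent with weighted envy that survives removal of $c$ goods. The only cosmetic difference is that the paper verifies the $\MNW$ optimum by a direct chain of inequalities showing monotonicity in the split, whereas you use concavity of $\phi$ to reduce to two adjacent integers and then compare them; both routes hinge on exactly the same limiting computation.
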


\begin{proof}
Suppose that $n=2$, and the weights are $w_1=1$ and $w_2=k$ for some positive integer~$k$ such that $\left(1+\frac{1}{k+c}\right)^k > 2$; such an integer $k$ exists because $\lim_{k\rightarrow\infty}\left(1+\frac{1}{k+c}\right)^k = e$.
Let $m=k+c+2$, so $O=\{o_1,o_2,\ldots,o_{k+c+2}\}$.
The agents have identical, additive valuations defined by $v_i(o)=1$ for all $i \in N$ and $o \in O$. 
Since $\left(1+\frac{1}{k+c}\right)^k > 2$, we have $1\cdot(k+c+1)^k > 2\cdot(k+c)^k$.
Moreover, for $2\leq i\leq k+c$, we have $$\left(1+\frac{1}{k+c+1-i}\right)^k > \left(1+\frac{1}{k+c}\right)^k > 2 > \frac{i+1}{i},$$ and so $i(k+c+2-i)^k > (i+1)(k+c+1-i)^k$.
This means that any $\MNW$ allocation $A$ must give one item to agent $1$, say $A_1=\{o_1\}$, and the remaining items to agent $2$, i.e., $A_2=\{o_2,\ldots,o_{k+c+2}\}$. However, even if we remove a set $S_c$ of at most $c$ items from $A_2$, we would still have $v_1(A_2\setminus S_c)/w_2\geq 1+1/k>1=v_1(A_1)/w_1$, so the allocation is not $\wefc$. 
\end{proof}

Given that a $\MNW$ allocation may not be $\wefi$ in our setting, a natural question is whether there is an alternative approach for guaranteeing the existence of a $\po$ and $\wefi$ allocation. 
We first show that this is indeed the case for two agents: we establish that such an allocation exists and can be computed in polynomial time for two agents, by adapting the classic adjusted winner procedure \cite{brams1996fair} to the weighted setting.
\begin{theorem}\label{thm:2agents}
	For two agents with additive valuations and arbitrary positive real weights, a complete $\wefi$ and $\po$ allocation always exists and can be computed in polynomial time.
\end{theorem}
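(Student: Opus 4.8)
The plan is to discretize the classical adjusted winner procedure. Sort the items $o_1,\dots,o_m$ in non-increasing order of the ratio $v_1(o)/v_2(o)$; to make every ratio positive and finite, first hand every item that one of the agents values at $0$ to the other agent (breaking remaining ties arbitrarily), and run the construction below on the remaining items only. For each $k\in\{0,1,\dots,m\}$ let $A^k$ be the \emph{threshold allocation} that gives the prefix $\{o_1,\dots,o_k\}$ to agent~$1$ and the suffix $\{o_{k+1},\dots,o_m\}$ to agent~$2$ (together with the pre-assigned items). A routine exchange argument based on the ratio ordering shows that every such $A^k$ is Pareto optimal, so it remains only to find a $k$ for which $A^k$ is also $\wefi$.

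To this end, observe that $\frac{v_1(A^k_1)}{w_1}-\frac{v_1(A^k_2)}{w_2}$ is non-decreasing in $k$ --- raising $k$ by one transfers a single item from agent~$2$'s bundle to agent~$1$'s --- and is non-negative at $k=m$. Let $k^{\star}$ be the smallest index at which this quantity is non-negative; if $k^{\star}=0$ one checks immediately that $A^0$ is weighted envy-free, so assume $k^{\star}\ge 1$. By the choice of $k^{\star}$, agent~$1$ does not weighted-envy agent~$2$ under $A^{k^{\star}}$, whereas agent~$1$ strictly weighted-envies agent~$2$ under $A^{k^{\star}-1}$. The claim I would prove is that at least one of $A^{k^{\star}}$ and $A^{k^{\star}-1}$ is $\wefi$, and that in both cases any residual envy is removed by deleting the single \emph{pivot} item $o_{k^{\star}}$.

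The proof is a case distinction on whether, under $A^{k^{\star}}$, removing $o_{k^{\star}}$ from agent~$1$'s bundle kills agent~$2$'s (possible) weighted envy towards agent~$1$, i.e.\ on whether $\frac{v_2(A^{k^{\star}}_2)}{w_2}\ge\frac{v_2(A^{k^{\star}}_1\setminus\{o_{k^{\star}}\})}{w_1}$. If it does, then $A^{k^{\star}}$ is $\wefi$: agent~$1$ is non-envious by construction, and agent~$2$ is weighted envy-free up to $o_{k^{\star}}$. If it does not, I would show $A^{k^{\star}-1}$ is $\wefi$ by combining (i)~the inequality defining $k^{\star}$ at index $k^{\star}-1$, (ii)~the inequality defining $k^{\star}$ at index $k^{\star}$, and (iii)~the ratio ordering: writing $\rho=v_1(o_{k^{\star}})/v_2(o_{k^{\star}})$, every item $o$ in agent~$1$'s prefix satisfies $v_1(o)\ge\rho\,v_2(o)$ and every item $o$ in agent~$2$'s suffix satisfies $v_1(o)\le\rho\,v_2(o)$. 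Multiplying the agent~$1$ inequalities by $1/\rho$ converts them into agent~$2$ inequalities; carrying this through, the failure of the displayed inequality together with agent~$1$'s envy under $A^{k^{\star}-1}$ yields simultaneously that agent~$2$ does \emph{not} weighted-envy agent~$1$ under $A^{k^{\star}-1}$ and that agent~$1$'s envy towards agent~$2$ under $A^{k^{\star}-1}$ vanishes once $o_{k^{\star}}$ is removed from agent~$2$'s bundle. Since sorting costs $O(m\log m)$ and locating $k^{\star}$ and checking the single inequality cost $O(m)$ with prefix sums, the whole procedure runs in polynomial time.

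I expect the main obstacle to be exactly the algebraic case analysis of the previous paragraph: one must line up agent~$1$'s ``satisfaction'' inequalities --- which are what pin down $k^{\star}$ --- against agent~$2$'s $\wefi$ requirement, and the only bridge between the two is the ratio ordering, so the manipulations are somewhat delicate. A secondary nuisance is making the Pareto-optimality claim watertight in the presence of items that one agent values at zero, which is the reason such items are assigned up front.
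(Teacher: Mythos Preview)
Your approach is correct and close in spirit to the paper's, but the execution differs in one meaningful way. Both arguments work with the same family of threshold allocations $A^k$ sorted by the ratio $v_1(o)/v_2(o)$, and both rely on the ratio ordering to translate inequalities in agent~$1$'s valuations into inequalities in agent~$2$'s valuations (and vice versa); Pareto optimality is handled identically. The difference is in the stopping rule. The paper stops at the \emph{first} $d$ at which agent~$1$ is already $\wefi$ towards agent~$2$ (specifically, $\frac{v_1(\{o_1,\dots,o_d\})}{w_1}\ge\frac{v_1(\{o_{d+2},\dots,o_m\})}{w_2}$). Minimality of this $d$ then yields the single inequality $\frac{v_1(A_1\setminus\{o_d\})}{w_1}<\frac{v_1(A_2)}{w_2}$, and one application of the ratio-ordering trick turns this directly into agent~$2$'s $\wefi$ condition. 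Thus the paper outputs a single allocation with no case split. You instead stop at the first $k^\star$ at which agent~$1$ is \emph{fully} weighted envy-free, which forces the two-case analysis you describe (output $A^{k^\star}$ or $A^{k^\star-1}$). Your Case~2 goes through exactly as you outline: the failure of agent~$2$'s $\wefi$ at $A^{k^\star}$, pushed through the ratio ordering, gives agent~$1$'s $\wefi$ at $A^{k^\star-1}$, while agent~$1$'s strict envy at $A^{k^\star-1}$, pushed the other way, gives agent~$2$'s non-envy there. So your argument is sound; the paper's stopping rule simply buys you a cleaner proof with no branching and a deterministic output, at the cost of a slightly less obvious choice of threshold.
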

\begin{algorithm}
\caption{Weighted Adjusted Winner Procedure}         
\label{alg_waw}                          
\begin{algorithmic}[1] 
\REQUIRE $\frac{v_1(o_1)}{v_2(o_1)} \ge \frac{v_1(o_2)}{v_2(o_2)} \ge \dots \ge \frac{v_1(o_m)}{v_2(o_m)}$ w.l.o.g.\\
\STATE $d \leftarrow 1$.
\WHILE{$\frac{1}{w_1}\sum_{r=1}^d v_1(o_r) < \frac{1}{w_2}\sum_{r=d+2}^m v_1(o_r)$}
\STATE $d \leftarrow d+1$.
\ENDWHILE
\STATE $A_1 \leftarrow \{o_1, \dots,o_d\}$.
\STATE $A_2 \leftarrow \{o_{d+1}, \dots,o_m\}$.
\end{algorithmic}
\end{algorithm}
\begin{proof}
	Assume first that both agents have positive values for all items, i.e., $v_1(o)>0$ and $v_2(o)>0$ for every $o \in O$; we will deal with the case where this does not hold later.
	We claim that the Weighted Adjusted Winner procedure as delineated in Algorithm~\ref{alg_waw} produces an allocation satisfying the theorem statement.
	
	First note that the left-hand side of the \textbf{while} loop condition is strictly increasing in $d$ and trivially exceeds the right-hand side for $d=m-1$;\footnote{For $d \ge m-1$, we set the right-hand side of the \textbf{while} loop condition to zero.} hence, there always exists $d \in [m-1]$ which satisfies the stop criterion and the loop terminates at the smallest such $d$. 
	
	\paragraph{$\wefi$ property:} If the \textbf{while} loop ends with $d=d^*$, let us denote $o_{d^*}$ by $o^*$. Then, by construction, 
	 $\frac{v_1(A_1)}{w_1} \ge \frac{v_1(A_2)-v_1(o_{d^*+1})}{w_2}$, which implies that agent $1$ is weighted envy-free towards agent $2$ up to one item (specifically, item $o_{d^*+1}$). 
	 
	 On the other hand, by construction, we also get that
	\begin{align}
	& \frac{v_1(A_1)-v_1(o^*)}{w_1} < \frac{v_1(A_2)}{w_2}.
	\label{ineq:waw}
	\end{align}
	Moreover, due to the ordering of the ratios, we have
	\begin{align*}
	&\frac{\sum_{r=1}^{d^*} v_1(o_r)}{\sum_{r=1}^{d^*} v_2(o_r)} \ge \frac{v_1(o^*)}{v_2(o^*)} \ge \frac{\sum_{r=d^*+1}^{m} v_1(o_r)}{\sum_{r=d^*+1}^{m} v_2(o_r)}\\
	\Rightarrow \quad & \frac{v_1(A_1)}{v_2(A_1)} \ge \frac{v_1(o^*)}{v_2(o^*)} \ge \frac{v_1(A_2)}{v_2(A_2)}\\
	\Rightarrow \quad & v_1(A_1) \ge \frac{v_1(o^*)}{v_2(o^*)} \cdot v_2(A_1);\\
	& v_1(A_2) \le \frac{v_1(o^*)}{v_2(o^*)} \cdot v_2(A_2).
	\end{align*}
	Combining with Inequality~\eqref{ineq:waw}, dividing through by $\frac{v_1(o^*)}{v_2(o^*)}$, and simplifying, we get
	\begin{align*}
	&\frac{v_2(A_1)-v_2(o^*)}{w_1} < \frac{v_2(A_2)}{w_2}.
	\end{align*}
	Thus, agent $2$ is also weighted envy-free towards agent $1$ up to one item (specifically, item $o^*$). 
	
	\paragraph{Pareto optimality: } First note that  no incomplete allocation can be Pareto optimal since the realized valuation of either agent could be strictly improved by augmenting her bundle with a withheld item (under our assumption that each agent values each item positively). 	
	Since the allocation $A$ produced by Algorithm~\ref{alg_waw} is complete, it suffices to show that it cannot be Pareto dominated by an alternative complete allocation $A'$. Any such complete allocation can be thought of as being generated by transferring items between $A_1$ and $A_2$. 
	
	Suppose that $v_1(A'_1) > v_1(A_1)$ for some complete allocation $A'$ different from $A$. Since $A_1 \cup A_2 = A'_1 \cup A'_2 = O$, this inequality implies that
	\begin{align}
		v_1(A'_1 \cap A_1) + v_1(A'_1 \cap A_2) &> v_1(A'_1 \cap A_1) + v_1(A'_2 \cap A_1)\notag\\
		\Rightarrow \quad  v_1(A'_1 \cap A_2) &>  v_1(A'_2 \cap A_1).\label{ineq:po}
	\end{align}
	If $A'_2 \cap A_1 = \emptyset$, then $A'_2 \subset A_2$ (since $A'_2 \neq A_2$). Hence, $v_2(A'_2)<v_2(A_2)$ so that $A'$ cannot Pareto dominate $A$. As such, we will assume that $A'_2 \cap A_1 \neq \emptyset$. Then, due to the ratio ordering and how $A_1, A_2$ are constructed, we must have
	\begin{align*}
	\frac{v_1(o)}{v_2(o)} &\ge \frac{v_1(o^*)}{v_2(o^*)} \quad \forall o \in A'_2 \cap A_1\\
	\Rightarrow \quad \frac{v_1(A'_2 \cap A_1)}{v_2(A'_2 \cap A_1)} = \frac{\sum_{o \in A'_2 \cap A_1} v_1(o)}{\sum_{o \in A'_2 \cap A_1} v_2(o)} &\ge \frac{v_1(o^*)}{v_2(o^*)}.
	\end{align*}
	By similar reasoning, it holds that
	\[\frac{v_1(o^*)}{v_2(o^*)} \ge \frac{v_1(A'_1 \cap A_2)}{v_2(A'_1 \cap A_2)}. \]
	Combining with Inequality~(\ref{ineq:po}), we get
	\begin{align*}
	&\frac{v_1(A'_2 \cap A_1)}{v_2(A'_2 \cap A_1)} \ge \frac{v_1(A'_1 \cap A_2)}{v_2(A'_1 \cap A_2)} > \frac{v_1(A'_2 \cap A_1)}{v_2(A'_1 \cap A_2)}\\
	\Rightarrow \quad &v_2(A'_1 \cap A_2) > v_2(A'_2 \cap A_1), \qquad \text{since $v_1(A'_2 \cap A_1) > 0$,}\\
	\Rightarrow \quad &v_2(A'_2 \cap A_2) + v_2(A'_1 \cap A_2) > v_2(A'_2 \cap A_2) + v_2(A'_2 \cap A_1)\\
	\Rightarrow \quad &v_2(A_2) > v_2(A'_2),
	\end{align*}
	which contradicts the necessary condition for $A'$ to Pareto dominate $A$: $v_2(A'_2) \ge v_2(A_2)$. Assuming $v_2(A'_2) > v_2(A_2)$ leads us to an analogous conclusion. Hence, $A$ must be Pareto optimal.
	\paragraph{Complexity: } The $m$ ratios can be sorted in $O(m \log m)$ time. Each new \textbf{while} loop condition can be checked in $O(1)$ time, so the total time taken by the \textbf{while} loop is $O(m)$. Hence, the algorithm runs in $O(m\log m)$ time. \newline
	
Let us now address the scenario where there are items of zero value to an agent. Of course, items valued at zero by both agents can be safely ignored. 
We will initialize the bundle $A_i$ with items valued positively by agent $i \in \{1,2\}$ only, i.e., $A^0_1 = \{o \in O: v_1(o)>0, v_2(o)=0\}$ and $A^0_2 = \{o \in O: v_2(o)>0, v_1(o)=0\}$. Then we run Algorithm~\ref{alg_waw} on the remaining items and use its output $(A_1,A_2)$ to augment the respective bundles. We will now show that the resulting allocation $(A^0_1 \cup A_1, A^0_2 \cup A_2)$ is $\wefi$ and $\po$.

By the argument in the previous part, there is an item $o' \in A_2 \subseteq A^0_2 \cup A_2$ such that 
\[\frac{v_1(A^0_1 \cup A_1)}{w_1} \ge \frac{v_1(A_1)}{w_1} \ge \frac{v_1(A_2)-v_1(o')}{w_2} =  \frac{v_1(A^0_2 \cup A_2)-v_1(o')}{w_2},\]
since $v_1(A^0_2)=0$. Thus, agent $1$ is weighted envy-free towards agent $2$ up to one item. An analogous argument shows that agent $2$ is also weighted envy-free towards agent $1$ up to one item.

Since a Pareto optimal allocation cannot be incomplete (because each item has a positive value to at least one agent), it suffices to show that the (complete) allocation under consideration is not Pareto dominated by any complete allocation. Again, any complete allocation can be obtained from $(A^0_1 \cup A_1, A^0_2 \cup A_2)$ by swapping items between agents. It is evident that any allocation in which an item $o \in A^0_1$ (resp., an item $o \in A^0_2$) belongs to agent $2$ (resp., agent $1$) is Pareto dominated by the allocation wherein this item is given to agent $1$ (resp., agent $2$), everything else remaining the same. Hence, it suffices to show that a Pareto improvement cannot be achieved by swapping items in $A_1 \cup A_2$ between the agents---but we already know this from the earlier part of the proof.
\end{proof}

\subsection{Any Number of Agents}\label{sec:wef1po_gen}
Having resolved the existence question of $\po$ and $\wefi$ for two agents, we now investigate whether such an allocation always exists for any number of agents, answering the question in the affirmative. 
To this end, we employ a weighted modification of the algorithm by \citet{barman2018finding}, which finds a $\po$ and $\efi$ allocation in pseudo-polynomial time for agents with additive valuations in the unweighted setting. 
Like Barman et al., we consider an artificial market where each item has a price and agents purchase a bundle of items with the highest ratio of value to price, called ``bang per buck ratio''. This allows us to measure the degree of fairness of a given allocation in terms of the prices. 

Formally, a {\em price vector} is an $m$-dimensional non-negative real vector $\bfp=(p_1,p_2,\ldots,p_m) \in \R^O_{\ge 0}$; we call $p_o$ the {\em price} of item $o \in O$, and write $p(X)=\sum_{o \in X}p_{o}$ for a set of items $X$. 
Let $A$ be an allocation and $\bfp$ be a price vector. 
For each $i\in N$, we call $p(A_i)$ the {\em spending} and $\frac{1}{w_i}p(A_i)$ the {\em weighted spending} of agent $i$. We now define a weighted version of the price envy-freeness up to one item (pEF1) notion introduced by \citet{barman2018finding}. 

\begin{definition}
Given an allocation $A$ and a price vector $\bfp$, we say that $A$ is {\em weighted price envy-free up to one item} ($\wpefi$) with respect to $\bfp$ if for any pair of agents $i,j$, either $A_j=\emptyset$ or $\frac{1}{w_i}p(A_i) \geq \frac{1}{w_j} \min_{o \in A_j}p(A_j \setminus \{o\})$. 
\end{definition}

The {\em bang per buck ratio} of item $o$ for agent $i$ is $\frac{v_i(o)}{p_o}$; we write the maximum bang per buck ratio for agent $i$ as $\alpha_i(\bfp)$. 
We refer to the items with maximum bang per buck ratio for $i$ as $i$'s {\em MBB items} and denote the set of such items by $\MBB_i(\bfp)$ for each $i \in N$. The following lemma is a straightforward adaptation of Lemma $4.1$ in \cite{barman2018finding} to our setting; it ensures that one can obtain the property of $\wefi$ by balancing among the spending of agents under the MBB condition. 

\begin{lemma}\label{lem:MBBwpEF1}
Given a complete allocation $A$ and a price vector $\bfp$, suppose that allocation $A$ satisfies $\wpefi$ with respect to $\bfp$ and agents are assigned to MBB items only, i.e., $A_i \subseteq \MBB_i(\bfp)$ for each $i \in N$. Then $A$ is $\wefi$.  
\end{lemma}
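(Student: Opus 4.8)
The plan is to fix an arbitrary pair of agents $i,j$ with $A_j \neq \emptyset$ and exhibit an item $o \in A_j$ for which $\frac{v_i(A_i)}{w_i} \ge \frac{v_i(A_j \setminus \{o\})}{w_j}$. The two hypotheses we have to work with are: (a) $\wpefi$ with respect to $\bfp$, which gives us a \emph{price} inequality $\frac{1}{w_i}p(A_i) \ge \frac{1}{w_j}\min_{o \in A_j} p(A_j \setminus \{o\}) = \frac{1}{w_j}\bigl(p(A_j) - \max_{o \in A_j} p_o\bigr)$; and (b) the MBB condition $A_k \subseteq \MBB_k(\bfp)$ for every $k$, which lets us convert between prices and agent $i$'s values. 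The key observation is that since $A_i \subseteq \MBB_i(\bfp)$, every item in $A_i$ is bought by $i$ at her maximum bang-per-buck ratio $\alpha_i(\bfp)$, so by additivity $v_i(A_i) = \alpha_i(\bfp)\, p(A_i)$; and since $\alpha_i(\bfp)$ is the \emph{maximum} bang-per-buck ratio for $i$, every item $o$ (in particular every item in $A_j$) satisfies $v_i(o) \le \alpha_i(\bfp)\, p_o$, hence $v_i(S) \le \alpha_i(\bfp)\, p(S)$ for any bundle $S$.

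With these two facts the argument is a short chain of inequalities. Choose $o^* \in \argmax_{o \in A_j} p_o$ — this is the item whose removal from $A_j$ minimizes the spending, and it is the item we will remove. Then
\begin{align*}
\frac{v_i(A_i)}{w_i}
&= \frac{\alpha_i(\bfp)\, p(A_i)}{w_i} \\
&\ge \alpha_i(\bfp) \cdot \frac{p(A_j \setminus \{o^*\})}{w_j} \\
&\ge \frac{v_i(A_j \setminus \{o^*\})}{w_j},
\end{align*}
where the first line uses $A_i \subseteq \MBB_i(\bfp)$ and additivity, the second line uses the $\wpefi$ inequality (with $p(A_j \setminus \{o^*\}) = \min_{o \in A_j} p(A_j \setminus \{o\})$) together with $\alpha_i(\bfp) \ge 0$, and the third line uses $v_i(S) \le \alpha_i(\bfp)\, p(S)$ applied to $S = A_j \setminus \{o^*\}$. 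This is exactly the $\wefi$ condition for the ordered pair $(i,j)$, and since $i,j$ were arbitrary, $A$ is $\wefi$.

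I do not expect a genuine obstacle here — the statement is explicitly flagged in the text as a straightforward adaptation of Lemma~4.1 of \citet{barman2018finding}. The only points that require a modicum of care are: (i) handling the degenerate possibility that $\alpha_i(\bfp) = 0$ (which forces $v_i(o) = 0$ for all $o$ reachable, making the inequality trivial) or that some price $p_o$ is zero (in which case $\MBB_i(\bfp)$ and the bang-per-buck ratio need the usual convention that an item of zero price and positive value has infinite ratio; one typically assumes prices are positive on allocated items, or argues these cases away), and (ii) noting that the same $o^*$ works uniformly — it is chosen from $A_j$ alone, independent of $i$, which is why the price-based notion cleanly implies the value-based one. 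Neither point disrupts the three-line computation above; they are just caveats to state at the outset of the proof.
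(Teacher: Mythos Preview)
Your proof is correct and follows essentially the same approach as the paper's: multiply the $\wpefi$ price inequality through by $\alpha_i(\bfp)$, use $A_i \subseteq \MBB_i(\bfp)$ to turn $\alpha_i(\bfp)\,p(A_i)$ into $v_i(A_i)$ exactly, and use $\alpha_i(\bfp) \ge v_i(o)/p_o$ for all $o$ to upper-bound $\alpha_i(\bfp)\,p(A_j\setminus\{o\})$ by $v_i(A_j\setminus\{o\})$. The only cosmetic difference is that you fix $o^* \in \argmax_{o\in A_j} p_o$ explicitly, whereas the paper carries the $\min_{o\in A_j}$ through the chain; the content is identical.
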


\begin{proof}
To show that $A$ is $\wefi$, take any pair of agents $i,j \in N$. If $A_j=\emptyset$, the required condition holds trivially. 
Suppose that $A_j\neq\emptyset$. 
Since the allocation $A$ is $\wpefi$ with respect to $\bfp$, 
$$
\frac{1}{w_i} p(A_i) \ge \frac{1}{w_j} \min_{o \in A_j} p(A_j \setminus \{o\}). 
$$
Multiplying both sides by $\alpha_i(\bfp)$, we obtain
\begin{align*}
&\frac{1}{w_i} \alpha_i(\bfp) \cdot p(A_i) \ge \frac{1}{w_j} \min_{o \in A_j} \alpha_i(\bfp) \cdot p(A_j \setminus \{o\})\\
&\Rightarrow \frac{1}{w_i} \sum_{o \in A_i} \frac{v_i(o)}{p_{o}} p_{o} \ge \frac{1}{w_j} \min_{o \in A_j} \sum_{o' \in A_j\setminus \{o\}} \frac{v_i(o')}{p_{o'}} p_{o'}\\
&\Rightarrow \frac{1}{w_i} \sum_{o \in A_i} v_i(o) \ge \frac{1}{w_j} \min_{o \in A_j} \sum_{o' \in A_j\setminus \{o\}} v_i(o')\\
&\Rightarrow \frac{1}{w_i} v_i(A_i) \ge \frac{1}{w_j} \min_{o \in A_j}  v_i(A_j \setminus \{o\}). 
\end{align*}
For the transition from the first to the second inequality in the chain, we use the definition of $\alpha_i(\bfp)$ and the assumption $A_i \subseteq \MBB_i(\bfp)$: by the definition of $\alpha_i(\bfp)$, we have $\alpha_i(\bfp) \ge \frac{v_i(o')}{p_{o'}}$ for all $o'\in O$, and by the assumption $A_i \subseteq \MBB_i(\bfp)$, it holds that $\alpha_i(\bfp) = \frac{v_i(o)}{p_{o}}$ for all $o\in A_i$.
The last inequality allows us to conclude that $A$ is $\wefi$. 
\end{proof}

It is also known that if each agent $i$ only purchases MBB items, so that $i$ maximizes her valuation under the budget $p(A_i)$, then the corresponding allocation is Pareto optimal. 

\begin{lemma}[First Welfare Theorem; \citet{Mas-Colell1995}, Chapter $16$]\label{lem:MBB}
Given a complete allocation $A$ and a price vector $\bfp$, suppose that agents are assigned to MBB items only, i.e., $A_i \subseteq \MBB_i(\bfp)$ for each $i \in N$. Then $A$ is $\po$. 
\end{lemma}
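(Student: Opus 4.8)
The plan is to argue by contradiction in the standard Fisher-market fashion. Suppose $A$ is Pareto dominated by some allocation $A'$, so $v_i(A'_i)\ge v_i(A_i)$ for every $i\in N$ with a strict inequality for some agent $j$. First I would observe that we may take $A'$ to be complete without loss of generality: moving any item withheld by $A'$ into an arbitrary bundle does not decrease any agent's valuation (valuations are monotone, and in our setting additive and nonnegative), so the resulting complete allocation still Pareto dominates $A$. Since both $A$ and $A'$ are then complete, $\sum_{i\in N} p(A'_i)=p(O)=\sum_{i\in N} p(A_i)$.

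The crucial ingredient is that spending tracks valuation tightly on MBB bundles. For every agent $i$ and every item $o$ we have $v_i(o)\le \alpha_i(\bfp)\,p_o$ by the definition of the maximum bang-per-buck ratio, hence $v_i(S)\le \alpha_i(\bfp)\,p(S)$ for every bundle $S$ by additivity; moreover, since $A_i\subseteq\MBB_i(\bfp)$, equality holds on $A_i$, i.e.\ $v_i(A_i)=\alpha_i(\bfp)\,p(A_i)$. Assuming (as is standard in the artificial-market construction, and as can be arranged by preprocessing) that all prices are positive and each $\alpha_i(\bfp)$ is positive, I would combine these facts: from $v_i(A'_i)\ge v_i(A_i)$ we get $\alpha_i(\bfp)\,p(A'_i)\ge v_i(A'_i)\ge v_i(A_i)=\alpha_i(\bfp)\,p(A_i)$, so $p(A'_i)\ge p(A_i)$ for every $i$; and for the agent $j$ witnessing the strict improvement, $v_j(A'_j)>v_j(A_j)$ forces $p(A'_j)>p(A_j)$.

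Summing these inequalities over all agents yields $\sum_{i\in N} p(A'_i) > \sum_{i\in N} p(A_i)$, contradicting the equality $\sum_{i\in N} p(A'_i)=p(O)=\sum_{i\in N} p(A_i)$ established above. Hence no Pareto-dominating $A'$ exists, and $A$ is $\po$.

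The only genuine subtlety — and the step I would be most careful with — is the degenerate case handling: zero-priced items make the bang-per-buck ratio ill-defined, and an agent $i$ with $\alpha_i(\bfp)=0$ (all items worthless to her) breaks the division step. These are disposed of by the usual conventions of the market construction (take prices strictly positive, or peel off zero-priced items and agents with vacuous valuations and argue about them directly); with those in place the argument above goes through verbatim, which is precisely why this is the classical First Welfare Theorem invoked as Lemma~\ref{lem:MBB}.
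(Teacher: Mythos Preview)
Your proposal is correct and follows essentially the same contradiction argument as the paper: both use that $v_i(A_i)=\alpha_i(\bfp)\,p(A_i)$ on MBB bundles while $v_i(A'_i)\le\alpha_i(\bfp)\,p(A'_i)$ in general, conclude $p(A'_i)\ge p(A_i)$ (strictly for the improved agent), and derive a contradiction with the total price of $O$. The only cosmetic difference is that you first make $A'$ complete to get $\sum_i p(A'_i)=p(O)$, whereas the paper leaves $A'$ arbitrary and uses $\sum_i p(A'_i)\le p(O)$ instead; either way the contradiction goes through.
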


\begin{proof}
To show that $A$ is Pareto optimal, suppose towards a contradiction that another allocation $A'$ Pareto dominates $A$. 
This means that $v_i(A'_i) \ge v_i(A_i)$ for all $i \in N$, and $v_j(A'_j) > v_j(A_j)$ for some $j \in N$. 
Since each agent $i$ maximizes her valuation under the budget $p(A_i)$ in $A$, we have $p(A'_i) \ge p(A_i)$ for all $i \in N$, and $p(A'_j) > p(A_j)$ for some $j \in N$. Indeed, if $p(A_i)>p(A'_i)$ for some $i \in N$, this would mean that 
\[
v_i(A_i)= \alpha_i(\bfp) p(A_i) > \alpha_i(\bfp) p(A'_i)  \ge  \sum_{o  \in A'_i}  \frac{v_i(o)}{p_o}p_o =v_i(A'_i),  
\]
a contradiction. Thus, $p(A'_i) \ge p(A_i)$ for all $i \in N$. 
A similar argument shows that $p(A'_j) > p(A_j)$ for some $j\in N$. 
However, this implies that
\[
\sum_{o \in O}p_o \ge \sum_{i \in N}p(A'_i) > \sum_{i \in N}p(A_i)= \sum_{o \in O}p_o, 
\]
where the last equality holds since $A$ is a complete allocation. We thus obtain the desired contradiction. 
\end{proof}

With Lemmas \ref{lem:MBBwpEF1} and \ref{lem:MBB}, the problem of finding a $\po$ and $\wefi$ allocation reduces to that of finding an allocation and price vector pair satisfying the MBB condition and $\wpefi$. We show that there is an algorithm that finds such an outcome in pseudo-polynomial time. Our algorithm follows a similar approach as that of \citet{barman2018finding}; thus the proof of Theorem~\ref{thm:POwEF1} is deferred to Appendix~\ref{app:thmproofs}. 

\begin{theorem}\label{thm:POwEF1}
For any number of agents with additive valuations and arbitrary positive real weights, there exists a $\wefi$ and $\po$ allocation. Furthermore, such an allocation can be computed in time $poly(m,n,v_{max},w_{max})$ for any integer-valued inputs, where $v_{max}:=\max_{i\in N, o \in O}v_{i}(o)$ and $w_{max}:=\max_{i \in N} w_i$.
\end{theorem}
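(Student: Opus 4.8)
The plan is to adapt the pseudo-polynomial algorithm of \citet{barman2018finding} to the weighted setting, leveraging the reductions already in place. By Lemmas~\ref{lem:MBBwpEF1} and~\ref{lem:MBB}, it suffices to produce a complete allocation $A$ together with a price vector $\bfp$ such that $A_i \subseteq \MBB_i(\bfp)$ for every $i \in N$ (the \emph{MBB invariant}) and $A$ is $\wpefi$ with respect to $\bfp$; any such pair is automatically $\wefi$ and $\po$. I would initialize by assigning each item $o$ to an agent $i$ maximizing $v_i(o)$ and setting $p_o := v_i(o)$, so that every owned item is an MBB item of its owner and the MBB invariant holds at the start; every operation of the algorithm will preserve it.

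The core loop mirrors the alternating-tree scheme of Barman et al., but with the \emph{weighted spending} $\frac{1}{w_i}p(A_i)$ replacing the spending in every comparison. While $A$ fails $\wpefi$, one observes --- just as in the unweighted case --- that some violating pair has a \emph{least weighted spender} $i^\star$ (an agent minimizing $\frac{1}{w_i}p(A_i)$) as its envier. Grow the alternating breadth-first tree rooted at $i^\star$ on the MBB graph, alternating MBB edges ($o \in \MBB_k(\bfp)$, out of $k$) and ownership edges ($o \in A_k$); note that every item owned by a reachable agent also lies in the tree. If some reachable agent $h$, joined to its parent through item $o$, satisfies $\frac{1}{w_h}\,p(A_h \setminus \{o\}) > \frac{1}{w_{i^\star}}\,p(A_{i^\star})$, choose such an $h$ of minimum depth and rotate items back along the tree path toward $i^\star$ exactly as in Barman et al.; each transferred item is an MBB item for its recipient, so the MBB invariant survives, and $i^\star$ gains an item. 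Otherwise, enter the price-rise phase: scale the prices of all tree items by the largest $\beta > 1$ for which (a) no new MBB edge appears from a tree agent to a non-tree item, and (b) $i^\star$'s weighted spending does not overtake the smallest weighted spending among non-tree agents. Since all tree agents' weighted spendings scale by the same $\beta$, no new violation arises within the tree.

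For correctness one re-verifies the MBB invariant and the fact that $i^\star$ is a least weighted spender at the start of each iteration, and then shows --- following Barman et al.'s potential-function analysis with weighted spendings substituted for spendings --- that a suitable potential built from the least weighted spending (lexicographically refined to control the transfer steps) strictly increases at every iteration, so the loop terminates; Lemmas~\ref{lem:MBBwpEF1} and~\ref{lem:MBB} then yield a $\wefi$ and $\po$ allocation. The pseudo-polynomial time bound for integer inputs is where the bulk of the work --- and the main obstacle --- lies: one must show that only polynomially many, and suitably separated, price vectors occur. As in Barman et al., every price stays of the form $v_i(o)/\alpha_i$ for its owner $i$, with the MBB ratios $\alpha_i$ being ratios of products of $O(n)$ values each bounded by $v_{max}$; the new wrinkle is that the threshold in~(b) equals $\frac{w_{i^\star}\,p(A_k)}{w_k\,p(A_{i^\star})}$, so clearing the integer weights introduces factors up to $w_{max}$ into the numerators and denominators the analysis tracks. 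Bounding these quantities shows that the potential takes only $\poly(m,n,v_{max},w_{max})$ distinct values, hence that many iterations, each of which runs in polynomial time.

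Finally, for arbitrary (possibly irrational) positive weights, existence follows by a limiting argument: approximate the weight vector by rational ones, to which the algorithm applies after clearing denominators, obtaining $\wefi$-and-$\po$ allocations along the sequence; since only finitely many allocations exist, one of them recurs infinitely often, and since the defining inequalities of $\wefi$ are weak and continuous in the weights while Pareto optimality does not involve the weights, that allocation is $\wefi$ and $\po$ for the original weights.
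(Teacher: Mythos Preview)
Your outline captures the right high-level structure---MBB invariant, swap phase, price-rise phase, then Lemmas~\ref{lem:MBBwpEF1} and~\ref{lem:MBB}---but the pseudo-polynomial time argument has a genuine gap, and it is precisely where the paper does real work. You assert that ``every price stays of the form $v_i(o)/\alpha_i$ \ldots\ with the MBB ratios $\alpha_i$ being ratios of products of $O(n)$ values.'' This is not true of your algorithm as stated: when your threshold~(b) binds, the scaling factor is $\beta = \frac{w_{i^\star}\,p(A_k)}{w_k\,p(A_{i^\star})}$, which involves \emph{sums} of current prices. After several such rises the prices become arbitrarily complicated rationals in the input integers, so your claim that only $\poly(m,n,v_{max},w_{max})$ price vectors can arise is unsupported---and even bare termination is unclear, since your ``suitable potential'' is never specified and the least weighted spending need not strictly increase at every step (a swap may stop short of $i^\star$, and a price-rise stopped by~(b) can transfer the role of least spender to an agent outside the tree without increasing $\wmin$).

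The paper (and Barman et al., whose analysis you invoke) avoid this via \emph{$\epsilon$-rounding}: valuations \emph{and weights} are first rounded to integral powers of $(1+\epsilon)$, and the goal is relaxed from exact $\wpefi$ to $3\epsilon$-$\wpefi$. Crucially, whenever your threshold~(b) would bind, the algorithm raises prices by exactly $(1+\epsilon)$ rather than by the threshold itself, so every price remains a power of $(1+\epsilon)$ throughout. One then shows that $\wmax$ is non-increasing while $\wmin$ grows by a factor of at least $(1+\epsilon)$ every $\poly(m,n)$ iterations, yielding the time bound in $1/\epsilon$; finally, for $\epsilon$ inverse-polynomial in $m,v_{max},w_{max}$, a $3\epsilon$-$\wpefi$ MBB allocation on the rounded instance is shown to be exactly $\wefi$ and $\po$ on the original. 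Your limiting argument for irrational weights is correct but becomes superfluous once this framework is in place, since the rounding already absorbs arbitrary real weights.
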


The outline of the algorithm is as follows. Our algorithm alternates between two phases: the first phase involves reallocating items from large to small \emph{spenders} (where the ``spending'' of an agent is defined as the ratio between the price for her bundle of items and her weight; see the formal definition in Appendix~\ref{app:thmproofs}), and the second phase involves increasing the prices of the items owned by small spenders. We show that by increasing prices gradually, the algorithm converges to an allocation and price vector pair satisfying the desired criteria when both input weights and valuations are expressed as integral powers of $(1+\epsilon)$ for some $\epsilon>0$. Similarly to \citet{barman2018finding}, we apply our algorithm to the $\epsilon$-approximate instance of the original input and show that for small enough $\epsilon$, the output of the algorithm satisfies the original MBB condition and $\wpefi$. We note that compared to \citet{barman2018finding}, the analysis becomes more involved due to the presence of weights. In particular, each price-rise phase takes into account not only the valuations but also the weights; as a result, $\epsilon$ needs to be much smaller in order to ensure the equivalence. 

\section{$\wwefi$ and $\po$ allocations: Maximum Weighted Nash Welfare}\label{sec:wwef1_po}

In the previous section, we saw that $\MNW$ allocations may fail to satisfy $\wefi$, showing that the result of \citet{caragiannis2016unreasonable} from the unweighted setting does not extend to the weighted setting via $\wefi$ (or even $\wefc$ for any fixed $c$). 
Given that these allocations maximize a natural objective, it is still tempting to ask whether they provide any fairness guarantee.
The answer is indeed positive: we show that a $\MNW$ allocation satisfies $\wwefi$, a weaker fairness notion that also generalizes $\efi$. 

\begin{theorem}\label{thm:mwnw}
	For any number of agents with additive valuations and arbitrary positive real weights, a $\MNW$ allocation is always $\wwefi$ and $\po$.
\end{theorem}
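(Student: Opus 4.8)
The plan is to follow the outline of Caragiannis et al.'s argument establishing that a maximum Nash welfare allocation is $\po$ and $\efi$, adapting each step to accommodate the weights. Pareto optimality is immediate: if some allocation Pareto dominated a $\MNW$ allocation $A$ over the set $N_{\max}$, it would not decrease any $v_i(A_i)$ and would strictly increase some $v_j(A_j)$, hence strictly increase $\prod_{i \in N_{\max}} v_i(A_i)^{w_i}$ (all weights are positive), contradicting maximality; one also checks that the dominating allocation keeps every agent in $N_{\max}$ at positive value, so it is a legitimate competitor. The bulk of the work is the $\wwefi$ property.

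For $\wwefi$, fix agents $i, j$ with $A_j \neq \emptyset$; the interesting case is when $i$ has weighted envy towards $j$. If $i \notin N_{\max}$ then $v_i(A_i) = 0$, and one argues separately (if $v_i(o) = 0$ for all $o \in A_j$ the condition is trivial; otherwise moving a highest-valued such item $o$ to $i$ would give $i$ positive value, and a short calculation using maximality of $|N_{\max}|$ or of the product shows this $o$ witnesses $\wwefi$). So assume $i, j \in N_{\max}$, both with positive realized value. As in Caragiannis et al., consider transferring a single item $o \in A_j$ from $j$ to $i$: since $A$ maximizes $\prod_{k} v_k(A_k)^{w_k}$, the transfer cannot increase the product, which after taking logarithms and using additivity yields, for the maximizing choice of $o$, an inequality of the shape
\[
w_i \log\!\left(1 + \frac{v_i(o)}{v_i(A_i)}\right) \le w_j \log\!\left(\frac{v_j(A_j)}{v_j(A_j) - v_j(o)}\right).
\]
The standard unweighted proof then picks $o \in A_j$ minimizing $v_j(o)$ (so the right side is controlled by $1 + 1/|A_j|$-type bounds) and manipulates this into $v_i(A_i) \ge v_i(A_j) - v_i(o)$. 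Here I expect the main obstacle to be exactly this manipulation: with the exponents $w_i, w_j$ the clean telescoping of the unweighted case breaks, and — as the authors flag in the introduction — one must split into cases according to whether $w_i \le w_j$ or $w_i > w_j$.

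Concretely, I would choose $o^* \in A_j$ to be an item minimizing $v_j(o)/v_i(o)$ (or a suitable ratio of that flavor, falling back to the $v_j$-minimizer when $v_i(o)=0$ for the candidate), so that the MBB-style inequality above transfers the bound from $j$'s side to $i$'s side cleanly. In the case $w_j \le w_i$, the exponent on the envied side is smaller, the inequality is ``easier'', and one recovers the strong bound $v_i(A_i)/w_i \ge v_i(A_j \setminus \{o^*\})/w_j$, i.e.\ the first disjunct of $\wwefi$. In the case $w_j > w_i$, the envied agent's larger weight obstructs the strong bound, but one instead shows the second disjunct $v_i(A_i \cup \{o^*\})/w_i \ge v_i(A_j)/w_j$: here the relevant comparison is between $v_i(A_i) + v_i(o^*)$ and $v_i(A_j)$, and the product-maximality inequality for the transfer of $o^*$, combined with $w_i < w_j$ and elementary estimates on $\log(1+x) \le x$ and $-\log(1-x) \ge x$, delivers it. Using the equivalent single-inequality reformulation of $\wwefi$ for additive valuations noted in the preliminaries, namely $v_i(A_i)/w_i \ge v_i(A_j)/w_j - v_i(o)/\min\{w_i,w_j\}$, may streamline the bookkeeping, since $\min\{w_i,w_j\}$ automatically encodes the case split. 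I would write the argument through that reformulation, handling the $v_i$-zero degeneracies first and the $i \notin N_{\max}$ case last, and flag the two-case weight comparison as the one genuinely new ingredient relative to the unweighted proof.
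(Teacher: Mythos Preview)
Your outline matches the paper's proof closely: the same Pareto-optimality argument, the same choice of $o^* \in \arg\min_{o \in A_j, v_i(o)>0} v_j(o)/v_i(o)$, the same case split on $w_i \gtrless w_j$ with the first disjunct established when $w_i \ge w_j$ and the second when $w_i < w_j$, and the same handling of $N_{\max} \subsetneq N$. The paper frames the core step as a contradiction (assume the relevant disjunct fails, show the weighted Nash product strictly increases) rather than working forward from the maximality inequality, but that is cosmetic.

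One technical point does need fixing. The log estimates you name, $\log(1+x) \le x$ and $-\log(1-x) \ge x$, point the wrong way: from the maximality inequality $w_i \log(1+a) \le -w_j \log(1-b)$ (with $a = v_i(o^*)/v_i(A_i)$, $b = v_j(o^*)/v_j(A_j)$), an upper bound on the left side and a lower bound on the right side sandwich nothing useful. The paper instead uses Bernoulli's inequality $(1+x)^r \ge 1+rx$ for $r \ge 1$: in the case $w_i \ge w_j$ it is applied to $(1+a)^{w_i/w_j}$, and in the case $w_i < w_j$ to $(1-b)^{w_j/w_i}$. In each case this, combined with the assumed failure of the relevant $\wwefi$ disjunct and the bound $v_i(A_j) \le \tfrac{v_i(o^*)}{v_j(o^*)} v_j(A_j)$ coming from the choice of $o^*$, forces the ratio $\NW(A')/\NW(A)$ strictly above $1$. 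So swap your log bounds for Bernoulli (or, equivalently, run the argument by contradiction) and the proof goes through exactly as you sketched.
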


The proof of Theorem~\ref{thm:mwnw} follows a similar outline as the corresponding proof of \citet{caragiannis2016unreasonable}.
$\po$ follows easily from the definition of $\MNW$.
For $\wwefi$, we assume for contradiction that an agent~$i$ weakly envies another agent~$j$ up to more than one item in a $\MNW$ allocation.
If every agent has a positive value for every item, we pick an item in agent~$j$'s bundle for which the ratio between $i$'s value and $j$'s value is maximized.
By distinguishing between the cases $w_i\ge w_j$ and $w_i\le w_j$, we show that we can achieve a higher weighted Nash welfare upon transferring this item to agent~$i$'s bundle, which yields the desired contradiction.
The case where agents may have zero value for items is then handled separately.

\begin{proof}[Proof of Theorem~\ref{thm:mwnw}]
	Let $A$ be a $\MNW$ allocation, with $N_{\max}$ being the subset of agents having strictly positive realized valuations under $A$. If it were not $\po$, there would exist an allocation $\hat{A}$ such that $v_i(\hat{A}_i) > v_i(A_i)$ for some $i \in N$ and $v_j(\hat{A}_j) \ge v_j(A_j)$ for every $j \in N \setminus \{i\}$. If $i \in N\setminus N_{\max}$, we would have $v_j(\hat{A}_j) > 0$ for every $j \in N_{\max} \cup \{i\}$, contradicting the assumption that $N_{\max}$ is a largest subset of agents to whom it is possible to give positive value simultaneously. If $i \in N_{\max}$, then $\prod_{j \in N_{\max}} v_j(\hat{A}_j)^{w_j} > \prod_{j \in N_{\max}} v_j(A_j)^{w_j}$, which violates the optimality of the right-hand side. This proves that $A$ is $\po$.

	Like \citet{caragiannis2016unreasonable}, we will start by proving that $A$ is $\wwefi$ for the scenario $N_{\max}=N$ and then address the case $N_{\max}\neq N$. Assume that $N_{\max}=N$. If $A$ is not $\wwefi$, then there exists a pair of agents $i,j \in N$ such that $i$ has weak weighted envy towards $j$ up to more than one item. Clearly, there must be at least two items in $j$'s bundle that $i$ values positively. 
	Moreover, $j$ must value these items positively as well---otherwise we can transfer them to $i$ and obtain a Pareto improvement.
	
	Let $A_j^i := \{o \in A_j: v_i(o)>0\}$. We construct another allocation $A'$ by transferring an item $o^*$ (to be chosen later) from $j$ to $i$ so that $A'_i=A_i \cup \{o^*\}$, $A'_j = A_j \setminus \{o^*\}$, and $A'_r=A_r$ for all $r \in N \setminus \{i,j\}$. We have
	\begin{align*}
	\frac{\NW(A')}{\NW(A)} &= \left( \frac{v_i(A_i \cup \{o^*\})}{v_i(A_i)}\right)^{w_i} \left( \frac{v_j(A_j \setminus \{o^*\})}{v_j(A_j)}\right)^{w_j}\\
	&=\left( \frac{v_i(A_i)+v_i(o^*)}{v_i(A_i)}\right)^{w_i} \left( \frac{v_j(A_j)-v_j(o^*)}{v_j(A_j)}\right)^{w_j}\\
	&=\left( 1+\frac{v_i(o^*)}{v_i(A_i)}\right)^{w_i} \left( 1-\frac{v_j(o^*)}{v_j(A_j)}\right)^{w_j}.
	\end{align*}
First, note that $v_j(o) > 0$ for all $o \in A_j^i$; otherwise the above ratio for $o^*\in A_j^i$ with $v_j(o^*)=0$ equals $\left( 1+\frac{v_i(o^*)}{v_i(A_i)}\right)^{w_i} > 1$, contradicting the assumption that $A$ is a $\MNW$ allocation. However, even under this condition, we will show that if agents $i,j$ violated the $\wwefi$ property, the above ratio would still exceed $1$ for some item $o^*$.  	
\paragraph{Case I} $w_i \ge w_j$. 

Let us pick an item $o^* \in \arg\min_{o \in A_j^i} \frac{v_j(o)}{v_i(o)}$ specifically to transfer from $j$ to $i$ for changing the allocation from $A$ to $A'$. This is well-defined by the definition of $A_j^i$. Consider
	\begin{align*}
	\left[\frac{\NW(A')}{\NW(A)} \right]^\frac{1}{w_j} &= \left( 1+\frac{v_i(o^*)}{v_i(A_i)}\right)^\frac{w_i}{w_j} \left( 1-\frac{v_j(o^*)}{v_j(A_j)}\right),
	\end{align*}
where $1- \frac{v_j(o^*)}{v_j(A_j)}>0$ since $v_j(A_j) > v_j(o^*) > 0$.
Moreover, we have
\[\left( 1+\frac{v_i(o^*)}{v_i(A_i)}\right)^\frac{w_i}{w_j}  \ge \left( 1+\frac{w_i}{w_j} \cdot \frac{v_i(o^*)}{v_i(A_i)}\right)\]
from Bernoulli's inequality, since $\frac{v_i(o^*)}{v_i(A_i)}>0$ and $\frac{w_i}{w_j} \geq 1$. 
Algebraic manipulations show that
\begin{align}
&\left( 1+\frac{w_i}{w_j} \cdot \frac{v_i(o^*)}{v_i(A_i)}\right) \left( 1- \frac{v_j(o^*)}{v_j(A_j)}\right) > 1 \notag \\
\Leftrightarrow \quad & \frac{v_i(A_i)}{w_i} < \frac{v_i(o^*)}{v_j(o^*)} \left( \frac{v_j(A_j)-v_j(o^*)}{w_j} \right). \label{case1ineq}
\end{align}
The latter inequality is true under our assumptions for the following reasons: Since agent $i$ with a larger weight has weak weighted envy towards agent $j$ with a smaller weight up to more than one item, we have $\frac{v_i(A_i)}{w_i} < \frac{v_i(A_j)-v_i(o^*)}{w_j}$; in addition, due to our choice of $o^*$,
\[\frac{v_j(o^*)}{v_i(o^*)} \le \frac{\sum_{o \in A_j^i}v_j(o)}{\sum_{o \in A_j^i}v_i(o)} \le \frac{\sum_{o \in A_j}v_j(o)}{\sum_{o \in A_j}v_i(o)} = \frac{v_j(A_j)}{v_i(A_j)}, \]
since $\sum_{o \in A_j \setminus A_j^i}v_j(o) \ge 0$ and $\sum_{o \in A_j \setminus A_j^i} v_i(o) = 0$. 
Plugging $v_i(A_j) \le \frac{v_i(o^*)}{v_j(o^*)}\cdot v_j(A_j)$ into the above strict inequality and simplifying, we obtain \eqref{case1ineq}. But chaining all these inequalities together, we get 
\[\left[\frac{\NW(A')}{\NW(A)} \right]^\frac{1}{w_j} > 1 \quad \Rightarrow \quad \NW(A')>\NW(A).\]
This is a contradiction, which shows that $A$ is $\wwefi$ in this case.
\paragraph{Case II} $w_i < w_j$.

As in Case I, we pick an item $o^* \in \arg\min_{o \in A_j^i} \frac{v_j(o)}{v_i(o)}$, so we also have $v_i(A_j) \le \frac{v_i(o^*)}{v_j(o^*)} \cdot v_j(A_j)$. 
Consider 
\begin{align*}
\left[\frac{\NW(A')}{\NW(A)} \right]^\frac{1}{w_i} &= \left( 1+\frac{v_i(o^*)}{v_i(A_i)}\right) \left( 1-\frac{v_j(o^*)}{v_j(A_j)}\right)^\frac{w_j}{w_i},
\end{align*}
where $1- \frac{v_j(o^*)}{v_j(A_j)}>0$ since $v_j(A_j) > v_j(o^*) > 0$.
Moreover, we have
\[\left(1-\frac{v_j(o^*)}{v_j(A_j)}\right)^\frac{w_j}{w_i} \ge \left( 1- \frac{w_j}{w_i} \cdot \frac{v_j(o^*)}{v_j(A_j)}\right)\]
from Bernoulli's inequality, since $\frac{w_j}{w_i} > 1$ and $0 < \frac{v_j(o^*)}{v_j(A_j)} < 1$. 
Algebraic manipulations show that
\begin{align}
&\left( 1+ \frac{v_i(o^*)}{v_i(A_i)}\right) \left( 1- \frac{w_j}{w_i} \cdot \frac{v_j(o^*)}{v_j(A_j)}\right) > 1 \notag \\
\Leftrightarrow \quad & \frac{v_i(A_i)+v_i(o^*)}{w_i} < \frac{v_i(o^*)}{v_j(o^*)} \cdot \frac{v_j(A_j)}{w_j}. \label{case2ineq}
\end{align}
Since agent $i$ with a smaller weight has weak weighted envy towards agent $j$ with a larger weight up to more than one item, we have $\frac{v_i(A_i)+v_i(o^*)}{w_i} < \frac{v_i(A_j)}{w_j}$.
Plugging $v_i(A_j) \le \frac{v_i(o^*)}{v_j(o^*)} \cdot v_j(A_j)$ into this strict inequality, we obtain \eqref{case2ineq}, and chaining all inequalities leads us to the contradiction $\NW(A')>\NW(A)$.
This completes the proof for the scenario where $N_{\max}=N$. 

The rest of the proof mirrors the corresponding part in the proof of \citet{caragiannis2016unreasonable}. If $N_{\max} \subsetneq N$, it is easy to see that there can be no weighted envy towards any $i \not\in N_{\max}$ since $v_j(A_i)=0$ for any such $i$ and every $j \in N$. Also, for any $i,j \in N_{\max}$, we can show as in the proof for $N_{\max}=N$ above that there cannot be (weak) weighted envy up to more than one item. Suppose for contradiction that an agent $i \in N \setminus N_{\max}$ is not weighted envy-free towards some $j \in N_{\max}$ up to one item under $A$. This means that $i$ still has positive value for $j$'s bundle even after removing any single item; in particular, $i$ values at least two items in $A_j$ positively. Since $j$ must also value these items positively, we may transfer one of them to $i$ and keep both $i$ and $j$'s valuations positive. This contradicts the maximality of $N_{\max}$. Hence, $i \in N\setminus N_{\max}$ must be weakly weighted envy-free up to one item towards $j \in N_{\max}$.
It follows that $A$ is $\wwefi$ in all cases.
\end{proof}

\section{$\wefi$ and other fairness notions}\label{sec:wef1_propMMS}

An allocation that satisfies multiple fairness guarantees is naturally desirable but often elusive in the setting with indivisible items. Hence, we will now explore the implications of the $\wefi$ property on the other fairness criteria defined in Section~\ref{sec:prelims}. 

For additive valuations, \citet{aziz2019polynomial} provided a polynomial-time algorithm for computing a $\po$ and $\WPi$ allocation,  whereas we proved the existence of $\po$ and $\wefi$ allocations in Section~\ref{sec:wef1po_gen}. 
It is straightforward to verify that, in the unweighted scenario, any complete envy-free allocation is also proportional for subadditive valuations and any complete $\efi$ allocation is $\propi$ for additive valuations (see, e.g., \cite{aziz2019polynomial}). This begs the question: does the $\wefi$ property along with completeness also imply the $\WPi$ condition? Unfortunately, the answer is no for any $n\ge 3$---in fact, we establish a much stronger result in the following proposition.
\begin{proposition}\label{prop:wprop}
	For any number $n \ge 2$ of agents with additive valuations and arbitrary positive real weights, any complete $\wefi$ allocation is $\WP(n-1)$.
	However, for any $n\ge 3$, there exists an instance in which no complete $\wefi$ allocation is $\WP(n-2)$.
\end{proposition}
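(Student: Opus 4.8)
The plan is to prove the two halves separately: the first by a weighted averaging argument that generalizes the classical ``$\efi \Rightarrow \propi$'' implication, and the second by an explicit instance featuring one heavily-weighted agent.

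For the positive direction, I would fix a complete $\wefi$ allocation $A$ and an agent $i$, write $W := \sum_{k \in N} w_k$, and for each $j \neq i$ with $A_j \neq \emptyset$ let $o_j \in \arg\max_{o \in A_j} v_i(o)$. Since removing the highest-valued item minimizes $v_i(A_j \setminus \{o\})$, the $\wefi$ guarantee for the pair $(i,j)$ gives $v_i(A_j) \leq \tfrac{w_j}{w_i} v_i(A_i) + v_i(o_j)$ (and this holds trivially, with $v_i(o_j):=0$, when $A_j = \emptyset$). Summing over $j \neq i$, using completeness to write $v_i(O) = v_i(A_i) + \sum_{j \neq i} v_i(A_j)$, and using $w_i + \sum_{j \neq i} w_j = W$, I expect to obtain $v_i(O) \leq \tfrac{W}{w_i} v_i(A_i) + \sum_{j \neq i} v_i(o_j)$, which rearranges to $v_i(A_i) \geq \tfrac{w_i}{W} v_i(O) - \tfrac{w_i}{W}\sum_{j \neq i} v_i(o_j) \geq \tfrac{w_i}{W} v_i(O) - v_i(S)$ for $S := \{o_j : j \neq i,\ A_j \neq \emptyset\}$, using $w_i \leq W$ and additivity. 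Since $S \subseteq O \setminus A_i$ and $|S| \leq n-1$, this is exactly the $\WP(n-1)$ condition for agent $i$, and since $i$ was arbitrary, $A$ is $\WP(n-1)$.

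For tightness, I would take $n$ goods $a_1,\dots,a_n$ with the identical additive valuation $v_i(a_r)=1$ for every $i$ and $r$, and weights $w_1 = (n-1)^2+1$ and $w_2 = \dots = w_n = 1$. The key structural claim is that every complete $\wefi$ allocation gives exactly one good to each agent: if some agent $\ell$ received $k \geq 2$ goods, then (as $n$ goods are split among $n$ agents) some agent $\ell'$ would receive none, and the $\wefi$ condition for the pair $(\ell',\ell)$ would require $0 \geq (k-1)/w_\ell > 0$, which is impossible; on the other hand, the allocation that gives $a_i$ to agent $i$ is visibly $\wefi$, so such allocations do exist. Hence $v_1(A_1) = 1$ in every complete $\wefi$ allocation, so the best admissible $S$ of size $n-2$ outside $A_1$ only brings the total to $v_1(A_1) + v_1(S) = 1 + (n-2) = n-1$; meanwhile agent~$1$'s weighted proportional share is $\tfrac{w_1}{W}v_1(O) = \tfrac{w_1}{w_1+n-1}\,n$, and a short computation shows $\tfrac{w_1}{w_1+n-1}\,n > n-1$ precisely when $w_1 > (n-1)^2$, which our choice ensures. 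Thus no complete $\wefi$ allocation of this instance is $\WP(n-2)$.

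The averaging argument is routine; the real content is in the second half, where the obstacle is not exhibiting a single bad allocation but arguing that \emph{every} complete $\wefi$ allocation is bad. I would expect the cleanest route to be exactly the one above --- choosing identical unit valuations so that $\wefi$ collapses to an elementary inequality on bundle sizes, which pins down the set of complete $\wefi$ allocations entirely and, as a bonus, shows the negative result persists even under identical valuations.
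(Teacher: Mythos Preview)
Your proof is correct and follows essentially the same approach as the paper: the averaging argument for the positive half is identical, and the tightness construction has the same structure ($n$ items, one heavily weighted agent, $\wefi$ forcing each agent to receive exactly one item, then checking that agent~$1$'s weighted proportional share exceeds $n-1$). The only difference is cosmetic---you take integer weights $w_1=(n-1)^2+1$ and unit valuations while the paper normalizes weights to sum to $1$ and uses valuations $1/n$---but the underlying argument is the same.
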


\begin{proof}
	Let $A$ be a complete $\wefi$ allocation under additive valuations, and fix an agent $i$.
	By definition of $\wefi$, for each $j\in N\setminus\{i\}$, there is a set $S_j\subseteq A_j$ with $|S_j|\leq 1$ such that $\frac{w_j}{w_i} \cdot v_i(A_i) \ge v_i(A_j) - v_i(S_j)$.
	Summing up the respective inequalities for all $j\in N \setminus \{i\}$ and noting that $O = \cup_{j \in N} A_j$ due to completeness, we have
	$$\frac{\sum_{j \in N \setminus \{i\}} w_j}{w_i} \cdot v_i(A_i) \ge v_i(O) - v_i(A_i) - \sum_{j \in N \setminus \{i\}} v_i(S_j).$$
	If we let $S:=\cup_{j \in N \setminus \{i\}}S_j$, the above inequality implies that
	$\frac{\sum_{j \in N} w_j}{w_i} \cdot v_i(A_i) \ge v_i(O) - v_i(S)$.
	It follows that
	$$v_i(A_i)\ge \frac{w_i}{\sum_{j \in N} w_j}\cdot v_i(O) - \frac{w_i}{\sum_{j \in N} w_j}\cdot v_i(S) 
	\ge 
	\frac{w_i}{\sum_{j \in N} w_j}\cdot v_i(O) - v_i(S),$$
	where the latter inequality follows from $\frac{w_i}{\sum_{j \in N} w_j} < 1$ and $v_i(S)\ge 0$.
	Since $|S|\leq n-1$, this shows that $A$ is $\WP(n-1)$.
	
	For the second part, let $0 < \epsilon < \frac{1}{n(n-1)}$, and consider an instance with $n \ge 3$ agents where $w_1=1-(n-1)\epsilon$ and $w_i = \epsilon$ for all $i \in N\setminus\{1\}$.
	Moreover, assume that there are $m=n$ items, $v_1(o)=\frac{1}{n}$  for every $o \in O$, and every other agent has an arbitrary positive value for each item.
	Let $A$ be any complete $\wefi$ allocation.
	Note that every agent must receive exactly one item in $A$---otherwise there would be an agent with no item and another agent with two or more items, and these two agents would violate the $\wefi$ property.
	Then, for any set $S\subseteq O\setminus A_1$ with $|S|\leq n-2$, we have 
	\[\frac{w_1}{\sum_{j \in N} w_j}v_1(O) - v_1(S) \ge \frac{1-(n-1)\epsilon}{1} \cdot 1 - \frac{n-2}{n} = \frac{2}{n} - (n-1)\epsilon > \frac{2}{n} - \frac{1}{n} = \frac{1}{n} = v_1(A_1).\]
    Hence, $A$ is not $\WP(n-2)$.
\end{proof}

For $n$ symmetric (unweighted) agents with additive valuations, \citet[Prop. 3.6]{amanatidis18comparing} showed that any complete $\efi$ allocation is $\frac{1}{n}$-$\mathtt{MMS}$, and this approximation guarantee is tight. 
Moreover, as \citet[Thm. 4.1]{caragiannis2016unreasonable} proved, a maximum Nash welfare allocation, which is $\efi$ and $\po$, is also $\Theta(1/\sqrt{n})$-$\mathtt{MMS}$. This means that, for a small number of agents, the $\efi$ property provides a reasonable approximation to $\mathtt{MMS}$ fairness. 
Our next proposition stands in stark contrast to these results: For any number of agents with asymmetric weights, it shows that the $\wefi$ condition does not imply any positive approximation of the $\WM$ guarantee, even in conjunction with Pareto optimality. 
\begin{proposition}\label{prop:wef1_not_apprwmms}
For any constant $\epsilon > 0$ and any number $n\geq 2$ of agents, there exists an instance with additive valuations in which some $\po$ and $\wefi$ allocation is not $\epsilon$-$\WM$.
\end{proposition}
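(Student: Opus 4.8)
The plan is to construct, for any constant $\epsilon>0$ and any $n\ge 2$, a single instance together with an explicit allocation $A$ that is both $\po$ and $\wefi$ while violating the $\epsilon$-$\WM$ constraint of one agent. The key observation driving the construction is that when the number of items equals the number of agents, the allocation that gives each agent exactly one item is automatically $\wefi$: deleting the unique item from any bundle $A_j$ empties it. Thus the $\wefi$ requirement becomes vacuous and imposes no constraint, which frees us to choose the valuations and weights so as to simultaneously force Pareto optimality (via identical valuations) and a large gap between an agent's realized value and her weighted maximin share.

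Concretely, I would take $m=n$ items $O=\{o_1,\dots,o_n\}$, set $\delta:=\tfrac12\min\{1,\epsilon\}$, and let every agent have the \emph{same} additive valuation given by $v_i(o_1)=\delta$ and $v_i(o_r)=1$ for all $2\le r\le n$. The weights are $w_1=1/\delta$ and $w_2=\dots=w_n=1$, and the candidate allocation is $A_i=\{o_i\}$ for each $i\in N$. Since every bundle is a singleton, for any pair $i,j$ we have $v_i(A_j\setminus\{o_j\})=v_i(\emptyset)=0\le v_i(A_i)/w_i$, so $A$ is $\wefi$. And since the valuations are identical and additive, any allocation $A'$ Pareto-dominating $A$ would satisfy $\sum_{i\in N} v_i(A'_i)>\sum_{i\in N} v_i(A_i)=v_1(O)$, contradicting $\sum_{i\in N} v_i(A'_i)=v_1(\bigcup_{i} A'_i)\le v_1(O)$; hence $A$ is $\po$ (this is the same observation about identical valuations already used earlier in the paper).

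It then remains to compute $\WM_1$ and compare. Any $n$-partition of $O$ achieving a positive value of $\min_{j}\frac{w_1}{w_j}v_1(B_j)$ must assign every part a nonempty bundle, and because $|O|=n$ this forces exactly one item per part; hence $v_1(B_1)\le 1$ for any such partition and therefore $\min_{j}\frac{w_1}{w_j}v_1(B_j)\le v_1(B_1)\le 1$. Conversely, the partition with $B_1=\{o_2\}$, $B_2=\{o_1\}$, and $B_k=\{o_k\}$ for $3\le k\le n$ attains $\min\{1,\,w_1\delta,\,w_1,\dots,w_1\}=\min\{1,1\}=1$, using $w_1\delta=1$. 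Thus $\WM_1=1$, whereas $v_1(A_1)=v_1(o_1)=\delta<\epsilon=\epsilon\cdot\WM_1$, so $A$ is not $\epsilon$-$\WM$.

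The main obstacle is mild: it is pinning down $\WM_1$ exactly, where the upper bound $\WM_1\le 1$ is a pigeonhole argument on distributing $n$ items among $n$ parts and the matching lower bound needs the explicit balanced partition above together with the calibration $w_1=1/\delta$ that makes $w_1\delta=1$. Everything else — the $\wefi$ and $\po$ claims, and the final comparison — is immediate. The conceptual point worth highlighting in the write-up is that the weight disparity alone suffices to destroy any constant-factor $\WM$ approximation once $\wefi$ contributes nothing (as it does for singleton bundles), in sharp contrast to the unweighted guarantee that any $\efi$ allocation is $\frac1n$-$\mathtt{MMS}$.
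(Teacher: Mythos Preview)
Your argument is correct and shares the key idea with the paper's proof: take $m=n$ items and give one item to each agent so that $\wefi$ becomes vacuous, then engineer the remaining data to force $\po$ together with a large $\WM$-gap for agent~$1$. The paper instead uses non-identical valuations---agent~$1$ has $v_1(o_r)=w_r$ for strictly decreasing weights with $w_n/w_1<\epsilon$, while each agent $i\ge 2$ values only the single item $o_{i-1}$---so that $\po$ holds because every agent $i\ge 2$ already receives her unique positively-valued item, and $\WM_1=w_1$ drops out immediately from the identity $\frac{w_1}{w_j}v_1(o_j)=w_1$. Your choice of identical valuations is a mild simplification: $\po$ becomes a one-line utilitarian-welfare argument, at the cost of a slightly more careful (but still elementary) computation of $\WM_1$ via the calibrated choice $w_1=1/\delta$.
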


\begin{proof}
	Suppose $m=n$ and the weights are $w_1 > w_2 > \dots > w_n > 0$ with $w_n/w_1 < \epsilon$.
	The valuation functions are defined by
	\begin{align*}
		v_1(o_r) &= w_r &\forall r \in [n];\\
		v_i(o_r) &=\begin{cases}
		1 &\text{if $r=i-1$}\\
		0 &\text{otherwise}
		\end{cases} &\forall i \in N \setminus \{1\}.
	\end{align*}
	 
	 One can check that $\WM_1 = w_1$, obtained from the allocation that gives item $o_i$ to agent $i$ for every $i \in N$. Moreover, the allocation $A$ in which agent $1$ receives $o_n$ and agent $i$ receives $o_{i-1}$ for every $i \in N \setminus \{1\}$ is $\po$ and $\wefi$. The $\WM$ approximation ratio for agent $1$ under $A$ is $v_1(o_n)/\WM_1=w_n/w_1 < \epsilon$.
\end{proof}

For the special case of $n=2$ agents, the first part of Proposition~\ref{prop:wprop} implies that the output of the weighted adjusted winner procedure from Section~\ref{sec:wef1_po_twoagents} is always $\WPi$.
However, we show that it comes with no guarantee on the $\WM$ approximation.
\begin{proposition}\label{prop:weighted-adjusted-bad}
	The output of the weighted adjusted winner procedure (Algorithm~\ref{alg_waw}) is always $\WPi$. However, for any constant $\epsilon > 0$, it is not necessarily $\epsilon$-$\WM$.
\end{proposition}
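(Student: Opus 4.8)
The plan is to dispatch the two halves separately, the first being immediate from earlier results. For the $\WPi$ claim: Algorithm~\ref{alg_waw} always outputs a complete allocation (it partitions $O$ into $A_1=\{o_1,\dots,o_d\}$ and $A_2=\{o_{d+1},\dots,o_m\}$), this allocation is $\wefi$ by Theorem~\ref{thm:2agents}, and the first part of Proposition~\ref{prop:wprop} says any complete $\wefi$ allocation is $\WP(n-1)$, which for $n=2$ is exactly $\WPi$. So there is nothing more to do here.

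For the negative $\WM$ claim, I would construct a two-agent, two-item instance and simply read off what the algorithm does. Fix $\epsilon>0$, choose weights $w_1>w_2>0$ with $w_2/w_1<\epsilon$, and set $M:=w_1/w_2$ (so $M>1$). Define $v_1(o_1)=1$, $v_1(o_2)=M$, $v_2(o_1)=1$, $v_2(o_2)=M+1$. First I would note $\frac{v_1(o_1)}{v_2(o_1)}=1>\frac{M}{M+1}=\frac{v_1(o_2)}{v_2(o_2)}$, so the items are already listed in the order Algorithm~\ref{alg_waw} assumes. Then, since $m-1=1$, the \textbf{while} loop can never raise $d$ above $1$, and at $d=1$ its condition is $\frac{1}{w_1}v_1(o_1)<0$ (the right-hand side being an empty sum, hence $0$), which is false; so the output is $A_1=\{o_1\}$, $A_2=\{o_2\}$ with $v_1(A_1)=1$.

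Finally I would compute $\WM_1$ by running through the four ordered $2$-partitions of $\{o_1,o_2\}$: the partition $(\{o_2\},\{o_1\})$ yields $\min\{M,\frac{w_1}{w_2}\cdot 1\}=\min\{M,M\}=M$; the partition $(\{o_1\},\{o_2\})$ yields $\min\{1,\frac{w_1}{w_2}M\}=1$; and each partition with an empty bundle yields $0$. Hence $\WM_1=M$, so the algorithm's output has $\WM$-approximation ratio $v_1(A_1)/\WM_1=1/M=w_2/w_1<\epsilon$ for agent~$1$, and is therefore not $\epsilon$-$\WM$. The only step requiring any care — and the nearest thing to an obstacle in this argument — is making sure the two value ratios are \emph{strictly} ordered: otherwise the ``w.l.o.g.'' ordering in Algorithm~\ref{alg_waw} would be free to place $o_2$ first and hand agent~$1$ the valuable item, spoiling the example. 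Choosing $v_2(o_2)=M+1>M$ secures this, and everything else is routine arithmetic.
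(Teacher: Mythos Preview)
Your proposal is correct and follows essentially the same approach as the paper: the $\WPi$ half is dispatched identically via Theorem~\ref{thm:2agents} and Proposition~\ref{prop:wprop}, and the $\WM$ counterexample is a two-agent, two-item instance with $w_2/w_1<\epsilon$ in which the ratio ordering forces Algorithm~\ref{alg_waw} to hand agent~$1$ her less-valued item. The only cosmetic difference is the choice of valuations (the paper uses $v_1(o_i)=w_i$ together with $v_2(o_1)=1-\delta$, $v_2(o_2)=\delta$, so that after reordering agent~$1$ receives the low-value item $o_2$), and you are more explicit than the paper in verifying $\WM_1$ over all four ordered partitions.
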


\begin{proof}
		By Theorem~\ref{thm:2agents}, Algorithm~\ref{alg_waw} produces a complete $\wefi$ allocation for two agents; such an allocation must be $\WP1$ due to Proposition~\ref{prop:wprop}.
		
		Consider an instance with $m=n=2$ and weights $w_1 > w_2 > 0$ with $w_2/w_1 < \epsilon$.
		Let $\delta > 0$ be such that $w_2/w_1 > \delta/(1-\delta)$.
		Suppose that $v_1(o_i)=w_i$ for $i\in\{1,2\}$, $v_2(o_1) = 1-\delta$, and $v_2(o_2) = \delta$.
		Since
		\[\frac{v_1(o_2)}{v_2(o_2)} = \frac{w_2}{\delta}  > \frac{w_1}{1-\delta}=\frac{v_1(o_1)}{v_2(o_1)},\]
		Algorithm~\ref{alg_waw} considers the items in the order $o_2,o_1$.
		Then agent 1 receives $o_2$ and agent 2 receives $o_1$.
		Hence, agent $1$'s $\WM$ approximation ratio is $w_2/w_1<\epsilon$. 
\end{proof}

\citet{farhadi2019fair} showed that a $1/n$ approximation of the $\WM$ can be obtained using the canonical (unweighted) round-robin algorithm with the added requirement that agents with higher weights go first; it is not hard to check that this algorithm does not guarantee $\wwefc$ or $\WPc$ for any constant~$c$.
On the other hand, while our weighted picking sequence from Section~\ref{sec:picking} ensures $\wefi$, perhaps surprisingly, it does not come with any $\WM$ approximation guarantee.

\begin{proposition}
For any constant $\epsilon > 0$, the output of the Weighted Picking Sequence protocol (Algorithm~\ref{alg_pickseq}) is not necessarily $\epsilon$-$\WM$.
\end{proposition}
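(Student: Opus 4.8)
The plan is to construct, for each $\epsilon>0$, a simple two-agent instance on which Algorithm~\ref{alg_pickseq} strips the lighter agent of almost all of her weighted maximin share. Fix a positive integer $W=\max\{2,\lceil 1/\epsilon\rceil\}$, so that $\frac{1}{W+1}<\epsilon$, and consider the instance with $n=2$, weights $w_1=W$ and $w_2=1$, and $m=W+2$ items, where the two agents share the \emph{identical} additive valuation $v_1=v_2=:v$ defined by $v(o_1)=1$ and $v(o_r)=\frac{1}{W(W+1)}$ for every $r\in\{2,\dots,W+2\}$. The intuition is that agent~$2$'s maximin partition hands $o_1$ to agent~$1$ and keeps all the small items for herself, whereas the weighted picking sequence lets agent~$1$ grab $o_1$ \emph{and} almost all of the small items before agent~$2$ gets another turn.

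First I would pin down $\WM_2$. On one hand, the partition $B_1=\{o_1\}$, $B_2=\{o_2,\dots,o_{W+2}\}$ witnesses $\WM_2\ge\min\{v(B_2),\tfrac{w_2}{w_1}v(B_1)\}=\min\{\tfrac1W,\tfrac1W\}=\tfrac1W$. On the other hand, for any partition $(B_1,B_2)$ we have $\sum_{j\in\{1,2\}}\tfrac{w_j}{w_2}\cdot\tfrac{w_2}{w_j}v(B_j)=v(O)$, so $\min_j\tfrac{w_2}{w_j}v(B_j)$ cannot exceed the corresponding $\tfrac{w_j}{w_2}$-weighted average $\tfrac{w_2}{w_1+w_2}v(O)=\tfrac1{W+1}\cdot\tfrac{W+1}{W}=\tfrac1W$; hence $\WM_2=\tfrac1W$. (A direct case split on whether $o_1\in B_1$ or $o_1\in B_2$ yields the same bound if one prefers to avoid the averaging argument.)

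Next I would trace the run of Algorithm~\ref{alg_pickseq} on this instance. Initially $\tfrac{t_i}{w_i}=0$ for both agents, so lexicographic tie-breaking makes agent~$1$ pick first, and she takes her favorite item $o_1$. At the second iteration $\tfrac{t_1}{w_1}=\tfrac1W>0=\tfrac{t_2}{w_2}$, so agent~$2$ picks, obtaining one of the small items. Thereafter $t_2=1$, and at each subsequent iteration agent~$1$ is selected as long as $\tfrac{t_1}{W}\le 1$, i.e.\ while $t_1\le W$; starting this phase with $t_1=1$, she is chosen on iterations $3,4,\dots,W+2$ (the final one being a tie that lexicographic tie-breaking awards to her), so she collects all $W$ remaining small items. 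The algorithm therefore outputs an allocation $A$ in which $A_2$ is a single small item, so $v_2(A_2)=\tfrac{1}{W(W+1)}$, independently of how the arbitrary item-ties are resolved.

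It then follows that agent~$2$'s approximation factor is $\tfrac{v_2(A_2)}{\WM_2}=\tfrac{1/(W(W+1))}{1/W}=\tfrac1{W+1}<\epsilon$, so $A$ is not $\epsilon$-$\WM$, which proves the claim. I expect no genuine obstacle here: the only points needing care are the turn-by-turn bookkeeping of which agent is selected (in particular the tie at iteration $W+2$) and the verification that $\WM_2$ is exactly $\tfrac1W$ rather than something larger; everything else is a single line of arithmetic.
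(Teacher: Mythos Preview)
Your proof is correct and follows essentially the same approach as the paper: both construct a two-agent instance with identical additive valuations, a heavy agent~$1$ and a light agent~$2$, one large item and many equal small items, and observe that the picking sequence $1,2,1,1,\dots,1$ leaves agent~$2$ with a single small item worth an arbitrarily small fraction of her $\WM$. The only differences are cosmetic scaling (the paper takes $m=k+1$ with values $k^2$ and $1$, obtaining ratio $1/k$, whereas you take $m=W+2$ with values $1$ and $\tfrac{1}{W(W+1)}$, obtaining ratio $\tfrac{1}{W+1}$); your averaging bound for $\WM_2$ and your turn-by-turn bookkeeping, including the tie at iteration $W+2$, are all handled correctly.
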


\begin{proof}
Consider an instance with $n=2$ and $m=k+1$ for a positive integer $k > 1/\epsilon$.
The weights are $w_1=k$ and $w_2=1$, and both agents have identical valuations with value $k^2$ for one item and $1$ for each of the remaining $k$ items.
We have $\WM_1 = k^2$ and $\WM_2 = k$.
The picking sequence induced by the algorithm is $1,2,1,1,\dots,1$, so agent 2 only receives value $1$, which is $1/k < \epsilon$ of her $\WM$.  
In fact, even if we let agent 2 pick first, the picking sequence will be $2,1,1,\dots,1$.
In this case, agent 1 only receives value $k$, which is again $1/k< \epsilon$ of her $\WM$.
\end{proof}

\section{Experiments}\label{sec:expts}

\begin{figure}[!ht]
	\begin{center}
		\begin{tabular}{cc}
			\includegraphics[width=0.48\columnwidth]{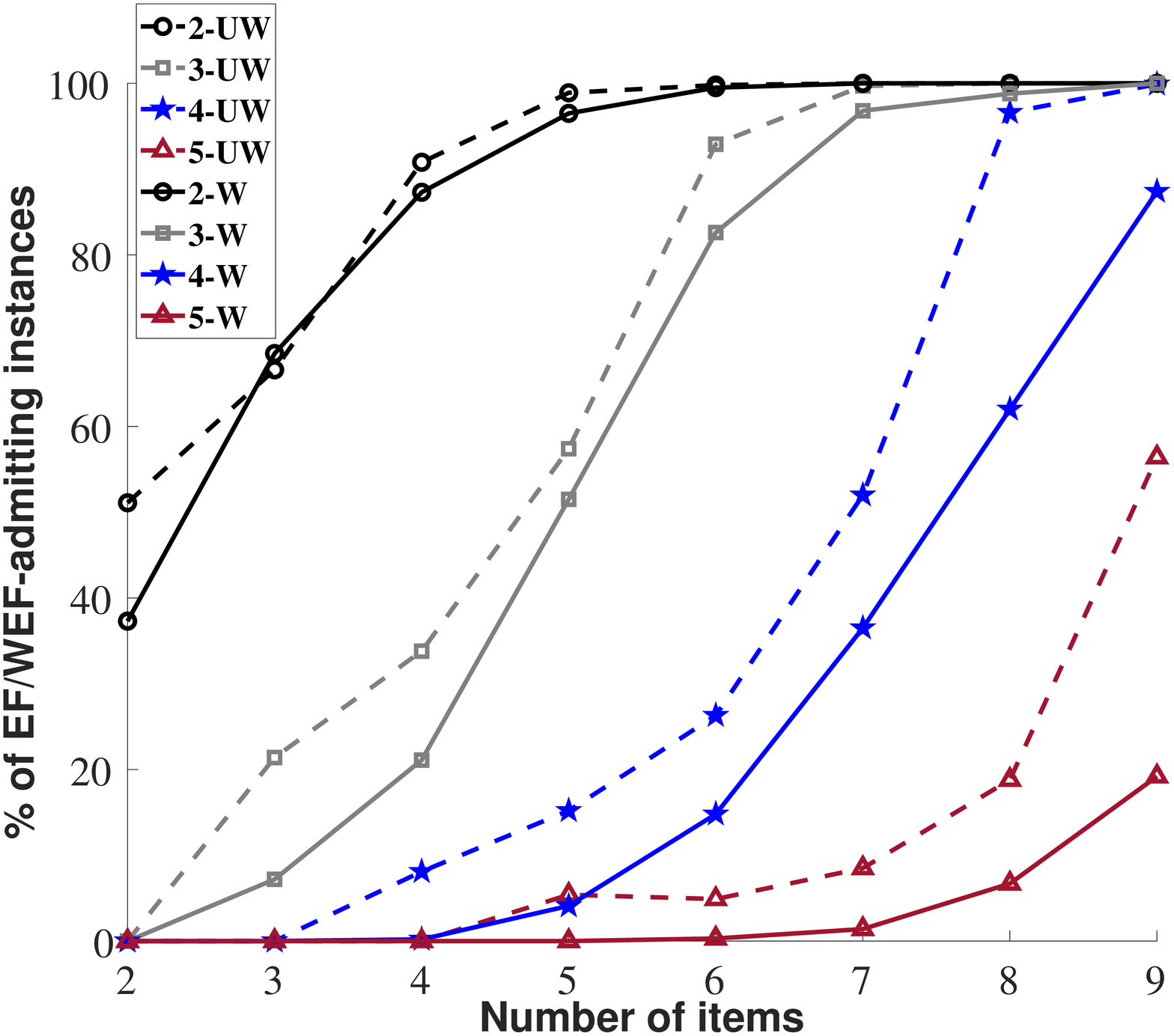} & \includegraphics[width=0.48\columnwidth]{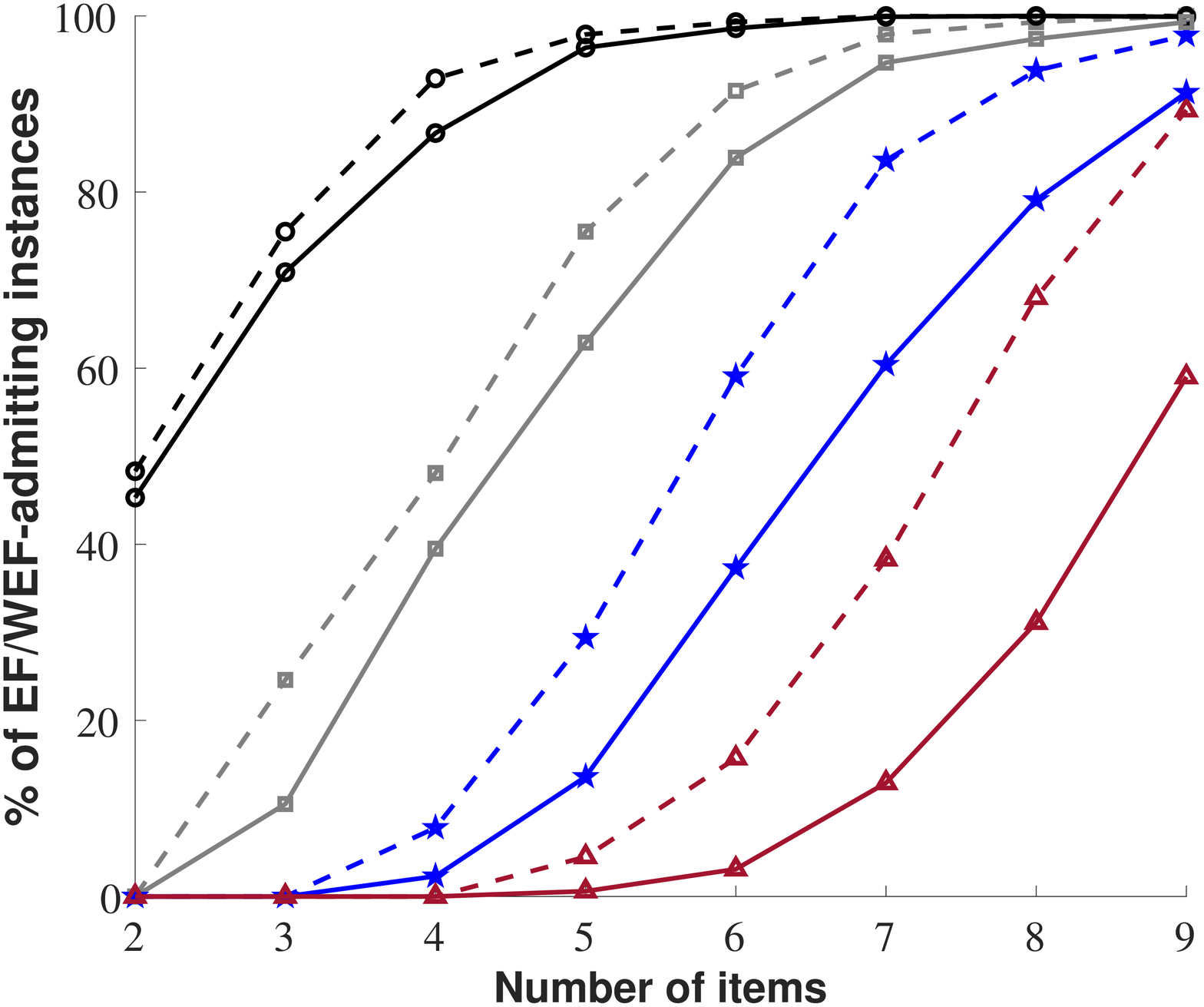}\\ 
			(a) Uniform distribution on $[0,1]$ & (b) Exponential distribution with mean $1$\\
			\includegraphics[width=0.48\columnwidth]{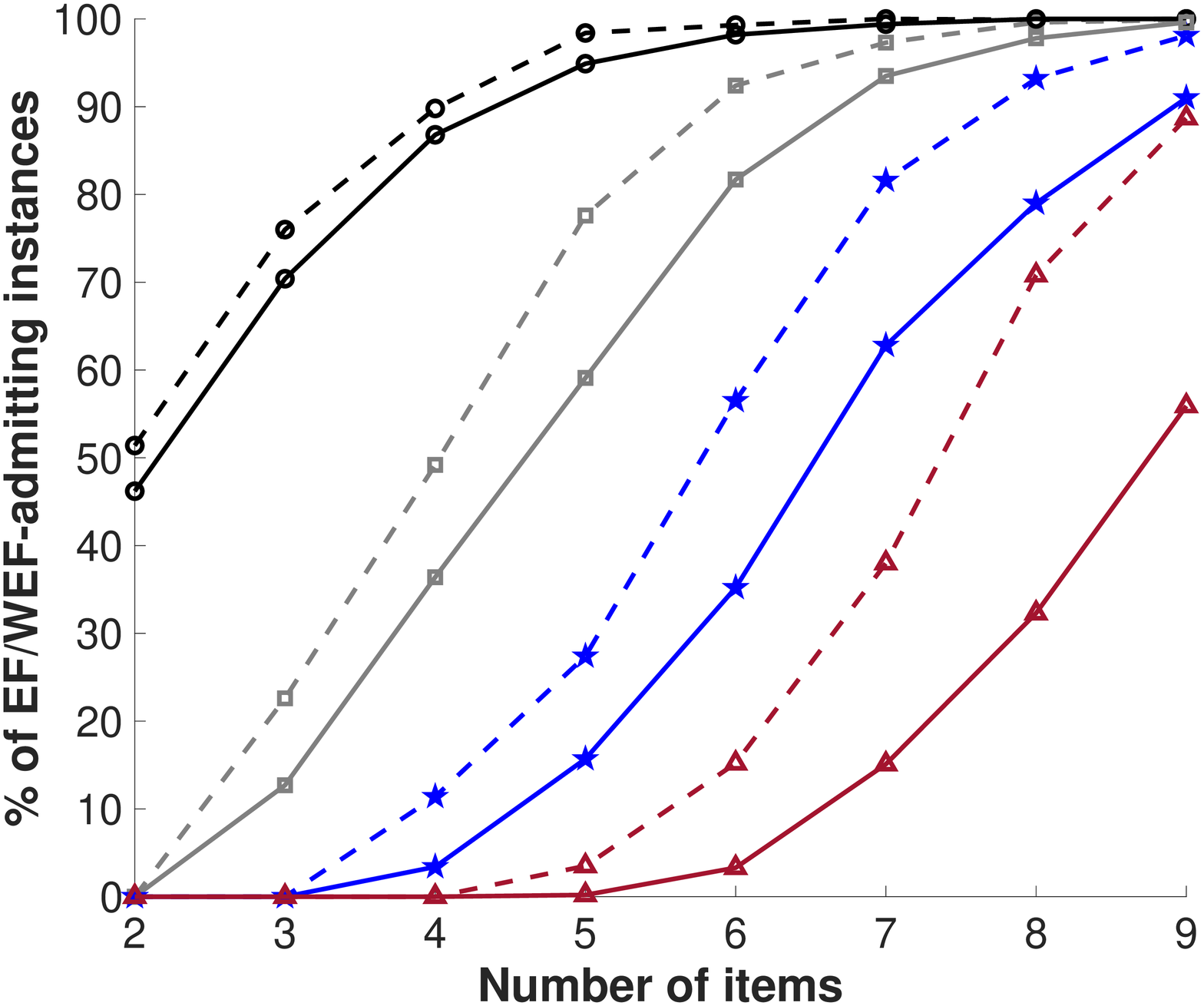} & \includegraphics[width=0.48\columnwidth]{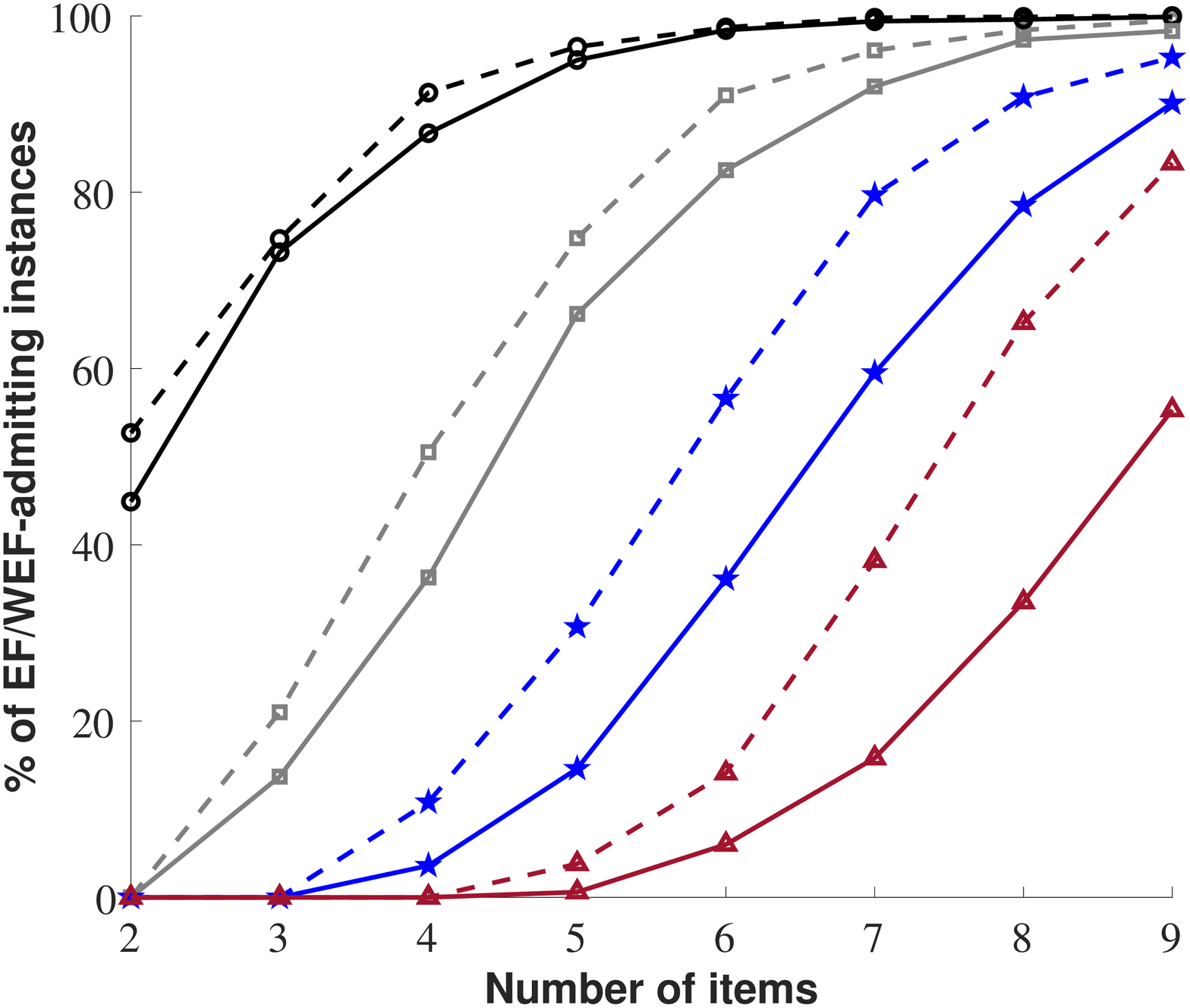}\\ 
			(c) Exponential distribution with mean $2$ & (d) Log-normal dist. with parameters $(0,1)$\\
			\includegraphics[width=0.48\columnwidth]{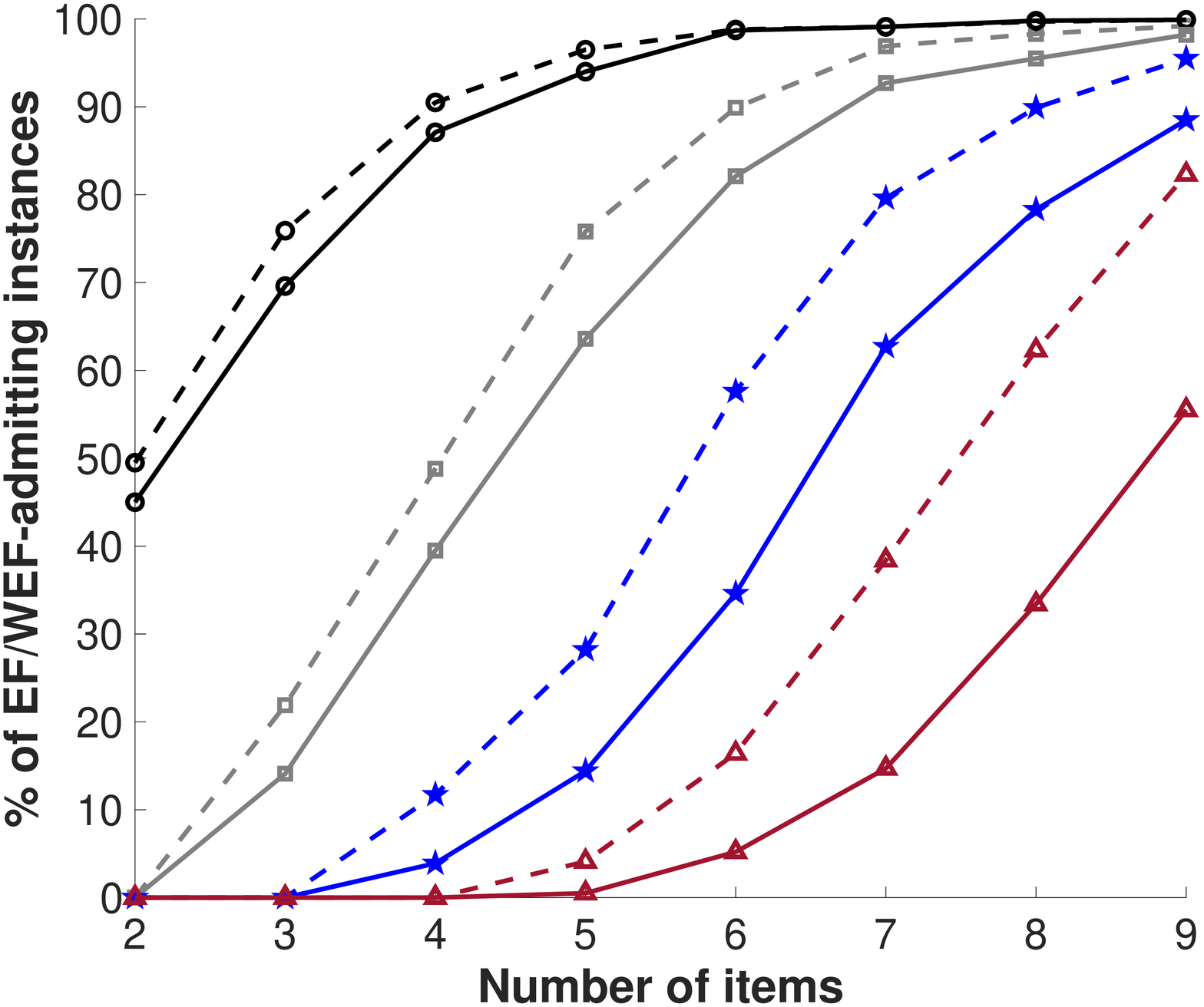} & \includegraphics[width=0.48\columnwidth]{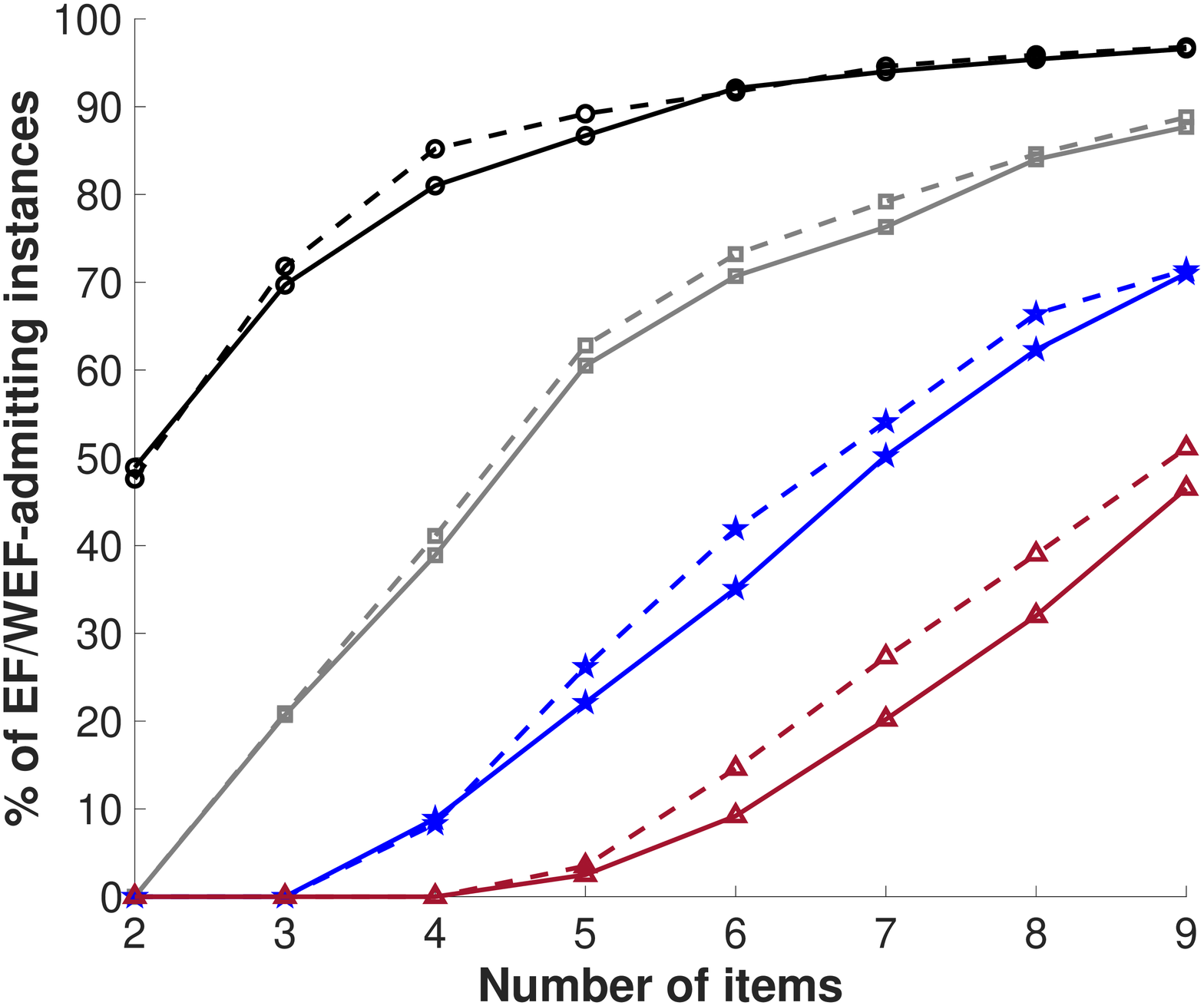}\\ 
			(e) Log-normal dist. with parameters $(1,1)$ & (f) Log-normal dist. with parameters $(0,2)$
		\end{tabular}
	\end{center}
	\caption{Percentages of instances that admit $\mathtt{EF}$ and $\wef$ allocations for different valuation distributions in our experiments; $n$-UW, depicted by dashed curves (resp., $n$-W, depicted by solid curves) refers to a scenario with $n$ agents with equal weights (resp., weights proportional to agent indices) for all figures. \label{fig:wef_expts_full}}
\end{figure}

Thus far, we have extensively investigated the existence and computational properties of approximations to \emph{weighted envy-freeness} ($\wef$). While the $\wef$ notion itself cannot always be satisfied with indivisible items, an interesting question is how ``likely'' it is for a problem instance with weighted agents to admit a $\wef$ allocation, and how the results compare to those for envy-freeness in the unweighted setting. 

In this section, we approach this question experimentally by generating sets of $1000$ instances with $n \in \{2,3,4,5\}$ agents and $m \in \{2,3,\dots,9\}$ items wherein each agent's value for each item is drawn independently from a distribution. 
We perform our experiments on three common families of distributions---uniform, exponential, and log-normal---and consider two different weight vectors: $w_i=1$ (unweighted) and $w_i=i$ (weighted) for every $i \in N$.
For each generated instance, we determine by exhaustive search over all allocations whether a $\wef$ allocation exists.
The results are shown in Figure~\ref{fig:wef_expts_full}; the main observations are as follows:
\begin{enumerate}
\item For each fixed distribution and number of agents/items, weighted envy-free allocations are almost always harder to find than their unweighted analogs. 
Intuitively, we need more items to satisfy agents with larger weights, and this is not sufficiently compensated by the items that we can save through agents with smaller weights.
\item The more items we have, the more likely it is that a fair allocation exists. 
This is to be expected, since more items means higher flexibility in choosing the allocation.
\item By contrast, the more agents there are, the less likely it is that a fair allocation exists. This is again reasonable, as we need to satisfy a larger number of preferences when there are more agents.
\item Fair allocations are rarer when the distribution is uniform than when it is skewed.
This observation aligns with the intuition that in a uniform distribution the values are more evenly distributed, so it is harder to find items for which one agent has high value whereas the remaining agents have low value.
\end{enumerate}
Our experimental results illustrate the difficulty of achieving weighted envy-freeness and further justify our quest for the (strong and weak) relaxations of the $\wef$ property.

\section{Discussion and Future Work}\label{sec:nonadd}
In this article, we have introduced and studied envy-based notions for the allocation of indivisible items in a general setting where agents can have different entitlements.
As most of our results hold for additive valuation functions, the reader may wonder whether they can be extended to more general classes---after all, in the absence of weights, an $\efi$ allocation is known to exist for arbitrary monotone valuations \cite{lipton2004approximately}.
We therefore point out some hurdles that we faced while trying to generalize our weighted envy concepts beyond additive valuations.
First, we show that even for simple non-additive valuations, the existence of a $\wefi$ or $\wwefi$ allocation can no longer be guaranteed.
Since $\wwefi$ is weaker than $\wefi$, it suffices to prove the claim for $\wwefi$.

\begin{proposition}\label{mwnw_notef1}
	There exists an instance with $n=2$ agents such that one of the agents has a (normalized and monotone) submodular valuation,\footnote{A valuation function $v:2^O\rightarrow\mathbb{R}_{\ge 0}$ is said to be \emph{submodular} if for any $O_1\subseteq O_2\subseteq O$ and any item $o\in O\setminus O_2$, we have $v(O_1\cup\{o\})-v(O_1) \geq v(O_2\cup\{o\})-v(O_2)$.} the other agent has an additive valuation, and a complete $\wwefi$ allocation does not exist.
\end{proposition}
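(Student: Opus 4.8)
The goal is to build a two-agent instance where no complete $\wwefi$ allocation exists, with agent~1 submodular and agent~2 additive. The plan is to exploit the weak-envy characterization for additive valuations together with the diminishing-returns property of a submodular function to box in both agents simultaneously. First I would set asymmetric weights, say $w_1 = 1$ and $w_2$ large (or vice versa), so that the $\min\{w_i, w_j\}$ term in the $\wwefi$ inequality behaves very differently depending on the direction of envy. The idea is to choose a small number of items---two or three should suffice---and make agent~1's valuation submodular in a sharp way: for instance, agent~1 values any single item highly but values any pair barely more than a single item (a ``near-unit-demand'' or capped valuation, which is submodular). Agent~2's additive valuation is then tuned so that whichever bundle agent~2 holds, agent~1 is forced into a large weak-weighted-envy that cannot be patched by removing or duplicating one item.

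The key steps, in order, are: (1) fix $m$ (I expect $m=3$ or $m=4$) and define agent~1's submodular valuation explicitly---e.g.\ $v_1(S) = \min\{|S|, 1\} \cdot \text{(large constant)}$ plus tiny additive perturbations to force submodularity to bite, or a budget-additive/capped-additive form; (2) define agent~2's additive valuation and the weights; (3) enumerate the finitely many complete allocations (for $m=3$ there are only $2^3 = 8$, and by symmetry far fewer to check) and for each one verify that some agent violates $\wwefi$. For the direction where agent~1 is the envier, I would use the fact that because $v_1$ is capped, $v_1(A_1 \cup \{o\}) \approx v_1(A_1)$ when $A_1$ is already nonempty, so the ``replicate an item'' escape route gives agent~1 essentially nothing; and $v_1(A_j \setminus \{o\})$ stays large because removing one item from a bundle agent~1 has capped value for also changes little---so both halves of the $\wwefi$ disjunction fail. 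For the direction where agent~2 envies, I would lean on the additive characterization $\frac{v_2(A_2)}{w_2} \ge \frac{v_2(A_1)}{w_1} - \frac{v_2(o)}{\min\{w_1,w_2\}}$ and make agent~2's values concentrated enough that no single item's removal/duplication closes the gap.

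The main obstacle I anticipate is the simultaneity: it is easy to defeat $\wwefi$ for one agent in one allocation, but one must ensure that in \emph{every} complete allocation \emph{some} agent's $\wwefi$ constraint fails, and the natural candidate allocation that rescues agent~1 (giving agent~1 a big share) must then be killed via agent~2, and vice versa. Getting the weights and the submodular cap to interlock so that these failures ``cover'' all $2^m$ allocations is the delicate part; I would likely need to experiment with small parameter choices and may find that $m=2$ is too small (only four allocations, and the cap has little room to act) so that $m=3$ with a carefully chosen capped valuation is the sweet spot. A secondary technical point is verifying submodularity of the chosen $v_1$---capped-additive functions of the form $v_1(S) = \min\{c, \sum_{o \in S} a_o\}$ are known to be submodular, so I would use that form to make this step routine rather than a case check.
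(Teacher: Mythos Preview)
Your high-level strategy---a capped/unit-demand submodular valuation for one agent, an additive valuation for the other, and asymmetric weights so that the submodular agent's $\wwefi$ constraint pins the allocation into a shape that then breaks the additive agent's $\wwefi$---is exactly what the paper does. The paper's construction is considerably cleaner than you anticipate: the submodular agent takes the pure coverage function $v(S)=1$ for every nonempty $S$ (the simplest non-trivial submodular function), the additive agent takes $v(S)=|S|$, and the weights are $1$ and $2$ with the submodular agent having the \emph{larger} weight. With this choice there is no enumeration or parameter tuning: the submodular agent's $\wwefi$ toward the weight-$1$ agent forces the weight-$1$ agent to hold at most one item, and then the additive agent's $\wwefi$ fails by a direct computation.

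One concrete concern with your plan: your expectation that $m=3$ or $m=4$ suffices is too optimistic. In the paper's construction one needs $m>5$. The reason is that once the additive agent (weight $1$) is forced down to a single item, her $\wwefi$ toward the weight-$2$ agent requires either $1 \ge (m-2)/2$ or $2 \ge (m-1)/2$; both fail only when $m\ge 6$. With $m\le 5$ the ``replicate one item'' escape actually rescues the additive agent. So if you commit to a tiny item set you will find the allocations do not all fail, and the ``experiment with small parameter choices'' step you flag as delicate would stall---the fix is simply to allow more items, not to complicate the valuations. In short: keep your capped-valuation idea but drop the perturbations and the small-$m$ target; the clean instance with $v_{\text{submod}}(S)=\mathbf{1}[S\neq\emptyset]$, $v_{\text{add}}(S)=|S|$, weights $(1,2)$, and $m\ge 6$ closes immediately.
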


\begin{proof}

Consider an instance with two agents who have weights $w_1=1$ and $w_2=2$, and suppose that there are $m > 5$ items.
The valuation functions are given by $v_1(S)=|S|$ and $v_2(S)=1$ for every $S \in 2^O \setminus \emptyset$, and  $v_1(\emptyset)=v_2(\emptyset)=0$.
The functions are normalized and monotone, with $v_1$ additive and $v_2$ submodular.

For any complete allocation $A$, if $|A_1|\ge 2$, we have $v_2(A_1\setminus \{o\})/w_1=1 > 1/2 \ge v_2(A_2)/w_2$ for every $o \in A_1$.
Moreover, it holds that $v_2(A_2\cup\{o\})/w_2 = 1/2 < 1 = v_2(A_1)/w_1$ for every $o\in A_1$. Thus, the only way to make agent $2$ weakly weighted envy-free up to one item towards agent $1$ is to ensure that $|A_1|\leq 1$. Assume without loss of generality that $A_1=\{o_1\}$ (if $A_1=\emptyset$, agent $1$ will be even worse off in the argument that follows), so $A_2=O\setminus \{o_1\}$.
We have $v_1(A_1)=1$ and $v_1(A_2)=|A_2|=m-1$. 
Since agent $1$ has an additive valuation and a smaller weight than agent $2$, she would be weakly weighted envy-free up to one item towards agent $2$ if and only if there is an item $o \in A_2$ such that $v_1(A_1 \cup \{o\})/w_1 \ge v_1(A_2)/w_2$.
However, for any $o \in A_2$, we have $v_1(A_1 \cup \{o\})/w_1 = v_1(\{o_1,o\})=2$, whereas $v_1(A_2)/w_2 = (m-1)/2 > 2$ since $m>5$.
This means that no complete allocation can be $\wwefi$.
\end{proof}

By increasing the lower bound on the number of items in the instance of this proof to $5c$, one can show that a complete $\wwefc$ allocation is also not guaranteed to exist for any constant $c$.

One of the key ideas in our analysis of the maximum weighted Nash welfare allocation (Theorem~\ref{thm:mwnw}) is what we call the \emph{transferability} property: If agent $i$ has weighted envy towards agent $j$ under additive valuations, then there is at least one item $o$ in $j$'s bundle for which agent $i$ has positive (marginal) valuation---in other words, the item $o$ could be transferred from $j$ to $i$ to augment $i$'s realized valuation.\footnote{Transferability and related properties have been studied by \citet{babaioff2020fair} and \citet{benabbou2020finding} in the context of $\efi$ and $\po$ allocations for a subclass of submodular valuations.} Unfortunately, this property no longer holds for non-additive valuations. 

\begin{proposition}\label{non_transfer}
	There exists an instance such that an agent $i$ with a non-additive valuation function has weighted envy towards an agent $j$ under some allocation $A$, but there is no item in $j$'s bundle for which $i$ has positive marginal valuation---i.e., $\not\exists o \in A_j$ such that $v_i(A_i \cup \{o\}) > v_i(A_i)$. 
\end{proposition}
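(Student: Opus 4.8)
The plan is to exhibit a small explicit instance with a single non-additive agent where envy is caused entirely by complementarities among the items in the envied agent's bundle. The natural candidate is a valuation that assigns value only to a specific pair (or larger set) of items held together: agent~$i$ values a bundle positively only if it contains \emph{both} of two designated items $o_1,o_2$, so that neither singleton has any marginal worth on top of $i$'s current holdings. Concretely, I would take $n=2$ (or more, but two suffices), let $O=\{o_1,o_2,o_3\}$, and define $v_i(S)=1$ if $\{o_1,o_2\}\subseteq S$ and $v_i(S)=0$ otherwise; this is normalized and monotone (it is not submodular, but the proposition only asks for ``non-additive,'' so that is fine---alternatively one can pad it to be submodular if desired). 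Agent~$j$'s valuation is irrelevant to the statement and can be anything, say additive with $v_j(o)=1$ for all $o$.

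Next I would choose the allocation. Let $A_i=\{o_3\}$ and $A_j=\{o_1,o_2\}$, and pick the weights so that weighted envy of $i$ towards $j$ holds: since $v_i(A_i)=0$ while $v_i(A_j)=1$, we have $\frac{v_i(A_j)}{w_j}=\frac{1}{w_j}>0=\frac{v_i(A_i)}{w_i}$ for \emph{any} positive weights, so in fact no special weight choice is needed---taking $w_i=w_j=1$ already gives positive weighted envy (indeed even ordinary envy). Then I would check the no-transfer condition: for each $o\in A_j=\{o_1,o_2\}$, the bundle $A_i\cup\{o\}$ equals $\{o_3,o_1\}$ or $\{o_3,o_2\}$, neither of which contains both $o_1$ and $o_2$, so $v_i(A_i\cup\{o\})=0=v_i(A_i)$. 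Hence there is no item in $j$'s bundle for which $i$ has positive marginal valuation, which is exactly what the proposition asserts.

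There is essentially no obstacle here---the whole point of the proposition is that the ``transferability'' lemma underpinning the $\MNW$ analysis fails once complementarities are allowed, and a two-item complementarity is the cleanest witness. The only things to be careful about are (i) stating clearly that $v_i$ is normalized and monotone, (ii) noting explicitly that $i$ does have (weighted) envy toward $j$ under $A$, and (iii) verifying the marginal-value computation for every single item of $A_j$, not just one. If one wishes the example to be more robust, I would remark that replacing $\{o_1,o_2\}$ by a larger required set, or adding more withheld items to $A_i$, changes nothing in the argument; and if a submodular witness is preferred, one can instead use a coverage-type valuation that still has zero marginal value for each individual item of $A_j$ given $A_i$, at the cost of a slightly longer definition.
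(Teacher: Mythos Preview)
Your construction is correct and establishes the proposition. It differs from the paper's proof, which reuses the instance from Proposition~\ref{mwnw_notef1}: there the non-additive agent (agent~$2$) has the unit-demand-style valuation $v_2(S)=1$ for every nonempty $S$, and under an allocation with $|A_1|=m-1$, $|A_2|=1$ and weights $w_1=1$, $w_2=2$, agent~$2$ weighted-envies agent~$1$ while every item in $A_1$ has zero marginal value for her because she already holds an item.

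The contrast is instructive. Your example breaks transferability via \emph{complementarities} (supermodularity): the envying agent needs both items together, so neither alone helps. The paper's example breaks it via \emph{substitutes} (submodularity): the envying agent is already ``saturated'' and gains nothing from any additional item. Your construction is arguably more direct and works with equal weights, but the paper's version makes the stronger point that transferability fails even within the submodular class---which is relevant because your parenthetical remark about padding to submodularity would not actually work for a pure pair-complementarity valuation (that function is strictly supermodular). If you want a submodular witness, the paper's unit-demand example is the natural one.
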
 
\begin{proof}
	Consider the example in Proposition~\ref{mwnw_notef1}. Under any allocation with $|A_1|=m-1$ and $|A_2|=1$, agent $2$ has weighted envy towards agent $1$ since $v_2(A_2)=1/2<1=v_2(A_1)/w_1$. However, $v_2(A_2 \cup \{o\}) = 1 = v_2(A_2)$ for every $o \in A_1$.
\end{proof}

In light of these negative results, an important direction for future research is to identify appropriate weighted envy notions for non-additive valuations.
Other interesting directions include establishing conditions under which $\wef$ allocations are likely to exist (cf. Section~\ref{sec:expts}),\footnote{This has been done in the unweighted setting \citep{dickerson2014computational,manurangsi2019when,manurangsi2020closing}} investigating weighted envy in the allocation of \emph{chores} (items with negative valuations) \citep{aziz2019weighted} or combinations of goods and chores \citep{AzizCaIg19,aziz2019polynomial}, incorporating connectivity constraints \citep{BouveretCeEl17,bilo2019almost,BeiIgLu21}, and considering weighted versions of other envy-freeness approximations such as \emph{envy-freeness up to any item (EFX)} \citep{caragiannis2016unreasonable,plaut2018almost}.
From a broader point of view, our work demonstrates that fair division with different entitlements is richer and more challenging than its traditional counterpart in several ways, and much interesting work remains to be done.

\section*{Acknowledgments}
This work was partly done while Chakraborty and Zick were at the National University of Singapore, Igarashi was at the University of Tokyo, and Suksompong was at the University of Oxford.
Chakraborty and Zick were supported by the Singapore NRF Research Fellowship R-252-000-750-733. Igarashi was supported by the KAKENHI Grant-in-Aid for JSPS Fellows
no. 18J00997 and JST, ACT-X. 
Suksompong was supported by the European Research Council (ERC) under grant no.
639945 (ACCORD); his visit to NUS was supported by MOE Grant R-252-000-625-133.
Preliminary versions of the article appeared in Proceedings of the 19th International Conference on Autonomous Agents and Multiagent Systems, May 2020, and Proceedings of the 2nd Games, Agents, and Incentives Workshop, May 2020.
We would like to thank the associate editor and the anonymous reviewers for several helpful comments.

\bibliographystyle{ACM-Reference-Format}
\bibliography{abb,main}

\appendix

\section{Proof of Theorem \ref{thm:POwEF1}}\label{app:thmproofs}

\subsection{Preliminaries}
Before we proceed to present the algorithm formally, we introduce the following definitions and notations. We write the weighted version of the least spending as 
\[
\wmin(A,\bfp):=\min_{i \in N} \frac{1}{w_i} p(A_i). 
\]
We call $i \in N$ with $\frac{1}{w_i}p(A_i)=\wmin(A,\bfp)$ a {\em least weighted spender}. Also, we say that $j$ is a {\em violator} if some agent weighted-price-envies $j$ by more than one item, i.e., $A_j \neq \emptyset$ and
$$
\min_{o \in A_j} \frac{1}{w_j} p(A_j \setminus \{o\}) > \wmin(A,\bfp).
$$
Given $\epsilon \in (0,1)$, we say that $j$ is an {\em $\epsilon$-violator} if $A_j \neq \emptyset$ and
$$
\min_{o \in A_j} \frac{1}{w_j} p(A_j \setminus \{o\}) > (1+\epsilon)\wmin(A,\bfp).
$$
We write the maximum value of the left-hand side as 
\[
\wmax(A,\bfp):=\max_{j \in N: A_j \neq \emptyset} \min_{o \in A_j} \frac{1}{w_j} p(A_j \setminus \{o\}). 
\]
The pair $(A,\bfp)$ is said to be {\em $\epsilon$-weighted price EF1} ($\epsilon$-$\wpefi$) if no agent is an $\epsilon$-violator. 

Now, we define the {\em maximum bang per buck (MBB) network} that represents how items can be exchanged among agents without losing Pareto optimality. 
For an allocation $A$ and a price vector $\bfp$, we define the MBB network $D(A,\bfp)$ to be a directed graph where the vertices are given by the agents $N$ and items $O$, and the arcs are given as follows: 
\begin{itemize}
\item there is an arc from agent $i$ to item $o$ if $o \in A_i$; and  
\item there is an arc from item $o$ to agent $i$ if $o \in \MBB_i(\bfp) \setminus A_i$.
\end{itemize}
We say that $j \in N \cup O$ can {\em reach} $i \in N \cup O$ if there is a directed path from $j$ to $i$ in $D(A,\bfp)$. For $X \subseteq N$, we write $\MBB(\bfp,X)=\bigcup_{i \in X}\MBB_i(\bfp)$.

For a directed path $P$ starting from agent $j$ in a MBB network, we denote by $o(j,P)$ the item owned by $j$ on the path $P$, that is, $(j,o(j,P)) \in P$.  
For each agent $i \in N$, we say that $j \in N$ is an {\em $\epsilon$-path-violator} for $i$ if there is a directed path $P$ from $j$ to $i$ in $D(A,\bfp)$ and $j$'s weighted spending is greater than the $\epsilon$-approximate weighted spending of $i$ even after removing $o(j,P)$ from $j$'s bundle, i.e., 
$$
\frac{1}{w_j} p(A_j \setminus \{o(j,P)\}) > \frac{1+\epsilon}{w_i}p(A_i).
$$

\citet{barman2018finding} showed that the number of ``swap operations'', which reassign items along paths, is bounded by a polynomial in the input size based on a potential function argument. 
Like Barman et al., we define a potential function $\Phi_{i}$ with respect to each agent $i$. 
Let $i \in N$ be a ``root agent''. For each $j \in N \setminus \{i\}$, we define the {\em level} of $j$, denoted by $h_i(j,A,\bfp)$, to be half of the length of a shortest path from $j$ to $i$ if there is a directed path from $j$ to $i$ in $D(A,\bfp)$, and $n-1$ otherwise. 
We say that $o \in A_j$ is $j$'s {\em critical item} with respect to $i$ if there is a shortest path $P$ in $D(A,\bfp)$ from $j$ to $i$ where $o=o(j,P)$; we let $C_i(j,A,\bfp)$ be the set of critical items of $j$ with respect to $i$. We now define our potential function $\Phi_i$. For each pair of allocation $A$ and price vector $\bfp$, we let
\begin{equation}\label{eq:potential}
\Phi_{i}(A,\bfp):=\sum_{j \in N \setminus \{i\}} g_i(j,A,\bfp),
\end{equation}
where $g_i(j,A,\bfp):=m(n- h_i(j,A,\bfp)) + |C_{i}(j,A,\bfp)|$ for each $j \in N \setminus \{i\}$. It is easy to see that $\Phi_i$ is always non-negative and bounded above by $mn^2$ because $1 \leq h_i(j,A,\bfp) \leq n-1$ and $0 \leq |C_{i}(j,A,\bfp)| \leq m$. Intuitively, $g_i(j,A,\bfp)$ lexicographically orders allocations under prices $\bfp$: it decreases when $(1)$ agent $j$ gets further away from $i$ in the MBB network, or $(2)$ the distance between $i$ and $j$ does not change but the number of critical items possessed by $j$ gets smaller. This leads to the following lemma: each reassignment of a critical item to an agent who is closer to the root agent results in a decrease in the potential function \citep{barman2018finding}. 

\begin{lemma}[Proof of Lemma $13$ in the extended version of \citep{barman2018finding}]\label{lem:potential}
Given an allocation $A$ and a price vector $\bfp$, suppose that $P=(j,o_1,i_1,\ldots,o_k,i)$ is a shortest path from $j$ to $i$ in $D(A,\bfp)$. Let $A'$ be the allocation resulting from reassigning $o_1$ from $j$ to $i_1$, i.e., $A'_j=A_j \setminus \{o_1\}$, $A'_{i_1}=A_{i_1} \cup \{o_1\}$, and $A'_{i'}=A_{i'}$ for all $i' \in N \setminus \{j,i_1\}$. Then $\Phi_{i}(A,\bfp)-1 \geq \Phi_{i}(A',\bfp)$. 
\end{lemma}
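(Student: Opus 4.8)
The plan is to follow, essentially verbatim, the combinatorial argument of \citet{barman2018finding} for their Lemma~13: the weights $w_\ell$ enter neither the definition of the MBB network $D(A,\bfp)$ nor that of the potential $\Phi_i$ --- both depend only on $A$, $\bfp$, and the sets $\MBB_\ell(\bfp)$ --- so for this lemma the weighted setting is indistinguishable from the unweighted one. I first record that, since $\bfp$ is unchanged, the MBB sets $\MBB_\ell(\bfp)$ are unchanged, so $D(A',\bfp)$ differs from $D(A,\bfp)$ only locally: the arcs $(j,o_1)$ and $(o_1,i_1)$ are deleted while $(i_1,o_1)$ --- and, if $o_1\in\MBB_j(\bfp)$, also $(o_1,j)$ --- are added; every other arc is untouched. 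Write $2k$ for the length of $P$, so $h_i(j,A,\bfp)=k$ and the subpath $i_1\to o_2\to\cdots\to o_k\to i$ of $P$ certifies $h_i(i_1,A,\bfp)=k-1$.

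The crux is to show that the reassignment never brings any agent strictly closer to the root $i$: $h_i(\ell,A',\bfp)\ge h_i(\ell,A,\bfp)$ for all $\ell\in N\setminus\{i\}$. I would argue by choosing a hypothetical counterexample that minimizes the $A'$-distance to $i$ and inspecting the first arc of its shortest $A'$-path. If that arc already lies in $D(A,\bfp)$, minimality applied to the next vertex on the path yields an immediate contradiction. If it is one of the two new arcs, $(i_1,o_1)$ or $(o_1,j)$, the contradiction comes from the shortest-path property of $P$: since $d_A(j,i)=2k$, every agent $\ell$ with $o_1\in\MBB_\ell(\bfp)\setminus A_\ell$ satisfies $d_A(\ell,i)\ge 2(k-1)$ (otherwise $j\to o_1\to\ell\to\cdots\to i$ would beat $2k$), and this lower bound, combined with the minimality-based inequality $d_{A'}(\ell,i)\ge d_A(\ell,i)$, forces the candidate $A'$-path to be at least as long as the original $A$-distance. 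Two by-products I would extract along the way: $h_i(i_1,A',\bfp)=k-1$, and $o_1$ is not a critical item of $i_1$ in $A'$, because every $A'$-path leaving $o_1$ has length at least $2k-1$.

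Next I would show that no critical-item set grows, except that $j$ is forced to shed $o_1$: for every $\ell\ne j$ whose level is unchanged, $C_i(\ell,A',\bfp)\subseteq C_i(\ell,A,\bfp)$, and if $h_i(j,A',\bfp)=h_i(j,A,\bfp)$ then $C_i(j,A',\bfp)\subseteq C_i(j,A,\bfp)\setminus\{o_1\}$ (the containment $o_1\notin C_i(j,A',\bfp)$ being trivial, as $o_1\notin A'_j$). This is a path-surgery argument: given a shortest $A'$-path from $\ell$ whose first item is a fixed $o'$, if the path uses the arc $(i_1,o_1)$ we reroute it by replacing the portion from $i_1$ onward with the subpath $i_1\to o_2\to\cdots\to i$ of $P$, whose length $2(k-1)$ does not exceed that of the replaced portion; the rerouted path avoids $o_1$, hence lies in $D(A,\bfp)$, keeps the first item $o'$, and is no longer than before, so $o'$ is critical for $\ell$ in $A$ as well.

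Finally I would assemble the estimate using the ``two-digit base-$m$'' structure of $g_i$ (valid because $1\le h_i\le n-1$ and $0\le|C_i|\le m$). An agent whose level strictly increases contributes $g_i(\ell,A,\bfp)-g_i(\ell,A',\bfp)\ge m \cdot 1 - m = 0$; an agent whose level is unchanged contributes $|C_i(\ell,A,\bfp)|-|C_i(\ell,A',\bfp)|\ge 0$ by the previous paragraph. For $j$ the drop is strict: if $j$'s level is unchanged it loses $o_1$ from its critical set, giving a drop of at least $1$; if $j$'s level rises, then $g_i(j,A,\bfp)-g_i(j,A',\bfp)\ge m-|C_i(j,A',\bfp)|\ge m-(|A_j|-1)\ge m-(m-1)=1$, using $|C_i(j,A',\bfp)|\le|A'_j|=|A_j|-1\le m-1$. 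Summing over all $\ell\in N\setminus\{i\}$ yields $\Phi_i(A,\bfp)-\Phi_i(A',\bfp)\ge 1$, as required. I expect the no-shortening claim to be the delicate step, since distances occur on both sides of the desired inequality; the minimal-counterexample framing together with the shortest-path property of $P$ is what dissolves the apparent circularity.
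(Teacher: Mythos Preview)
The paper does not supply its own proof of this lemma: it is stated with the attribution ``Proof of Lemma~13 in the extended version of \citep{barman2018finding}'' and used as a black box. Your proposal is therefore not being compared against an in-paper argument but against the cited one, and you correctly identify the key point---that neither $D(A,\bfp)$ nor $\Phi_i$ depends on the weights $w_\ell$, so the unweighted proof of Barman et al.\ applies verbatim. Your reconstruction of that proof (local description of the arc changes, the minimal-counterexample argument that levels never drop, the path-surgery containment for critical sets, and the base-$m$ bookkeeping for $g_i$) is sound; in particular, your handling of the case where $j$'s level increases, via $|C_i(j,A',\bfp)|\le |A'_j|\le m-1$, is correct and cleanly closes the estimate.
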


\subsection{Algorithm}
We are now ready to present the algorithm (Algorithm \ref{alg:POwEF1}). To bound the number of steps, we assume for the time being that the input valuations as well as weights are integral powers of some positive value $(1+\epsilon)$ with $\epsilon\in(0,1)$; later, we will show that for the $\epsilon$-rounded version of a given instance, the algorithm returns an allocation that is Pareto optimal and $\wefi$ for the original instance if $\epsilon$ is small enough. 

\begin{algorithm}                      
\caption{Algorithm for constructing a $\po$ and $\wefi$ allocation}         
\label{alg:POwEF1}                          
\begin{algorithmic}[1]
\REQUIRE For each $o \in O$, $\exists i \in N$ with $v_i(o)>0$ (so that initial prices are positive). 
The valuations are integral powers of $(1+\epsilon)$ for some $\epsilon\in(0,1)$ (i.e., for each $o \in O$ and $i \in N$, there exist integers $a_{io}\geq 0$ and $b_i\geq 0$ such that $v_i(o)=(1+\epsilon)^{a_{io}}$ and $w_i=(1+\epsilon)^{b_i}$).  
\STATE $p_o \leftarrow \max_{i \in N} v_i(o)$ for each $o \in O$. 
\STATE Assign each item $o$ to an agent who values it most. 
\STATE Initialize $i^* \in \argmin_{i \in N} \frac{1}{w_i} p(A_i)$.
\WHILE{$A$ is not $3\epsilon$-$\wpefi$ with respect to $\bfp$} \label{line:bigwhile}
\IF{$i^*$ is not a least weighted spender}
\STATE Update $i^* \in \argmin_{i \in N} \frac{1}{w_i} p(A_i)$.\label{line:weightedleastspender}
\ENDIF\\
\textbf{\color{red}  /*Swap phase*/}
\IF{there is an $\epsilon$-path-violator for $i^*$ in $D(A,p)$}\label{line:swap}
\STATE Choose an $\epsilon$-path-violator $j$ who has a shortest path $P=(i_0,o_1,i_1,\ldots,o_k,i_k)$ where $i_0=j$ and $i_k=i^*$ in $D(A,\bfp)$, i.e., $\frac{1}{w_{i_0}}p(A_{i_0} \setminus \{o_{1}\}) > \frac{1+\epsilon}{w_{i_k}}p(A_{i_k})$ and $\frac{1}{w_{i_t}}p(A_{i_t} \setminus \{o_{t+1}\}) \leq \frac{1+\epsilon}{w_{i_k}}p(A_{i_k})$ for $t=1,2,\ldots,k-1$.
\STATE $h \leftarrow 0$
\WHILE{$\frac{1}{w_{i_h}}p(A_{i_h} \setminus \{o_{h+1}\}) > \frac{1+\epsilon}{w_{i^*}}p(A_{i^*})$ and $h \le k-1$}\label{line:swap1}
\STATE $A_{i_h} \leftarrow A_{i_{h}} \setminus \{o_{h+1}\}$ and $A_{i_{h+1}} \leftarrow A_{i_{h+1}} \cup \{o_{h+1}\}$
\textbf{/*apply the swap operation as long as agent $i_h$ is an $\epsilon$-path-violator*/} \label{line:transfer-item}
\STATE $h \leftarrow h+1$
\ENDWHILE\label{end:swap}\\
\hspace{-4mm} \textbf{\color{red}/*Price-rise phase*/}
\ELSE
\STATE Set $I$ to be the set of agents who can reach $i^*$ in $D(A,\bfp)$ (the set $I$ includes $i^*$). \label{line:price-rise}
\STATE $x \leftarrow \frac{\wmax(A,\bfp)}{\wmin(A,\bfp)}$\label{line:x}
\STATE $y \leftarrow \min \left\{\, \frac{\alpha_i(\bfp) p_o}{v_i(o)} \,\middle|\, i \in I \land o \in O \setminus \MBB(\bfp,I) \,\right\}$\label{line:y}
\STATE $z \leftarrow \min \left\{\, \frac{p(A_j)}{w_j\wmin(A,\bfp)} \,\middle|\, j \in N \setminus I\,\right\}$\label{line:z}
\IF{$\wmin(A,\bfp)>0$ and $x=\min \{x,y,z\}$}
\STATE $\gamma \leftarrow x$\label{line:price-rise:x}
\textbf{/*raise prices of items in $\MBB(\bfp,I)$ until $A$ becomes $3\epsilon$-$\wpefi$*/}\\
\ELSIF{$\wmin(A,\bfp)=0$ or $y=\min \{x,y,z\}$}\label{line:minmax}
\STATE $\gamma \leftarrow y$\label{line:price-rise:y}
\textbf{/*raise prices of items in $\MBB(\bfp,I)$ until a new MBB edge between $I$ and $O \setminus \MBB(\bfp,I)$ appears*/}\\
\ELSE 
\STATE \textbf{/*$\wmin(A,\bfp)>0$ and $z=\min \{x,y,z\}$*/}
\STATE $\gamma \leftarrow (1+\epsilon)$ \label{line:price:z}
\textbf{/*raise prices of items in $\MBB(\bfp,I)$ by a factor of $(1+\epsilon)$ if some agent outside $I$ becomes the least weighted spender when raising prices by $x$ or $y$*/}
\ENDIF
\STATE $p_o \leftarrow \gamma p_o$ for each $o \in \MBB(\bfp,I)$\label{line:gamma}
\ENDIF
\ENDWHILE
\RETURN $A$
\end{algorithmic}
\end{algorithm}

\subsubsection{Overview of Algorithm \ref{alg:POwEF1}}
Initially, the algorithm allocates each item to an agent who values it most, and sets the price of each item to the value that the assigned agent derives from the item. At this point, the bang per buck ratio of each item for the assigned agent is $1$, and agents receive their MBB items only. The initial prices are positive if we assume that for each item, there is an agent who has a positive value for it (any item that has zero value to all agents can be thrown away without loss of generality). The algorithm then iterates between a {\em swap phase} (from Line \ref{line:swap} to Line \ref{end:swap}) and a {\em price-rise phase} (from Line \ref{line:price-rise} to Line \ref{line:gamma}).

In the  {\em swap phase}, the algorithm transfers items along a path to the least weighted spender from a corresponding $\epsilon$-path-violator, reducing inequality in the weighted spending under the fixed prices. The transfer continues as long as the agent who gets a new item is an $\epsilon$-path-violator (Line \ref{line:swap1}). If no item can be transferred from a violator to the least weighted spender in the current MBB network (namely, the least weighted spender has no $\epsilon$-path-violator), the algorithm enters the price-rise phase. 

In the {\em price-rise phase}, we aim to increase reachability between the least weighted spender and items possessed by violators. To this end, the algorithm uniformly raises prices of MBB items $\MBB(\bfp,I)=\bigcup_{i \in I}\MBB_i(\bfp)$ for the set $I$ of agents who can reach the least weighted spender $i^*$. Specifically, as long as $i^*$ remains the least weighted spender, the algorithm raises prices until one of the following holds:  
\begin{itemize}
\item $\wmin(A,\bfp)$ exceeds $\wmax(A,\bfp)$ (in the case where the least weighted spending is positive and $x=\min\{x,y,z\}$); or
\item a new MBB edge appears between the agents in $I$ and the items in $O \setminus \MBB(\bfp,I)$ (in the case where the least weighted spending is $0$ or $y=\min\{x,y,z\}$),
\end{itemize}
where $x$, $y$, and $z$ are the multiplicative factors defined in Line \ref{line:x}, \ref{line:y}, and \ref{line:z}, respectively.
Note that as all prices are positive, and $O \setminus \MBB(\bfp,I)$ and $N \setminus I$ are nonempty (we will prove this in Lemma~\ref{lem:MBB:price}), $y$ and $z$ are well-defined.
If the identity of $i^*$ changes when increasing prices by a factor of $x$ or $y$ (i.e., the least weighted spending is positive and $z=\min\{x,y,z\}$), then the algorithm raises prices by $(1+\epsilon)$. The scaling with $x$ ensures that the resulting outcome is $3\epsilon$-$\wpefi$ (see the subsequent proof for Corollary \ref{cor:x:termination}). Thus, up until the final step, the algorithm continues to raise prices by either $y$ or $(1+\epsilon)$, both of which are powers of $(1+ \epsilon)$ during the execution of the algorithm. By doing so, the algorithm keeps the invariant that the prices are powers of $(1+\epsilon)$. We will argue that $\wmax(A,\bfp)$ never increases while $\wmin(A,\bfp)$ increases by a factor of $(1+\epsilon)$ after a polynomial number of steps, thereby showing that the difference between the value of $\wmin(A,\bfp)$ and that of $\wmax(A,\bfp)$ becomes negligibly small after a finite number of steps.

\subsubsection{Swap phase}
First, consider the swap phase starting with Line \ref{line:swap} and ending with Line \ref{end:swap} of Algorithm \ref{alg:POwEF1}. Suppose that $P=(i_0,o_1,i_1,\ldots,o_k,i_k)$ is the shortest path selected by the algorithm from the $\epsilon$-path-violator $j=i_0$ to the least weighted spender $i^*=i_k$. Let $A'$ be the resulting allocation after swapping items along the path $P$ by applying Lines \ref{line:swap1}--\ref{end:swap}; see Figure \ref{fig:swap} for an illustration. An immediate observation is that the path-violator loses at least one item after the swap:

\begin{lemma}\label{lem:pathviolator}
$A'_{i_0} \subsetneq A_{i_0}$.
\end{lemma}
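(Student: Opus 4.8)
The statement to prove is Lemma~\ref{lem:pathviolator}: after the swap phase along the shortest path $P=(i_0,o_1,i_1,\dots,o_k,i_k)$ from the $\epsilon$-path-violator $j=i_0$ to the least weighted spender $i^*=i_k$, we have $A'_{i_0}\subsetneq A_{i_0}$. The plan is to unwind what the inner \textbf{while} loop (Lines~\ref{line:swap1}--\ref{end:swap}) actually does on its very first iteration, and observe that the loop condition is satisfied at $h=0$, so that at least one item---namely $o_1$---gets transferred away from $i_0$.

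**Key steps.** First I would recall the selection criterion for the path $P$: the algorithm chose $j=i_0$ to be an $\epsilon$-path-violator for $i^*$, which by definition means there is a directed path from $i_0$ to $i^*$ in $D(A,\bfp)$ and
\[
\frac{1}{w_{i_0}}\,p(A_{i_0}\setminus\{o_1\}) \;>\; \frac{1+\epsilon}{w_{i_k}}\,p(A_{i_k}),
\]
where $o_1 = o(i_0,P)$ is the item owned by $i_0$ on $P$. Since $i_k=i^*$, the right-hand side is exactly $\frac{1+\epsilon}{w_{i^*}}p(A_{i^*})$. Second, I would look at the inner loop guard at $h=0$: it tests precisely whether $\frac{1}{w_{i_0}}p(A_{i_0}\setminus\{o_1\}) > \frac{1+\epsilon}{w_{i^*}}p(A_{i^*})$ and whether $h\le k-1$. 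The first conjunct holds by the $\epsilon$-path-violator property just stated; the second holds because $k\ge 1$ (the path is nontrivial---$i_0\ne i_k$ since $i_0$ weighted-price-envies $i_k$, so the path has length at least $2$, i.e.\ $k\ge 1$), so $0 \le k-1$. Third, I would conclude that the loop body executes at least once, and on that first execution Line~\ref{line:transfer-item} performs $A_{i_0}\leftarrow A_{i_0}\setminus\{o_1\}$ (together with $A_{i_1}\leftarrow A_{i_1}\cup\{o_1\}$). Since $i_0\ne i_1$ (again because $P$ is a simple shortest path, and $k\ge 1$), no subsequent iteration of the loop ever adds an item back to $i_0$'s bundle: the loop only transfers $o_{h+1}$ from $i_h$ to $i_{h+1}$, and the indices $i_0,i_1,\dots,i_k$ are distinct. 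Therefore $o_1 \in A_{i_0}\setminus A'_{i_0}$, while $A'_{i_0}\subseteq A_{i_0}$, giving the strict containment $A'_{i_0}\subsetneq A_{i_0}$.

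**Expected main obstacle.** This lemma is essentially bookkeeping, so there is no deep obstacle; the only subtlety to get right is the justification that $k\ge 1$ (equivalently, that the path is nontrivial and $i_0$ is distinct from every other vertex on it). For this I would point out that $i_0$ being an $\epsilon$-path-violator for $i^*$ forces $i_0\ne i^*$: if $i_0=i^*=i_k$, the displayed inequality would read $\frac{1}{w_{i_0}}p(A_{i_0}\setminus\{o_1\}) > \frac{1+\epsilon}{w_{i_0}}p(A_{i_0}) \ge \frac{1+\epsilon}{w_{i_0}}p(A_{i_0}\setminus\{o_1\})$, which is impossible since $\epsilon>0$ and prices are nonnegative. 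Hence the shortest path from $i_0$ to $i^*$ has at least one intermediate item, i.e.\ $k\ge 1$, and being a shortest path in the bipartite-structured MBB network it visits distinct agents, so $i_0\notin\{i_1,\dots,i_k\}$. I would also briefly note, for later use, that the loop terminates (its counter $h$ strictly increases and is capped at $k-1$), so $A'$ is well-defined; this is immediate from the guard but worth a one-line remark.
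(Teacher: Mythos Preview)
Your proposal is correct and is precisely the unpacking of what the paper treats as an ``immediate observation'' (the paper states the lemma with the preamble that the path-violator loses at least one item after the swap, and offers no further proof). Your argument that the inner loop guard is satisfied at $h=0$ because $i_0$ is an $\epsilon$-path-violator and $k\ge 1$, and that no later iteration touches $i_0$'s bundle since the shortest path visits distinct agents, is exactly the right justification.
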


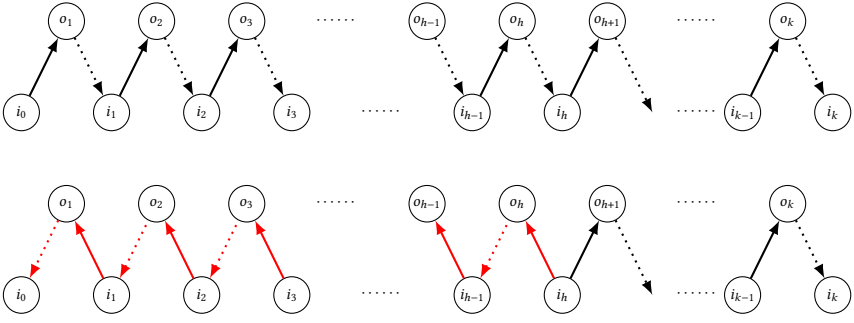
\begin{figure}[thb]
\centering
\begin{tikzpicture}[scale=0.6, transform shape, every node/.style={minimum size=8mm, inner sep=1pt}]
	\node[draw, circle](1) at (-6,0) {$i_0$};
	\node[draw, circle](2) at (-4,0) {$i_1$};
	\node[draw, circle](3) at (-2,0) {$i_2$};
	\node[draw, circle](4) at (0,0) {$i_3$};
	\node at (2,0) {$\cdots \cdots$};
	\node[draw, circle](5) at (4,0) {$i_{h-1}$};
	\node[draw, circle](6) at (6,0) {$i_h$};
	\node at (9,0) {$\cdots \cdots$};
	\node[draw, circle](ik1) at (10,0) {$i_{k-1}$};
	\node[draw, circle](ik) at (12,0) {$i_k$};
	
	\node[draw, circle](o1) at (-5,2) {$o_1$};
	\node[draw, circle](o2) at (-3,2) {$o_2$};
	\node[draw, circle](o3) at (-1,2) {$o_3$};
	\node at (1,2) {$\cdots \cdots$};
	\node[draw, circle](o4) at (3,2) {$o_{h-1}$};
	\node[draw, circle](o5) at (5,2) {$o_h$};
	\node[draw, circle](o6) at (7,2) {$o_{h+1}$};
	\node at (9,2) {$\cdots \cdots$};
	\node[draw, circle](ok) at (11,2) {$o_{k}$};
	
	\draw[->, >=latex,thick] (1)--(o1);
	\draw[->, >=latex,thick,dotted] (o1)--(2);
	\draw[->, >=latex,thick] (2)--(o2);
	\draw[->, >=latex,thick,dotted] (o2)--(3);
	\draw[->, >=latex,thick] (3)--(o3);
	\draw[->, >=latex,thick,dotted] (o3)--(4);
	\draw[->, >=latex,thick,dotted] (o4)--(5);
	\draw[->, >=latex,thick] (5)--(o5);
	\draw[->, >=latex,thick,dotted] (o5)--(6);
	\draw[->, >=latex,thick] (6)--(o6);
	\draw[->, >=latex,thick,dotted] (o6)--(8,0);
	\draw[->, >=latex,thick] (ik1)--(ok);
	\draw[->, >=latex,thick,dotted] (ok)--(ik);
	
\begin{scope}[yshift=-4cm]
\node[draw, circle](1) at (-6,0) {$i_0$};
	\node[draw, circle](2) at (-4,0) {$i_1$};
	\node[draw, circle](3) at (-2,0) {$i_2$};
	\node[draw, circle](4) at (0,0) {$i_3$};
	\node at (2,0) {$\cdots \cdots$};
	\node[draw, circle](5) at (4,0) {$i_{h-1}$};
	\node[draw, circle](6) at (6,0) {$i_h$};
	\node at (9,0) {$\cdots \cdots$};
	\node[draw, circle](ik1) at (10,0) {$i_{k-1}$};
	\node[draw, circle](ik) at (12,0) {$i_k$};
	
	\node[draw, circle](o1) at (-5,2) {$o_1$};
	\node[draw, circle](o2) at (-3,2) {$o_2$};
	\node[draw, circle](o3) at (-1,2) {$o_3$};
	\node at (1,2) {$\cdots \cdots$};
	\node[draw, circle](o4) at (3,2) {$o_{h-1}$};
	\node[draw, circle](o5) at (5,2) {$o_h$};
	\node[draw, circle](o6) at (7,2) {$o_{h+1}$};
	\node at (9,2) {$\cdots \cdots$};
	\node[draw, circle](ok) at (11,2) {$o_{k}$};
	
	\draw[->, >=latex,thick,dotted,red] (o1)--(1);
	\draw[<-, >=latex,thick,red] (o1)--(2);
	\draw[<-, >=latex,thick,dotted,red] (2)--(o2);
	\draw[<-, >=latex,thick,red] (o2)--(3);
	\draw[<-, >=latex,thick,dotted,red] (3)--(o3);
	\draw[<-, >=latex,thick,red] (o3)--(4);
	\draw[<-, >=latex,thick,red] (o4)--(5);
	\draw[<-, >=latex,thick,dotted,red] (5)--(o5);
	\draw[<-, >=latex,thick,red] (o5)--(6);
	\draw[->, >=latex,thick] (6)--(o6);
	\draw[->, >=latex,thick,dotted] (o6)--(8,0);
	\draw[->, >=latex,thick] (ik1)--(ok);
	\draw[->, >=latex,thick,dotted] (ok)--(ik);
\end{scope}	
\end{tikzpicture}
\caption{Example of a swap operation. The thick lines correspond to the pairs $(i_t,o_{t+1})$ such that agent $i_t$ owns item $o_{t+1}$; the dotted lines correspond to the pairs $(o_{t},i_t)$ such that item $o_{t}$ is an MBB item for $i_t$ but is not owned by $i_t$. The upper figure (respectively, the lower figure) represents the MBB network before the swap operation (respectively, after the swap operation) along $P=(i_0,o_1,i_1,\ldots,o_k,i_k)$.}
\label{fig:swap}
\end{figure}

Further, the swap operation preserves the property that each intermediate agent on the path $P$ is a non-violator but their weighted spending does not go below the minimum weighted spending $\wmin(A,\bfp)$ with respect to $A$. 
\begin{lemma}\label{lem:swap}
For each $h=1,\ldots,k$, we have $\frac{1}{w_{i_h}} p(A'_{i_h}) \geq \wmin(A,\bfp)$ and $(1+\epsilon)\wmin(A,\bfp) \geq \min_{o \in A'_{i_h}}\frac{1}{w_{i_h}} p(A'_{i_h} \setminus \{o\})$. 
\end{lemma}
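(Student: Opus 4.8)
The plan is to track exactly how the inner \textbf{while} loop (Lines~\ref{line:swap1}--\ref{end:swap}) moves items around, and then read both inequalities off the loop guard (or its negation) together with the way $j$ and $P$ are chosen in Line~\ref{line:swap}. Write $\mu:=\wmin(A,\bfp)$. Since the preceding step makes $i^*=i_k$ a least weighted spender, $\frac{1}{w_{i_k}}p(A_{i_k})=\mu$; moreover $i_k$'s bundle is untouched until iteration $k-1$, so this value $\mu$ is what every evaluation of the loop guard compares against. Let $\ell$ be the number of iterations the loop performs, so $h$ ranges over $0,1,\dots,\ell-1$ and the loop halts at $h=\ell$ with $1\le\ell\le k$ (the path is non-trivial so $k\ge1$, and iteration $h=0$ always fires because $j=i_0$ is an $\epsilon$-path-violator). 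The key structural observation is that, $P$ being a shortest path, the vertices $i_0,\dots,i_k$ are pairwise distinct, so iteration $t$ touches only $i_t$ (which loses $o_{t+1}$) and $i_{t+1}$ (which gains $o_{t+1}$). Hence at the start of iteration $h\ge1$ agent $i_h$ holds exactly $A_{i_h}\cup\{o_h\}$, a disjoint union since the MBB arc $o_h\to i_h$ forces $o_h\notin A_{i_h}$, and after losing $o_{h+1}$ it is never modified again. Consequently, for $1\le h\le\ell-1$ the final bundle is precisely $A'_{i_h}=\bigl(A_{i_h}\cup\{o_h\}\bigr)\setminus\{o_{h+1}\}$, and the guard that fired at iteration $h$ is literally the assertion $\frac{1}{w_{i_h}}p(A'_{i_h})>(1+\epsilon)\mu$.

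With this bookkeeping I would split on the position of $h\in\{1,\dots,k\}$ relative to $\ell$. For $1\le h\le\ell-1$: the lower bound $\frac{1}{w_{i_h}}p(A'_{i_h})\ge\mu$ is immediate (in fact with a $(1+\epsilon)$ factor) from the guard quoted above; for the upper bound, delete from $A'_{i_h}$ the item $o_h$ that $i_h$ received, leaving $A_{i_h}\setminus\{o_{h+1}\}$, and invoke the second bullet of the selection rule in Line~\ref{line:swap}, namely $\frac{1}{w_{i_h}}p(A_{i_h}\setminus\{o_{h+1}\})\le(1+\epsilon)\mu$. For $\ell<h\le k$ (agent untouched, $A'_{i_h}=A_{i_h}$): the lower bound is just the definition of $\mu$, and the upper bound is again the Line~\ref{line:swap} condition, with the boundary $h=k$ dispatched via $A_{i_k}=A_{i^*}$ and $\frac{1}{w_{i_k}}p(A_{i_k})=\mu$. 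For $h=\ell$: agent $i_\ell$ has only gained $o_\ell$, so $p(A'_{i_\ell})\ge p(A_{i_\ell})\ge w_{i_\ell}\mu$ gives the lower bound, while the \emph{failure} of the guard at $h=\ell$ gives $\frac{1}{w_{i_\ell}}p(A'_{i_\ell}\setminus\{o_{\ell+1}\})\le(1+\epsilon)\mu$ when $\ell<k$, and when $\ell=k$ removing $o_k$ leaves $\frac{1}{w_{i_k}}p(A_{i_k})=\mu$. Throughout one also checks $A'_{i_h}\neq\emptyset$, so the minimum in the statement is well-defined.

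The step I expect to require the most care is the bundle-tracking in the first paragraph: one must argue carefully from distinctness of the path vertices that each processed agent $i_h$ is altered in exactly two iterations---gaining $o_h$, then losing $o_{h+1}$---so that the loop guard literally constrains $A'_{i_h}$ and not some intermediate bundle; once this identification is made, everything reduces to reading the guard, its negation, or the selection rule of Line~\ref{line:swap}. The one genuinely degenerate situation to dispatch up front is $\mu=0$: then $A_{i^*}=\emptyset$ since prices are positive, but the guard $\frac{1}{w_{i_h}}p\bigl((A_{i_h}\cup\{o_h\})\setminus\{o_{h+1}\}\bigr)>0$ can never fail (the item $o_h$, distinct from $o_{h+1}$, stays in the bundle), so $\ell=k$ and the potentially troublesome configuration ``$\ell<h\le k$ with $A_{i_k}=\emptyset$'' never arises.
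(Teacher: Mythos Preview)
Your proof is correct and follows essentially the same route as the paper: both arguments rest on the selection rule in Line~\ref{line:swap} (which gives the upper bound via $\frac{1}{w_{i_h}}p(A_{i_h}\setminus\{o_{h+1}\})\le(1+\epsilon)\wmin(A,\bfp)$ for intermediate agents) together with the \textbf{while}-guard and its negation. The only difference is organizational---you partition the agents by position relative to the stopping index $\ell$, whereas the paper enumerates the three possible shapes of $A'_{i_h}$ (unchanged, only gained $o_h$, gained $o_h$ and lost $o_{h+1}$); these are the same three cases in different clothing. Your write-up is somewhat more careful than the paper's on two points it leaves implicit: that the right-hand side of the guard stays equal to $(1+\epsilon)\mu$ throughout (since $A_{i_k}$ is untouched before the final iteration), and the degenerate case $\mu=0$ forcing $\ell=k$.
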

\begin{proof}
The claim trivially holds for the least weighted spender $i_k=i^*$ as she receives at most one item $o_k$ and does not lose any item by the swap operation. Consider agent $i_h$ with $0< h < k$. By the choice of $P$, agent $i_h$ is not an $\epsilon$-path-violator in $A$. Thus, removing $o_{h+1}$ from the bundle of $i_h$ decreases its weighted value to or below $(1+\epsilon)\wmin(A,\bfp)$, i.e., 
\begin{equation}\label{eq:1}
(1+\epsilon)\wmin(A,\bfp) \geq  \frac{1}{w_{i_h}}p(A_{i_h} \setminus \{o_{h+1}\}).
\end{equation}
Now consider the three cases
\begin{itemize}
\item[$(1)$] $A'_{i_h}= A_{i_h}$; 
\item[$(2)$] $A'_{i_h}= A_{i_h} \cup \{o_{h}\}$; 
\item[$(3)$] $A'_{i_h}= (A_{i_h}\cup \{o_{h}\}) \setminus \{o_{h+1}\}$. 
\end{itemize}

In case $(1)$, the claim clearly holds due to the fact that $\frac{1}{w_{i_h}}p(A_{i_h}) \geq \wmin(A,\bfp)$ and inequality \eqref{eq:1}. In case $(2)$, the claim holds because of the fact that $\frac{1}{w_{i_h}}p(A'_{i_h}) \geq\frac{1}{w_{i_h}}p(A_{i_h}) \geq \wmin(A,\bfp)$ and the {\bf while}-condition in Line \ref{line:swap1}. In case $(3)$, we have $A_{i_h} \setminus \{o_{h+1}\}=A'_{i_h} \setminus \{o_h\}$. Combining this with inequality \eqref{eq:1} yields
\begin{align*}
(1+\epsilon)\wmin(A,\bfp) &\geq \frac{1}{w_{i_h}} p(A_{i_h} \setminus \{o_{h+1}\})\\
&= \frac{1}{w_{i_h}} p(A'_{i_h} \setminus \{o_h\}) \geq \min_{o \in A'_{i_h}} \frac{1}{w_{i_h}} p(A'_{i_h} \setminus \{o\}). 
\end{align*}
By the {\bf while}-condition in Line \ref{line:swap1}, we further have $\frac{1}{w_{i_h}} p(A'_{i_h})=  \frac{1}{w_{i_h}} p((A_{i_h}\cup \{o_{h}\}) \setminus \{o_{h+1}\})\geq (1+\epsilon)\wmin(A,\bfp)$. This proves the claim. 
\end{proof}

As a corollary of the above lemmas, assuming that $\wmax(A,\bfp) \geq (1+\epsilon)\wmin(A,\bfp)$ (which always holds before a swap phase due to the {\bf while}-condition in Line~\ref{line:bigwhile}), the function $\wmax$ does not increase due to the swap operations along $P$; further, the weighted spending $\wmin(A,\bfp)$ of the least weighted spender does not decrease, and strictly increases only if agent $i_k$ receives item $o_k$. 
\begin{corollary}\label{cor:swap:wmax}
Suppose that $\wmax(A,\bfp) \geq (1+\epsilon)\wmin(A,\bfp)$. Then, $\wmax(A,\bfp) \geq \wmax(A',\bfp)$. 
\end{corollary}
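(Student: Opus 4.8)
The plan is to show that \emph{every} agent contributes at most $\wmax(A,\bfp)$ to $\wmax(A',\bfp)$, grouping the agents by how the swap along $P=(i_0,o_1,i_1,\ldots,o_k,i_k)$ changes their bundle. First, for any agent $j$ not lying on $P$ we have $A'_j=A_j$, so $\min_{o\in A'_j}\frac{1}{w_j}p(A'_j\setminus\{o\})=\min_{o\in A_j}\frac{1}{w_j}p(A_j\setminus\{o\})\le\wmax(A,\bfp)$ directly from the definition of $\wmax(A,\bfp)$. Second, for each ``downstream'' agent $i_h$ with $h\in\{1,\ldots,k\}$ and $A'_{i_h}\neq\emptyset$, Lemma~\ref{lem:swap} already yields $\min_{o\in A'_{i_h}}\frac{1}{w_{i_h}}p(A'_{i_h}\setminus\{o\})\le(1+\epsilon)\wmin(A,\bfp)$, and the standing assumption $\wmax(A,\bfp)\ge(1+\epsilon)\wmin(A,\bfp)$ upgrades this to $\le\wmax(A,\bfp)$; if instead $A'_{i_h}=\emptyset$, then $i_h$ does not contribute to $\wmax(A',\bfp)$ at all.

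The one remaining case is the path-violator $i_0$, and this is where a new (if elementary) argument is needed. By Lemma~\ref{lem:pathviolator}, $A'_{i_0}\subsetneq A_{i_0}$; if $A'_{i_0}=\emptyset$ there is nothing to check, so suppose $A'_{i_0}\neq\emptyset$. For any nonempty bundle $X$ we have $\min_{o\in X}p(X\setminus\{o\})=p(X)-\max_{o\in X}p_o$, so the claim $\min_{o\in A'_{i_0}}\frac{1}{w_{i_0}}p(A'_{i_0}\setminus\{o\})\le\min_{o\in A_{i_0}}\frac{1}{w_{i_0}}p(A_{i_0}\setminus\{o\})$ reduces to $p(A_{i_0}\setminus A'_{i_0})\ge\max_{o\in A_{i_0}}p_o-\max_{o\in A'_{i_0}}p_o$. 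This holds because a price-maximizing item of $A_{i_0}$ either lies in $A_{i_0}\setminus A'_{i_0}$, in which case the right-hand side is at most $p(A_{i_0}\setminus A'_{i_0})$ since all prices are nonnegative, or lies in $A'_{i_0}$, in which case the right-hand side is $0$. Since $A_{i_0}\neq\emptyset$, the definition of $\wmax$ then gives $\min_{o\in A'_{i_0}}\frac{1}{w_{i_0}}p(A'_{i_0}\setminus\{o\})\le\wmax(A,\bfp)$ as well.

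Combining the three cases and taking the maximum over all agents with nonempty bundle under $A'$ yields $\wmax(A',\bfp)\le\wmax(A,\bfp)$. I expect essentially all of the heavy lifting to be done by Lemma~\ref{lem:swap} (for the intermediate agents) and Lemma~\ref{lem:pathviolator} (for the violator losing an item); the only genuinely new step is the monotonicity observation for $i_0$, namely that deleting items from a bundle cannot increase its ``value after removing the single most expensive item''. That step is the crux, and it is exactly where nonnegativity of the prices is used.
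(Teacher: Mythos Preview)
Your proof is correct and follows the same three-case decomposition as the paper: agents off the path are unchanged, agents $i_1,\ldots,i_k$ are handled by Lemma~\ref{lem:swap} together with the hypothesis $\wmax(A,\bfp)\ge(1+\epsilon)\wmin(A,\bfp)$, and the violator $i_0$ is handled via Lemma~\ref{lem:pathviolator}. The only difference is that the paper simply asserts the inequality $\min_{o\in A_{i_0}}\frac{1}{w_{i_0}}p(A_{i_0}\setminus\{o\})\ge\min_{o\in A'_{i_0}}\frac{1}{w_{i_0}}p(A'_{i_0}\setminus\{o\})$ as an immediate consequence of $A'_{i_0}\subsetneq A_{i_0}$, whereas you spell it out via $\min_{o\in X}p(X\setminus\{o\})=p(X)-\max_{o\in X}p_o$; this extra justification is fine but not strictly necessary.
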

\begin{proof}
The spending of agents who do not appear in $P$ does not change; so consider agent $i_h$ on $P$. For the $\epsilon$-path-violator $i_0$, Lemma \ref{lem:pathviolator} implies
\[
\wmax(A,\bfp) \geq \min_{o \in A_{i_0}}\frac{1}{w_{i_0}}p(A_{i_0}\setminus \{o\}) \geq \min_{o \in A'_{i_0}}\frac{1}{w_{i_0}}p(A'_{i_0}\setminus \{o\}). 
\]
For agent $i_h$ with $h>0$, by Lemma \ref{lem:swap}, we have
\[
\wmax(A,\bfp) \geq (1+\epsilon)\wmin(A,\bfp) \geq \min_{o \in A'_{i_h}}\frac{1}{w_{i_h}} p(A'_{i_h}\setminus \{o\}).
\]
This proves the claim.
\end{proof}

\begin{corollary}\label{cor:swap:wmin}
It holds that $\wmin(A,\bfp) \leq \wmin(A',\bfp)$. Further, $\wmin(A,\bfp) < \wmin(A',\bfp)$ only if the least weighted spender $i_k$ receives item $o_k$, i.e., $A'_{i_k}=A_{i_k}\cup \{o_k\}$. 
\end{corollary}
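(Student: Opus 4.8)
The plan is to show that the swap along $P=(i_0,o_1,i_1,\ldots,o_k,i_k)$ cannot push any agent's weighted spending below $\wmin(A,\bfp)$, which immediately yields $\wmin(A,\bfp)\le\wmin(A',\bfp)$, and then to observe that the distinguished least weighted spender $i_k=i^*$ changes her bundle only by receiving $o_k$: if she does not receive $o_k$, her weighted spending is unchanged, so the minimum cannot strictly increase. Most of the first part is already contained in Lemmas~\ref{lem:pathviolator} and~\ref{lem:swap}; the work that remains is to deal with the path-violator $i_0$ and the agents untouched by the swap, and then to read off the refinement.

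For the monotonicity claim I would partition $N$ into three groups. An agent $i\notin\{i_0,\ldots,i_k\}$ keeps her bundle, so $\frac1{w_i}p(A'_i)=\frac1{w_i}p(A_i)\ge\wmin(A,\bfp)$ by definition of $\wmin$. For the agents $i_1,\ldots,i_k$ on the path, the first inequality of Lemma~\ref{lem:swap} gives $\frac1{w_{i_h}}p(A'_{i_h})\ge\wmin(A,\bfp)$ directly. It remains to treat the path-violator $i_0$, which falls outside the range of Lemma~\ref{lem:swap}: the selected shortest path is simple, so $i_0$ never receives an item during the swap, while the {\bf while}-condition of Line~\ref{line:swap1} is met at $h=0$ because $i_0$ is an $\epsilon$-path-violator; hence $A'_{i_0}=A_{i_0}\setminus\{o_1\}$ and
\[
\frac1{w_{i_0}}p(A'_{i_0})=\frac1{w_{i_0}}p(A_{i_0}\setminus\{o_1\})>\frac{1+\epsilon}{w_{i_k}}p(A_{i_k})\ge\frac1{w_{i_k}}p(A_{i_k})=\wmin(A,\bfp),
\]
the last equality holding because $i_k=i^*$ was chosen to be a least weighted spender. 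Taking the minimum over all agents gives $\wmin(A,\bfp)\le\wmin(A',\bfp)$.

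For the refinement I would argue the contrapositive. Since $P$ is simple and $i_k$ is its last vertex, the only item $i_k$ can acquire along $P$ is $o_k$ and she loses nothing; thus if $i_k$ does not receive $o_k$ then $A'_{i_k}=A_{i_k}$, so $\frac1{w_{i_k}}p(A'_{i_k})=\frac1{w_{i_k}}p(A_{i_k})=\wmin(A,\bfp)$, forcing $\wmin(A',\bfp)\le\wmin(A,\bfp)$ and hence, with the previous paragraph, $\wmin(A',\bfp)=\wmin(A,\bfp)$. Equivalently, $\wmin(A,\bfp)<\wmin(A',\bfp)$ can occur only when $A'_{i_k}=A_{i_k}\cup\{o_k\}$. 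The argument is largely bookkeeping once Lemmas~\ref{lem:pathviolator} and~\ref{lem:swap} are in hand; the one spot needing care is the separate treatment of $i_0$, for which I use the $\epsilon$-path-violator inequality together with the simplicity of a shortest path (which also pins down that $i_0$ can only lose, and $i_k$ can only gain, a single designated item along $P$).
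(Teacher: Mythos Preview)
Your proof is correct and follows essentially the same approach as the paper's own proof: partition agents into those off the path, the intermediate agents $i_1,\ldots,i_k$ (handled by Lemma~\ref{lem:swap}), and the path-violator $i_0$; then observe that $i_k$'s bundle is unchanged unless she receives $o_k$. The paper is terser---it dispatches $i_0$ with a single ``by definition'' and calls the second part ``trivial to see''---whereas you spell out the $\epsilon$-path-violator inequality for $i_0$ and the contrapositive for the refinement, but the underlying reasoning is the same.
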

\begin{proof}
By definition, $\frac{1}{w_{i_0}}p(A'_{i_0}) \geq \wmin(A,\bfp)$. Also, by Lemma \ref{lem:swap}, $\frac{1}{w_{i_h}}p(A'_{i_h}) \geq \wmin(A,\bfp)$ for $h=1,2,\ldots,k$. Since no other agent changes her bundle, we have $\wmin(A,\bfp) \leq \wmin(A',\bfp)$. It is also trivial to see that $\wmin(A,\bfp) < \wmin(A',\bfp)$ only if agent $i_k$ receives item $o_k$, i.e., $A'_{i_k}=A_{i_k}\cup \{o_k\}$. 
\end{proof}

We now show that the potential function decreases by at least $1$ after the swap operations along~$P$. 

\begin{lemma}[Lemma $13$ in the extended version of \citep{barman2018finding}]\label{lem:potential2}
We have $\Phi_{i^*}(A,\bfp) -1 \geq \Phi_{i^*}(A',\bfp)$.
\end{lemma}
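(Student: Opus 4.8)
The plan is to derive Lemma~\ref{lem:potential2} from iterated applications of Lemma~\ref{lem:potential}. A single execution of the swap phase (Lines~\ref{line:swap1}--\ref{end:swap}) performs a sequence of single-item reassignments along the selected shortest path $P=(i_0,o_1,i_1,\ldots,o_k,i_k)$ with $i_k=i^*$: it produces allocations $A=B^{0},B^{1},\ldots,B^{\ell}=A'$, where $B^{t}$ is obtained from $B^{t-1}$ by moving $o_t$ from $i_{t-1}$ to $i_t$, and $1\le\ell\le k$. Here $\ell\ge 1$: at the first iteration $h=0$, we have $k\ge 1$ (since $j=i_0\neq i^*$, as removing an item cannot raise the price $p(A_{i^*})$), and the {\bf while}-condition in Line~\ref{line:swap1} holds because $j$ is an $\epsilon$-path-violator for $i^*$ via $P$. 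The crucial structural fact is that prices stay fixed during the entire swap phase, so each $\MBB_i(\bfp)$, and hence every ``item $\to$ agent'' arc of the MBB network, is governed solely by $\bfp$; reassigning $o_{t+1}$ from $i_t$ to $i_{t+1}$ changes only the ownership arcs incident to $o_{t+1}$ and the non-owned-MBB arcs between $o_{t+1}$ and $\{i_t,i_{t+1}\}$. In particular the argument is entirely weight-agnostic and is the one used for Lemma~$13$ in the extended version of \citet{barman2018finding}.

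The core of the proof is the claim, established by induction on $t$, that for every $t\in\{0,1,\ldots,\ell-1\}$ the suffix $P^{t}:=(i_t,o_{t+1},i_{t+1},\ldots,o_k,i_k)$ is a shortest path from $i_t$ to $i^*$ in $D(B^{t},\bfp)$. The base case $t=0$ is immediate since $P^{0}=P$ is the shortest path chosen by the algorithm. For the inductive step, assume $P^{t}$ is shortest in $D(B^{t},\bfp)$, so $i_t$ is at level $k-t$ with respect to $i^*$. First, all arcs of $P^{t+1}=(i_{t+1},o_{t+2},\ldots,i_k)$ survive the reassignment: the ownership arcs $(i_s,o_{s+1})$, $s\ge t+1$, persist because bundles of $i_{t+2},\ldots,i_k$ are untouched and $i_{t+1}$ only gains $o_{t+1}$; the arcs $(o_{s+1},i_{s+1})$ persist because $\MBB_{i_{s+1}}(\bfp)$ is unchanged and $o_{s+1}\notin B^{t+1}_{i_{s+1}}$. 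Hence $i_{t+1}$ reaches $i^*$ in $D(B^{t+1},\bfp)$ within $2(k-t-1)$ steps, so it remains to exclude a strictly shorter path. Suppose $Q$ is a simple path from $i_{t+1}$ to $i^*$ in $D(B^{t+1},\bfp)$ of length $<2(k-t-1)$. The only arcs in $D(B^{t+1},\bfp)$ absent from $D(B^{t},\bfp)$ are $i_{t+1}\to o_{t+1}$ and $o_{t+1}\to i_t$; a short case analysis on whether $Q$ uses one of them shows that in all cases one can prepend (in $D(B^{t},\bfp)$) the arcs $i_t\to o_{t+1}\to i_{t+1}$ or re-route through an old incoming arc of $o_{t+1}$ to obtain a walk---hence a path---from $i_t$ to $i^*$ in $D(B^{t},\bfp)$ of length $<2(k-t)$, contradicting the minimality of $P^{t}$. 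This proves the claim.

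Given the claim, for each $t\in\{0,\ldots,\ell-1\}$ the step $B^{t}\to B^{t+1}$ is exactly the reassignment of Lemma~\ref{lem:potential} applied with root $i^*$ and shortest path $P^{t}$, whence $\Phi_{i^*}(B^{t},\bfp)-1\ge\Phi_{i^*}(B^{t+1},\bfp)$. Chaining these $\ell$ inequalities yields $\Phi_{i^*}(A,\bfp)-\ell\ge\Phi_{i^*}(A',\bfp)$, and since $\ell\ge1$ we conclude $\Phi_{i^*}(A,\bfp)-1\ge\Phi_{i^*}(A',\bfp)$. The only genuinely delicate step---and the sole place requiring real work rather than bookkeeping---is the ``still shortest'' half of the inductive claim: ruling out that the one or two newly created MBB arcs open up a detour, which is precisely where one exploits that $P$ was chosen to be a \emph{shortest} path in the original network.
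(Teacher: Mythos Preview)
Your proof is correct and follows exactly the paper's approach: repeatedly apply Lemma~\ref{lem:potential} along the chain of single-item reassignments $B^{0},B^{1},\ldots,B^{\ell}$. The paper's own proof is a one-liner that simply states ``Applying Lemma~\ref{lem:potential} repeatedly to the pair of agents $i_h$ and $i_{h+1}$ for $0 \leq h \leq \ell -1$'', deferring the work to the cited Barman et~al.\ lemma; you have additionally supplied the inductive verification that each suffix $P^{t}$ remains a \emph{shortest} path in $D(B^{t},\bfp)$, which is precisely the hypothesis Lemma~\ref{lem:potential} needs and which the paper leaves implicit.
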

\begin{proof}
Suppose that the exchange terminates when giving item $o_{\ell}$ to agent $i_{\ell}$ with $1 \leq \ell \leq k$. Applying Lemma \ref{lem:potential} repeatedly to the pair of agents $i_h$ and $i_{h+1}$ for $0 \leq h \leq \ell -1$, we get that $\Phi_{i^*}(A,\bfp) -1 \geq \Phi_{i^*}(A',\bfp)$. 
\end{proof}

\subsubsection{Price-rise phase}
Next, let us consider the price-rise phase starting with Line \ref{line:price-rise} and ending with Line \ref{line:gamma} of Algorithm \ref{alg:POwEF1}. 
Let $(A,\bfp)$ be the pair of allocation and price vector just before the price-rise phase. Let $i^*$ be the corresponding least weighted spender defined in Line~\ref{line:weightedleastspender}, $I$ be the set of agents who can reach $i^*$ ($I$ includes $i^*$), and $x,y,z$ be the multiplicative factors defined in Lines~\ref{line:x}, \ref{line:y}, and \ref{line:z}. Furthermore, let 
\[
\gamma :=
\begin{cases}
x~&\mbox{if $\wmin(A,\bfp)>0$ and $x=\min \{x,y,z\}$};\\
y~&\mbox{if $\wmin(A,\bfp)=0$ or $y=\min \{x,y,z\}$};\\
1+\epsilon~&\mbox{otherwise}.
\end{cases}
\]
Note that if $\wmin(A,\bfp)=0$, then we always set $\gamma=y$, so we do not have to worry about the well-definedness of $x$ in this case.

We assume that up until this point, 
\begin{itemize}
\item for each item $o \in O$, $p_o=(1+\epsilon)^a$ for some non-negative integer $a$; and 
\item $A$ is not $3\epsilon$-$\wpefi$ with respect to $\bfp$, i.e., $\wmax(A,\bfp) > (1+ 3\epsilon)\wmin(A,\bfp)$; and
\item $A$ is complete, and $A_i \subseteq \MBB_i(\bfp)$ for each $i \in N$.
\end{itemize}
Let $\bfp'$ be the new price vector after the price-update where $p'_o=\gamma p_o$ for each $o \in \MBB(\bfp,I)$ and $p'_o=p_o$ for each $o \in O \setminus \MBB(\bfp,I)$. 
We prove the following auxiliary lemmas. First, we observe that since $(A,\bfp)$ admits a violator, there is an item that will not experience the price-update.    

\begin{lemma}\label{lem:MBB:price}
$O \setminus \MBB(\bfp,I)\neq \emptyset$. In particular, $N \setminus I \neq \emptyset$, and $O \setminus \MBB_i(\bfp) \neq \emptyset$ for each $i \in I$. 
\end{lemma}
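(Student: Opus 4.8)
The plan is to lean on two invariants that hold at the moment Algorithm~\ref{alg:POwEF1} enters a price-rise phase. First, the outer \textbf{while}-condition (Line~\ref{line:bigwhile}) says that $A$ is not $3\epsilon$-$\wpefi$ with respect to $\bfp$, equivalently $\wmax(A,\bfp) > (1+3\epsilon)\wmin(A,\bfp)$, so some agent $j$ is a $3\epsilon$-violator: $A_j\neq\emptyset$ and $\min_{o\in A_j}\frac{1}{w_j}p(A_j\setminus\{o\}) > (1+3\epsilon)\wmin(A,\bfp)$. Second, the swap phase did not fire (the \textbf{if} on Line~\ref{line:swap}), so there is no $\epsilon$-path-violator for the least weighted spender $i^*$. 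I will derive all three assertions of the lemma from these two facts, the crux being that this $3\epsilon$-violator $j$ must lie outside $I$.

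The first step is to show $N\setminus I\neq\emptyset$ by proving $j\notin I$. Since $i^*$ is a least weighted spender, $\min_{o\in A_{i^*}}\frac{1}{w_{i^*}}p(A_{i^*}\setminus\{o\}) \le \frac{1}{w_{i^*}}p(A_{i^*}) = \wmin(A,\bfp)$, so $i^*$ is not a $3\epsilon$-violator and hence $j\neq i^*$. If $j$ were in $I$, there would be a directed path $P$ in $D(A,\bfp)$ from $j$ to $i^*$; its first arc runs from $j$ to some item $o(j,P)\in A_j$, and
\[
\frac{1}{w_j}p\big(A_j\setminus\{o(j,P)\}\big) \;\ge\; \min_{o\in A_j}\frac{1}{w_j}p(A_j\setminus\{o\}) \;>\; (1+3\epsilon)\wmin(A,\bfp) \;\ge\; \frac{1+\epsilon}{w_{i^*}}p(A_{i^*}),
\]
so $j$ would be an $\epsilon$-path-violator for $i^*$ --- contradicting that we are in the price-rise phase. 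Hence $j\in N\setminus I$, and in particular $N\setminus I\neq\emptyset$.

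Next I would establish the containment $\MBB(\bfp,I)\subseteq\bigcup_{i\in I}A_i$, which yields the remaining assertions. Take $o\in\MBB(\bfp,I)$, say $o\in\MBB_i(\bfp)$ with $i\in I$; by completeness of $A$, the item $o$ has a unique owner $i'$, so there is an arc $i'\to o$ in $D(A,\bfp)$. If $i'=i$ then $o\in A_i$; otherwise $o\notin A_i$, so $o\in\MBB_i(\bfp)\setminus A_i$ produces an arc $o\to i$, whence $i'$ reaches $i^*$ through $o$ and $i$, so $i'\in I$ as well. In both cases $o$ is owned by an agent of $I$, proving the containment. Since the bundle $A_j$ of the $3\epsilon$-violator $j\in N\setminus I$ is disjoint from all bundles indexed by $I$ and is nonempty, it follows that $\emptyset\neq A_j\subseteq O\setminus\MBB(\bfp,I)$; and since $\MBB_i(\bfp)\subseteq\MBB(\bfp,I)$ for every $i\in I$, we get $O\setminus\MBB_i(\bfp)\supseteq O\setminus\MBB(\bfp,I)\neq\emptyset$.

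The only genuinely delicate point is the inequality chain in the second step. I must be sure that the $3\epsilon$-violator is distinct from $i^*$ (so that the path from $j$ to $i^*$ has length at least two and $o(j,P)$ is genuinely an item of $A_j$), and I must check the degenerate case $\wmin(A,\bfp)=0$: there, ``$j$ is a $3\epsilon$-violator'' only says $\min_{o\in A_j}\frac{1}{w_j}p(A_j\setminus\{o\})>0$, while the $\epsilon$-path-violator condition reads $\frac{1}{w_j}p(A_j\setminus\{o(j,P)\})>\frac{1+\epsilon}{w_{i^*}}p(A_{i^*})=0$, so the implication still goes through. Everything else is routine reachability bookkeeping in the MBB network $D(A,\bfp)$.
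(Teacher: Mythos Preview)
Your proof is correct and uses essentially the same idea as the paper: the violator $j$ guaranteed by the outer \textbf{while}-condition cannot lie in $I$, since otherwise a path from $j$ to $i^*$ would make $j$ an $\epsilon$-path-violator, contradicting that we entered the price-rise phase. The only difference is organizational: the paper argues by contradiction from $\MBB(\bfp,I)=O$ to conclude $j\in I$ and hence a path-violator, then deduces $N\setminus I\neq\emptyset$ from $O\setminus\MBB(\bfp,I)\neq\emptyset$, whereas you first establish $j\notin I$ directly and then prove the explicit containment $\MBB(\bfp,I)\subseteq\bigcup_{i\in I}A_i$ to show $A_j\subseteq O\setminus\MBB(\bfp,I)$---the same reachability argument run in the opposite direction.
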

\begin{proof}
Suppose towards a contradiction that $O \setminus \MBB(\bfp,I) = \emptyset$, i.e., $\MBB(\bfp,I) = O$. Take an $\epsilon$-violator $j$ (so $A_j \neq \emptyset$) such that $\wmax(A,\bfp)= \min_{o \in A_j}p(A_j \setminus \{o\})$. Then, since every item in $A_j$ belongs to $\MBB(\bfp,I)$, $j$ has a directed path to the least weighted spender $i^*$ via any item in $A_j$. Thus, $j \in I$. This means that $j$ is an $\epsilon$-path-violator for $i^*$ as the weighted spending of $j$ is strictly higher than $(1+ \epsilon)\wmin(A,\bfp)$ even after removing any item from $A_j$. However, by the {\bf if}-condition in Line \ref{line:swap}, $j$ must not be an $\epsilon$-path-violator for $i^*$, a contradiction. 
Since every agent only receives her MBB items in $A$, and $A$ is a complete allocation, we have $N \setminus I \neq \emptyset$.
\end{proof}

Another observation is that the price of each item in $\MBB(\bfp,I)$ increases by a power of $(1+ \epsilon)$ when $\gamma \neq x$, and the price of each item not in $\MBB(\bfp,I)$ does not change after the price-rise phase. Further, the price-rise phase makes MBB edges from items in $\MBB(\bfp,I)$ to agents outside $I$ vanish while preserving the other edge structures. 

\begin{lemma}\label{lem:xy}
The following properties hold:
\begin{itemize}
\item[$($i$)$] $y=(1+ \epsilon)^a$ for some positive integer $a$.  
\item[$($ii$)$] For each $i \not \in I$ and each $o \in A_i$, it holds that $p_o =p'_o$. 
\item[$($iii$)$] For each $o \in \MBB(\bfp,I)$ and each $i  \in N \setminus I$ with $A_i \neq \emptyset$, item $o$ does not have a directed path to agent $i$ in $D(A,\bfp')$. In particular, $o \not \in \MBB_i(\bfp')$. 
\end{itemize}
\end{lemma}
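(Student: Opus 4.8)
The plan is to establish the three items in the stated order, since~(ii) feeds into~(iii), and to rely throughout on two invariants that Algorithm~\ref{alg:POwEF1} maintains up to this point: every price $p_o$ is an integral power of $(1+\epsilon)$, and $A_i\subseteq\MBB_i(\bfp)$ for every $i\in N$ (with $A$ complete).

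\textbf{Part~(i).} Lemma~\ref{lem:MBB:price} gives $O\setminus\MBB(\bfp,I)\neq\emptyset$, and since $i^*\in I$ the set indexing the minimum defining $y$ is a nonempty finite set, so $y$ is well-defined. Because each $v_i(o)$ and each $p_o$ is an integral power of $1+\epsilon>1$, and the maximum of finitely many such powers is attained at the largest exponent, $\alpha_i(\bfp)$ is itself an integral power of $(1+\epsilon)$; hence so is $\alpha_i(\bfp)\,p_o/v_i(o)$ for $i\in I$ and $o\in O\setminus\MBB(\bfp,I)$. This quantity exceeds $1$ strictly, because $i\in I$ forces $\MBB_i(\bfp)\subseteq\MBB(\bfp,I)$, so $o\notin\MBB_i(\bfp)$ and thus $v_i(o)/p_o<\alpha_i(\bfp)$. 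A minimum of finitely many powers $(1+\epsilon)^a$ with $a\ge 1$ is again of this form, which is~(i).

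\textbf{Part~(ii).} I would argue by contradiction: if $o\in A_i\cap\MBB(\bfp,I)$ for some $i\notin I$, say $o\in\MBB_k(\bfp)$ with $k\in I$, then $k\neq i$ and disjointness of bundles gives $o\notin A_k$, so $D(A,\bfp)$ contains the arcs $i\to o$ and $o\to k$; since $k$ reaches $i^*$, so does $i$, contradicting $i\notin I$. Hence $A_i$ is disjoint from $\MBB(\bfp,I)$, so its items are untouched by the price update and $p_o=p'_o$.

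\textbf{Part~(iii).} The core is the subclaim that $\alpha_i(\bfp')=\alpha_i(\bfp)$ for every $i\notin I$ with $A_i\neq\emptyset$: prices only increase (by a factor $\gamma\ge 1$ on $\MBB(\bfp,I)$, unchanged elsewhere), so $\alpha_i$ cannot increase; and by~(ii) the items of $A_i$ keep their price and, by the invariant $A_i\subseteq\MBB_i(\bfp)$, already attain $\alpha_i(\bfp)$, so $\alpha_i$ cannot decrease either. Since $\gamma>1$ in all three branches ($\gamma=x=\wmax(A,\bfp)/\wmin(A,\bfp)>1+3\epsilon$ as $A$ is not $3\epsilon$-$\wpefi$, or $\gamma=y>1$ by~(i), or $\gamma=1+\epsilon$), every $o\in\MBB(\bfp,I)$ satisfies $v_i(o)/p'_o=(1/\gamma)\,v_i(o)/p_o<\alpha_i(\bfp)=\alpha_i(\bfp')$, so $o\notin\MBB_i(\bfp')$ and there is no arc $o\to i$ in $D(A,\bfp')$. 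To lift this from arcs to paths, I would note that $D(A,\bfp')$ has the same agent-to-item arcs as $D(A,\bfp)$, so $k\in I$ with an arc $k\to o'$ forces $o'\in A_k\subseteq\MBB_k(\bfp)\subseteq\MBB(\bfp,I)$, while the inequality just proved, applied with $o'$ in place of $o$, shows that $o'\in\MBB(\bfp,I)$ with an arc $o'\to k$ forces $k\in I$ or $A_k=\emptyset$; and an agent with empty bundle has no outgoing arc. Hence any directed walk in $D(A,\bfp')$ that issues from an item of $\MBB(\bfp,I)$ stays inside $I\cup\MBB(\bfp,I)$ until it possibly dead-ends at an agent with no items, so it can never reach an agent $i\notin I$ with $A_i\neq\emptyset$, which establishes~(iii).

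I expect part~(iii) to be the main obstacle: deriving the subclaim and the absence of single MBB \emph{arcs} to outside agents is routine, but promoting this to the absence of directed \emph{paths} requires the structural observation that after the price rise the set $I\cup\MBB(\bfp,I)$ behaves as an almost-closed region of the network, escapable only into dead-end agents holding nothing, and getting the empty-bundle bookkeeping exactly right is the delicate point. Parts~(i) and~(ii) are short, resting on the power-of-$(1+\epsilon)$ invariant and on reachability in the unchanged network $D(A,\bfp)$ respectively.
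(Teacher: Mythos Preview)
Your proposal is correct and follows essentially the same approach as the paper: parts~(i) and~(ii) are argued identically, and for part~(iii) both you and the paper first show that $o\notin\MBB_i(\bfp')$ for $o\in\MBB(\bfp,I)$ and $i\notin I$ with $A_i\neq\emptyset$ (via $\gamma>1$ and the preservation of $\alpha_i$ on such agents), and then promote this to the absence of paths by an induction/closure argument along the walk. Your treatment of the empty-bundle case is in fact slightly more explicit than the paper's, which relies implicitly on the fact that every intermediate agent on a path to $i$ must own the next item and hence have a nonempty bundle.
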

\begin{proof}
\begin{itemize}
\item[$($i$)$] Observe that by definition of the maximum bang per buck ratio, $\alpha_i(\bfp) > \frac{v_i(o)}{p_o}$ for any $i \in N$ and $o \not \in \MBB_i(\bfp)$; from Lemma \ref{lem:MBB:price} we have $y>1$. Since the prices and valuations are assumed to be integral powers of $(1+ \epsilon)$, $y$ is also an integral power of $(1+ \epsilon)$ and thus the statement $($i$)$ holds. 
\item[$($ii$)$] We know that $p_o \neq p’_o$ only if $o \in \MBB(\bfp,I)$. If some $o \in A_i$ with $i \not \in I$ belongs to $\MBB(\bfp,I)$, namely $o \in \MBB_{i'}(\bfp)$ for some agent $i' \in I$, then $i$ has a directed path to the least weighted spender $i^*$ which starts with the edges $(i,o)$ and $(o,i')$ followed by a directed path from $i'$ to $i^*$, a contradiction. 
\item[$($iii$)$] We will first show that $o \not \in \MBB_i(\bfp')$ for each $o \in \MBB(\bfp,I)$ and each $i \not \in I$ with $A_i \neq \emptyset$. 
Suppose otherwise, i.e., $o \in \MBB_i(\bfp')$ for some $o \in \MBB(\bfp,I)$ and some $i \not \in I$ with $A_i \neq \emptyset$. 
Let $o' \in A_i$. Then, since $p'_{o'}=p_{o'}$ by $($ii$)$ and $o' \in \MBB_i(\bfp)$ by our assumption, we have $\frac{v_{i}(o')}{p'_{o'}}=\frac{v_{i}(o')}{p_{o'}}=\alpha_i(\bfp)$. However, recall that $\gamma>1$ since $\min \{x,y,1+\epsilon\}>1$. This implies $p'_{o}=\gamma p_o > p_o$, and
\[
\frac{v_{i}(o)}{p'_o} < \frac{v_{i}(o)}{p_o} \leq \alpha_i(\bfp) = \frac{v_{i}(o')}{p'_{o'}} \leq \alpha_i(\bfp'),
\]
a contradiction. 

Now, we will show that for each $o \in \MBB(\bfp,I)$ and each $i \not \in I$ with $A_i \neq \emptyset$, there is no directed path from $o$ to $i$ in $D(A,\bfp')$. Assume towards a contradiction that there is a directed path $P$ from item $o  \in \MBB(\bfp,I)$ to agent $i \not \in I$ in $D(A,\bfp')$; let $P=(o_1,i_1,o_2, \ldots,o_k,i_k)$ where $o_1=o$ and $i_k=i$. We will show by induction that $i_t \in I$ for each $t=1,2,\ldots,k$. Observe first that by the above reasoning, item $o$ will not be desired by the agents outside $I$ after the price-rise phase, namely, $o \not \in \MBB_{i'}(\bfp')$ for any $i' \in N \setminus I$. Thus, $i_1 \in I$. Suppose that $i_{t'} \in I$ for each $1 \leq t' \leq t$; we will prove that $i_{t+1} \in I$. Since agent $i_t$ has an edge towards $o_{t+1}$ in $D(A,\bfp')$, we have $o_{t+1} \in A_{i_t} \subseteq \MBB_{i_t}(\bfp)$. Thus, $o_{t+1} \in \MBB(\bfp,I)$; by applying the same reasoning as the above observation about item $o$, this implies $i_{t+1} \in I$, completing the induction. But this means $i=i_k  \in I$, contradicting the assumption that $i \not \in I$. \qedhere
\end{itemize}
\end{proof}

We also observe that the identity of the least weighted spender does not change when $\gamma \in \{x,y\}$. 

\begin{lemma}\label{lem:identity:leastspender}
If $\min \{x,y\} \leq z$, then $i^*$ remains a least weighted spender with respect to $\bfp'$, i.e., $\frac{1}{w_{i^*}}p'(A_{i^*})=\wmin(A,\bfp')$. 
\end{lemma}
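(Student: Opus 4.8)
The plan is to track how each agent's weighted spending $\frac{1}{w_i}p(A_i)$ changes across the price update, splitting on whether $i\in I$ or $i\notin I$. The structural fact I would lean on is the invariant $A_i\subseteq\MBB_i(\bfp)$ that holds entering every price-rise phase: for $i\in I$ this gives $A_i\subseteq\MBB_i(\bfp)\subseteq\MBB(\bfp,I)$, so every item of $A_i$ has its price scaled by $\gamma$, hence $\frac{1}{w_i}p'(A_i)=\gamma\cdot\frac{1}{w_i}p(A_i)$; and for $i\notin I$, Lemma~\ref{lem:xy}$($ii$)$ gives $p'_o=p_o$ for every $o\in A_i$, so $\frac{1}{w_i}p'(A_i)=\frac{1}{w_i}p(A_i)$ is unchanged. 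Since $i^*\in I$ and $\frac{1}{w_{i^*}}p(A_{i^*})=\wmin(A,\bfp)$ by the choice of $i^*$, this already yields $\frac{1}{w_{i^*}}p'(A_{i^*})=\gamma\,\wmin(A,\bfp)$.

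Next I would pin down $\gamma$ under the hypothesis $\min\{x,y\}\le z$ by reading off the case split in the definition of $\gamma$. The branch $\gamma=1+\epsilon$ is reached only when $\wmin(A,\bfp)>0$ and $z<\min\{x,y\}$, which is excluded by the hypothesis; so $\gamma\in\{x,y\}$. If $\gamma=x$ then $x=\min\{x,y,z\}\le z$; if $\gamma=y$ with $\wmin(A,\bfp)>0$ then $y=\min\{x,y,z\}\le z$; and if $\gamma=y$ with $\wmin(A,\bfp)=0$ the conclusion will be immediate, because then $\frac{1}{w_{i^*}}p'(A_{i^*})=\gamma\cdot 0=0\le\frac{1}{w_j}p'(A_j)$ for every $j$. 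So, setting aside that degenerate case, we have $\gamma\le z$ (this is the one place where we must also recall that $z$ is well-defined, i.e.\ $N\setminus I\neq\emptyset$, which is Lemma~\ref{lem:MBB:price}).

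Finally I would check that $i^*$ is still a least weighted spender with respect to $\bfp'$. For $i\in I$: $\frac{1}{w_i}p'(A_i)=\gamma\cdot\frac{1}{w_i}p(A_i)\ge\gamma\,\wmin(A,\bfp)=\frac{1}{w_{i^*}}p'(A_{i^*})$. For $j\notin I$: the definition of $z$ in Line~\ref{line:z} gives $\frac{1}{w_j}p(A_j)\ge z\,\wmin(A,\bfp)\ge\gamma\,\wmin(A,\bfp)$, and since $p'(A_j)=p(A_j)$ this yields $\frac{1}{w_j}p'(A_j)\ge\frac{1}{w_{i^*}}p'(A_{i^*})$. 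Combining the two cases shows $\frac{1}{w_{i^*}}p'(A_{i^*})\le\frac{1}{w_j}p'(A_j)$ for all $j\in N$, i.e.\ $\frac{1}{w_{i^*}}p'(A_{i^*})=\wmin(A,\bfp')$, as claimed.

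The only real subtlety I anticipate is the bookkeeping around $\gamma$: correctly extracting from the algorithm's three-way branching that $\gamma\in\{x,y\}$ and $\gamma\le z$ under the hypothesis, and separately disposing of the degenerate case $\wmin(A,\bfp)=0$, in which $z$ is not even defined but the claim is trivial. Once the two "spending multiplier" facts are in place — scaling by $\gamma$ inside $I$, no change outside $I$ — the rest is a short direct computation.
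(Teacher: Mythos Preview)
Your proposal is correct and follows essentially the same approach as the paper: handle the degenerate case $\wmin(A,\bfp)=0$ separately, then use that prices scale uniformly by $\gamma$ inside $I$ and are unchanged outside $I$, together with $\gamma\le z$ and the definition of $z$, to compare $i^*$'s new weighted spending against every other agent's. If anything, you are more explicit than the paper in justifying why $A_i\subseteq\MBB(\bfp,I)$ for $i\in I$ and in deriving $\gamma\le z$ from the hypothesis via the algorithm's three-way branching, whereas the paper simply asserts the inequality $\gamma\cdot\frac{p(A_{i^*})}{w_{i^*}}\le z\cdot\wmin(A,\bfp)$ without spelling this out.
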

\begin{proof}
If $p(A_{i^*})=0$, then the weighted spending of $i^*$ remains $0$ after the price-update, and thus the claim holds. So suppose $p(A_{i^*})>0$. Since the prices of the items owned by agents in $I$ uniformly increased, $\frac{p'(A_{i^*})}{w_{i^*}} \leq \frac{p'(A_{i})}{w_{i}}$ for any $i \in I$. Also, by definition of $z$, for any $j \in N \setminus I$ we have
\[
\frac{p'(A_{i^*})}{w_{i^*}}= \gamma\cdot \frac{p(A_{i^*})}{w_{i^*}} \leq z \cdot \wmin(A,\bfp) \leq \frac{p(A_j)}{w_j\wmin(A,\bfp)}\cdot \wmin(A,\bfp)=\frac{p'(A_j)}{w_j}, 
\]
which implies that $i^*$ remains a least weighted spender with respect to $\bfp'$. 
\end{proof}

We now prove that $A$ still satisfies the MBB condition with respect to the new price vector. 

\begin{lemma}\label{lem:price:MBB}
For each agent $i \in N$, it holds that $A_i \subseteq \MBB_i(\bfp')$. 
\end{lemma}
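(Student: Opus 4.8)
The plan is to fix an agent $i$ and verify $A_i \subseteq \MBB_i(\bfp')$ by splitting on whether $i \in I$, in each case comparing the bang per buck ratios (for $i$) of the items in $A_i$ against those of the remaining items after the scaling $p \mapsto p'$. The facts driving the argument are that an item's price is multiplied by $\gamma > 1$ exactly when it lies in $\MBB(\bfp,I)$ and is unchanged otherwise, and that going into the price-rise phase $A$ is complete with $A_k \subseteq \MBB_k(\bfp)$ for every $k$. If $A_i = \emptyset$ the claim is vacuous, so I assume $A_i \neq \emptyset$ below.

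For $i \notin I$, Lemma~\ref{lem:xy}(ii) tells me that the items of $A_i$ keep their prices, so their bang per buck ratio for $i$ stays equal to $\alpha_i(\bfp)$; every other item either keeps its price (ratio unchanged, hence still $\leq \alpha_i(\bfp)$) or becomes strictly more expensive (ratio strictly drops below $\alpha_i(\bfp)$). Hence $A_i$'s items remain MBB items for $i$.

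For $i \in I$, all of $A_i$ lies in $\MBB_i(\bfp) \subseteq \MBB(\bfp,I)$, so the prices of $A_i$'s items are scaled uniformly by $\gamma$ and their ratio for $i$ becomes $\alpha_i(\bfp)/\gamma$; the same scaling applies to every other item of $\MBB(\bfp,I)$, so those cannot overtake $A_i$'s items. The only potentially problematic items are those in $O \setminus \MBB(\bfp,I)$, whose prices are frozen; here I would invoke the definition of $y$ in Line~\ref{line:y}, which is chosen precisely so that $\frac{v_i(o')}{p_{o'}} \leq \alpha_i(\bfp)/y$ for every $i \in I$ and $o' \in O \setminus \MBB(\bfp,I)$, together with the inequality $\gamma \leq y$. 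Verifying $\gamma \leq y$ is the crux: if $\gamma = x$ then $x = \min\{x,y,z\} \leq y$; if $\gamma = y$ it is immediate; and if $\gamma = 1+\epsilon$ then, since $O \setminus \MBB(\bfp,I) \neq \emptyset$ by Lemma~\ref{lem:MBB:price}, Lemma~\ref{lem:xy}(i) gives $y = (1+\epsilon)^a$ with $a \geq 1$, so $y \geq 1+\epsilon = \gamma$. Combining, $\frac{v_i(o')}{p'_{o'}} = \frac{v_i(o')}{p_{o'}} \leq \alpha_i(\bfp)/y \leq \alpha_i(\bfp)/\gamma = \frac{v_i(o)}{p'_o}$ for any $o \in A_i$, so $A_i \subseteq \MBB_i(\bfp')$ (and in fact $\alpha_i(\bfp') = \alpha_i(\bfp)/\gamma$).

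I expect the handling of the constant $\gamma$ — confirming that whichever of $x$, $y$, or $1+\epsilon$ is selected, the multiplicative factor never exceeds $y$, so that the frozen items outside $\MBB(\bfp,I)$ cannot become strictly better (in bang per buck for $i \in I$) than the uniformly-cheapened items of $A_i$ — to be the only subtle point; everything else is a routine comparison of ratios using $A_k \subseteq \MBB_k(\bfp)$, $\gamma > 1$, and parts (i) and (ii) of Lemma~\ref{lem:xy}.
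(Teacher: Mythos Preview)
Your proposal is correct and follows essentially the same approach as the paper: the same case split on $i \in I$ versus $i \notin I$, the same use of Lemma~\ref{lem:xy}(ii) for the first case, and the same comparison of bang per buck ratios using $\gamma \leq y$ and the definition of $y$ for the second. Your verification that $\gamma \leq y$ in each of the three sub-cases is slightly more explicit than the paper's (which simply cites Lemma~\ref{lem:xy}(i) for the $\gamma = 1+\epsilon$ case), but the argument is otherwise identical.
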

\begin{proof}
Take any $i \in N$; by our assumption, $A_i \subseteq \MBB_i(\bfp)$. Consider the following two cases. First, suppose $i \not \in I$. Then, $p_o=p'_o$ for each $o \in A_i$ by Lemma \ref{lem:xy}$($ii$)$. Thus, the price of every $o \in A_i$ remains the same while the price of every other item does not decrease, so $A_i \subseteq \MBB_i(\bfp')$. Second, suppose $i \in I$. Then, for any $o \in \MBB_i(\bfp)$ and $o' \in O \setminus \MBB(\bfp,I)$, the bang per buck ratio for $o$ at $\bfp'$ still remains higher or equal to that for $o'$, i.e., 
\[
\frac{v_i(o)}{p'_o} \geq \frac{v_i(o)}{p_o} \cdot \frac{1}{y}\geq \frac{v_i(o)}{p_o} \cdot \frac{v_i(o')}{\alpha_i(\bfp) p_{o'}}=  \frac{v_i(o')}{p'_{o'}}, 
\]
where the first inequality follows from the fact that $\gamma \leq y$ (recall that $y= (1+\epsilon)^a\geq 1+\epsilon$ from Lemma \ref{lem:xy}$($i$)$) and the second inequality follows from the definition of $y$ (namely, $\frac{1}{y} \geq \frac{v_i(o')}{\alpha_i(\bfp)p_{o'}}$). Further, for any $o \in \MBB_i(\bfp)$ and $o' \in \MBB(\bfp,I) \setminus \MBB_i(\bfp)$, the bang per buck ratio for $o$ at $\bfp'$ remains higher than that for $o'$, i.e., 
\[
\frac{v_i(o)}{p'_o} = \frac{v_i(o)}{\gamma p_o} > \frac{v_i(o')}{\gamma p_{o'}}=  \frac{v_i(o')}{p'_{o'}}. 
\]
Thus, every item in $\MBB_i(\bfp)$ still has a higher or equal bang per buck ratio as the other items, and all such items have the same bang per buck ratio since the prices of the items in $\MBB_i(\bfp)$ have been increased by the same factor $\gamma$. This means that $\MBB_i(\bfp)\subseteq \MBB_i(\bfp')$. Since $A_i \subseteq \MBB_i(\bfp)$, we conclude that $A_i  \subseteq \MBB_i(\bfp')$. 
\end{proof}

Finally, we show that the algorithm terminates if it raises prices by scaling with $x$. 

\begin{lemma}\label{lem:priceupdate:x}
If $\gamma = x$ and $\wmax(A,\bfp) \geq \wmax(A,\bfp')$, then $(A,\bfp')$ is $\wpefi$. 
\end{lemma}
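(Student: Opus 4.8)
The plan is to reduce the claim to a simple numerical comparison. First I would record the general equivalence: for any complete allocation $A$ and price vector $\boldsymbol{q}$, the pair $(A,\boldsymbol{q})$ is $\wpefi$ if and only if $\wmin(A,\boldsymbol{q}) \ge \wmax(A,\boldsymbol{q})$. The forward direction follows by instantiating the defining inequality of $\wpefi$ with a least weighted spender on the left-hand side and an arbitrary agent $j$ with $A_j \neq \emptyset$ on the right-hand side, then taking the maximum over such $j$; the converse is immediate from the chain $\frac{1}{w_i}q(A_i) \ge \wmin(A,\boldsymbol{q}) \ge \wmax(A,\boldsymbol{q}) \ge \frac{1}{w_j}\min_{o \in A_j}q(A_j \setminus \{o\})$. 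Thus it suffices to show $\wmin(A,\bfp') \ge \wmax(A,\bfp')$, and since the hypothesis already gives $\wmax(A,\bfp) \ge \wmax(A,\bfp')$, it is enough to prove $\wmin(A,\bfp') = \wmax(A,\bfp)$.

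To compute $\wmin(A,\bfp')$, I would use the fact that $\gamma = x$ places us in the branch where $\wmin(A,\bfp) > 0$ and $x = \min\{x,y,z\}$; in particular $\min\{x,y\} = x \le z$, so Lemma~\ref{lem:identity:leastspender} applies and guarantees that $i^*$ remains a least weighted spender under $\bfp'$, i.e.\ $\wmin(A,\bfp') = \frac{1}{w_{i^*}}p'(A_{i^*})$. Since $i^* \in I$ and the maintained MBB invariant gives $A_{i^*} \subseteq \MBB_{i^*}(\bfp) \subseteq \MBB(\bfp,I)$, every item owned by $i^*$ has its price scaled by exactly $\gamma = x$ in the price-rise step, so $p'(A_{i^*}) = x \cdot p(A_{i^*})$. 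Consequently $\wmin(A,\bfp') = x \cdot \wmin(A,\bfp) = \frac{\wmax(A,\bfp)}{\wmin(A,\bfp)} \cdot \wmin(A,\bfp) = \wmax(A,\bfp)$, where dividing is legitimate because $\wmin(A,\bfp) > 0$ in this branch.

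Combining the two parts, $\wmin(A,\bfp') = \wmax(A,\bfp) \ge \wmax(A,\bfp')$, and the equivalence from the first paragraph yields that $(A,\bfp')$ is $\wpefi$. I do not anticipate a genuine obstacle: the argument is a short calculation, and the only points requiring care are (i)~checking that the branch condition $\gamma = x$ indeed supplies the inequality $x \le z$ needed to invoke Lemma~\ref{lem:identity:leastspender}, and (ii)~using $i^* \in I$ together with the MBB invariant so that the uniform price increase by the factor $x$ applies to all of $i^*$'s items and hence multiplies her weighted spending by exactly $x$.
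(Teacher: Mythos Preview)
Your proposal is correct and follows essentially the same approach as the paper's proof: both arguments use Lemma~\ref{lem:identity:leastspender} to ensure that $i^*$ remains a least weighted spender under $\bfp'$, then compute $\wmin(A,\bfp') = x\cdot\wmin(A,\bfp) = \wmax(A,\bfp) \ge \wmax(A,\bfp')$ and conclude $\wpefi$. Your write-up simply spells out more of the details that the paper leaves implicit, in particular the equivalence between $\wpefi$ and $\wmin \ge \wmax$, and the reason all of $i^*$'s items are scaled by $x$.
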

\begin{proof}
Suppose that $\gamma = x$ and $\wmax(A,\bfp) \geq \wmax(A,\bfp')$. Then, the algorithm terminates by Lemma \ref{lem:identity:leastspender} and by definition of $x$. Indeed, at price vector $\bfp'$, we have
$$
\wmin(A,\bfp')=x \cdot \wmin(A,\bfp)=\wmax(A,\bfp) \geq \wmax(A,\bfp'),
$$
and thus $(A,\bfp')$ is $\wpefi$. 
\end{proof}

\begin{lemma}\label{lem:wmax:termination}
If $\wmax(A,\bfp) < \wmax(A,\bfp')$, then $(A,\bfp')$ is $3\epsilon$-$\wpefi$.
\end{lemma}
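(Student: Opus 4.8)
The plan is to control $\wmax(A,\bfp')$ and $\wmin(A,\bfp')$ separately in terms of the pre-update quantities, exploiting that the price-rise phase multiplies by $\gamma$ exactly the prices of the items in $\MBB(\bfp,I)$, which --- because $A$ is complete and $A_i\subseteq\MBB_i(\bfp)$ for every $i$ --- equals $\bigcup_{i\in I}A_i$, while by Lemma~\ref{lem:xy}(ii) every item owned by an agent outside $I$ keeps its price. Writing $M^{\mathrm{out}}:=\max_{j\notin I,\,A_j\neq\emptyset}\min_{o\in A_j}\frac{1}{w_j}p(A_j\setminus\{o\})$ and $M^{\mathrm{in}}:=\max_{j\in I,\,A_j\neq\emptyset}\min_{o\in A_j}\frac{1}{w_j}p(A_j\setminus\{o\})$, this gives $\wmax(A,\bfp')=\max\{M^{\mathrm{out}},\,\gamma M^{\mathrm{in}}\}$ with $M^{\mathrm{out}}\le\wmax(A,\bfp)$, and, analogously, $\wmin(A,\bfp')=\min\{z,\gamma\}\cdot\wmin(A,\bfp)$ whenever $\wmin(A,\bfp)>0$ (using the definition of $z$ and that $i^*\in I$ attains $\wmin(A,\bfp)$).

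Next I would use that we are in a price-rise phase, so $i^*$ has no $\epsilon$-path-violator. Every $j\in I$ with $A_j\neq\emptyset$ reaches $i^*$, so taking a shortest path $P$ from $j$ to $i^*$ with critical item $o(j,P)\in A_j$, the absence of a path-violator yields $\min_{o\in A_j}\frac{1}{w_j}p(A_j\setminus\{o\})\le\frac{1}{w_j}p(A_j\setminus\{o(j,P)\})\le\frac{1+\epsilon}{w_{i^*}}p(A_{i^*})=(1+\epsilon)\wmin(A,\bfp)$; hence $M^{\mathrm{in}}\le(1+\epsilon)\wmin(A,\bfp)$. In the degenerate case $\wmin(A,\bfp)=0$ this forces $M^{\mathrm{in}}=0$ (and $i^*$ has an empty bundle, since all prices are positive), so $\wmax(A,\bfp')=M^{\mathrm{out}}\le\wmax(A,\bfp)$ and the lemma's hypothesis cannot hold; thus I may assume $\wmin(A,\bfp)>0$. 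Then, under $\wmax(A,\bfp)<\wmax(A,\bfp')$, the maximum defining $\wmax(A,\bfp')$ cannot be attained by $M^{\mathrm{out}}$, so $\wmax(A,\bfp')=\gamma M^{\mathrm{in}}\le\gamma(1+\epsilon)\wmin(A,\bfp)$.

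It then remains to show $\wmin(A,\bfp')=\min\{\gamma,z\}\wmin(A,\bfp)\ge\frac{\gamma(1+\epsilon)}{1+3\epsilon}\wmin(A,\bfp)$, which combined with the previous estimate gives $\wmax(A,\bfp')\le\gamma(1+\epsilon)\wmin(A,\bfp)\le(1+3\epsilon)\wmin(A,\bfp')$, i.e.\ exactly that $(A,\bfp')$ is $3\epsilon$-$\wpefi$. If $\gamma\le z$, the inequality is immediate from $\frac{1+\epsilon}{1+3\epsilon}<1$. If $z<\gamma$, I would rule out $\gamma=x$ and $\gamma=y$ by reading off the branch conditions of Algorithm~\ref{alg:POwEF1} (in those branches $\gamma$ is the minimum of $\{x,y,z\}$, hence $\gamma\le z$), leaving only $\gamma=1+\epsilon$; for that case I use $z\ge 1$ (since $i^*$ is the overall least weighted spender, $\frac{1}{w_j}p(A_j)\ge\wmin(A,\bfp)$ for all $j$) together with $(1+\epsilon)^2<1+3\epsilon$ (equivalent to $\epsilon<1$) to get $z\ge 1>\frac{(1+\epsilon)^2}{1+3\epsilon}=\frac{\gamma(1+\epsilon)}{1+3\epsilon}$.

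The main obstacle is the bookkeeping of the first step --- verifying that the price update acts cleanly as multiplication by $\gamma$ on $\bigcup_{i\in I}A_i$ and as the identity elsewhere, for which Lemma~\ref{lem:xy}(ii) and $A_i\subseteq\MBB_i(\bfp)$ are essential --- together with the case split on $\gamma$ in the final step: one must be careful that the only way $z<\gamma$ can occur is $\gamma=1+\epsilon$, and that the degenerate situation $\wmin(A,\bfp)=0$ (where $3\epsilon$-$\wpefi$ would literally demand $\wmax(A,\bfp')=0$) is harmless precisely because it is incompatible with the lemma's hypothesis. Everything else is elementary manipulation of the definitions and the single inequality $\epsilon<1$.
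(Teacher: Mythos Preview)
Your proposal is correct and follows essentially the same argument as the paper. The paper picks an agent $j$ attaining $\wmax(A,\bfp')$, argues $j\in I$ (your $M^{\mathrm{out}}$-versus-$M^{\mathrm{in}}$ decomposition), bounds $j$'s term by $(1+\epsilon)\wmin(A,\bfp)$ via the no-path-violator condition, and then case-splits on whether the identity of the least weighted spender changes---which is exactly your split $\gamma\le z$ versus $z<\gamma$, concluding in the latter case that $\gamma=1+\epsilon$ and using $(1+\epsilon)^2\le 1+3\epsilon$.
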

\begin{proof}
Suppose that $\wmax(A,\bfp)< \wmax(A,\bfp')$, and consider an agent $j \in N$ such that $\wmax(A,\bfp')=\frac{1}{w_j}\min_{o \in A_j} p'(A_j \setminus \{o\})$. 
If $j \not \in I$, then the prices of items owned by $j$ do not change by Lemma \ref{lem:xy}$($ii$)$ and hence $\wmax(A,\bfp')=\wmax(A,\bfp)$, a contradiction. Thus, $j \in I$, meaning that $j$ has a directed path $P$ to the least weighted spender $i^*$ in $D(A,\bfp)$. By the {\bf if}-condition in Line \ref{line:swap}, $j$ is not an $\epsilon$-path-violator with respect to $(A,\bfp)$; thus, by deleting the item $o(j,P)$ from $A_j$, $j$'s weighted spending becomes smaller than or equal to $(1+\epsilon)\wmin(A,\bfp)$, which implies 
\begin{equation}\label{eq:wmin}
(1+\epsilon)\wmin(A,\bfp) \geq \min_{o \in A_j} \frac{1}{w_j}  p(A_j\setminus \{o\}).  
\end{equation}
Consider first the case where $\wmin(A,\bfp') = \gamma \wmin(A,\bfp)$, meaning that the identity of $i^*$ did not change in going from $\bfp$ to $\bfp'$. Then, $(A,\bfp')$ is $\epsilon$-$\wpefi$ since
\begin{align*}
(1+\epsilon)\wmin(A,\bfp') &=\gamma (1+\epsilon) \wmin(A,\bfp) \\
&\geq \gamma \min_{o \in A_j} \frac{1}{w_j}  p(A_j\setminus \{o\}) \\
&=  \wmax(A,\bfp'),
\end{align*}
where the inequality follows from \eqref{eq:wmin}. 
Next, consider the case where $\wmin(A,\bfp') \neq \gamma \wmin(A,\bfp)$. By definition, $\wmin(A,\bfp') < \gamma \wmin(A,\bfp)$, which means that $\wmin(A,\bfp)>0$ and $x$ takes a finite value. In this case, the identity of $i^*$ has changed, so by Lemma \ref{lem:identity:leastspender}, $\min \{x,y\}>z$. Thus, $\gamma=(1+\epsilon)$.  
Since $\wmin(A,\bfp) \leq \wmin(A,\bfp')$, 
Inequality~\eqref{eq:wmin} implies
\[
(1+\epsilon)\wmin(A,\bfp') \geq \min_{o \in A_j} \frac{1}{w_j} p(A_j \setminus \{o\}). 
\]
Multiplying this with $\gamma=(1+\epsilon)$, we get
\begin{align*}
(1+3\epsilon)\wmin(A,\bfp') &\geq (1+\epsilon)^2\wmin(A,\bfp') \\
&\geq (1+\epsilon)\min_{o \in A_j} \frac{1}{w_j} p(A_j \setminus \{o\}) = \min_{o \in A_j} \frac{1}{w_j} p'(A_j \setminus \{o\}) =\wmax(A,\bfp'). 
\end{align*}
We conclude that $(A,\bfp')$ is $3\epsilon$-$\wpefi$, as desired.
\end{proof}

The two lemmas immediately imply that scaling with $x$ ensures the termination of the algorithm. 

\begin{corollary}\label{cor:x:termination}
If $\gamma = x$, then $(A,\bfp')$ is $3\epsilon$-$\wpefi$. 
\end{corollary}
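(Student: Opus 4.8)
The plan is to obtain the corollary immediately from the two preceding lemmas by an exhaustive case split, after first recording the elementary observation that the plain $\wpefi$ property is stronger than the relaxed $3\epsilon$-$\wpefi$ property.

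First I would note that if $(A,\bfp')$ is $\wpefi$, then it is in particular $3\epsilon$-$\wpefi$: by definition, $\wpefi$ with respect to $\bfp'$ says exactly that $\wmax(A,\bfp') \le \wmin(A,\bfp')$, and since $\epsilon>0$ and $\wmin(A,\bfp') \ge 0$ this gives $\wmax(A,\bfp') \le (1+3\epsilon)\,\wmin(A,\bfp')$, i.e., no agent is a $3\epsilon$-violator. Then I would split on the comparison between $\wmax(A,\bfp)$ and $\wmax(A,\bfp')$, which is obviously exhaustive. If $\wmax(A,\bfp) \ge \wmax(A,\bfp')$, then since $\gamma = x$ by hypothesis, Lemma~\ref{lem:priceupdate:x} yields that $(A,\bfp')$ is $\wpefi$, hence $3\epsilon$-$\wpefi$ by the observation above. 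If instead $\wmax(A,\bfp) < \wmax(A,\bfp')$, then Lemma~\ref{lem:wmax:termination} applies verbatim and already states that $(A,\bfp')$ is $3\epsilon$-$\wpefi$. Either way the conclusion holds.

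There is essentially no obstacle here: the only points to check are that the dichotomy $\wmax(A,\bfp) \ge \wmax(A,\bfp')$ versus $\wmax(A,\bfp) < \wmax(A,\bfp')$ covers every case (it does), and that the strict form of $\wpefi$ delivered by Lemma~\ref{lem:priceupdate:x} indeed implies the weaker $3\epsilon$ notion (it does, as shown above). Consequently the corollary is a one-line consequence of Lemmas~\ref{lem:priceupdate:x} and~\ref{lem:wmax:termination}, and its role is simply to package the guarantee that choosing $\gamma=x$ in the price-rise phase forces the outer \textbf{while}-loop of Algorithm~\ref{alg:POwEF1} to terminate with a $3\epsilon$-$\wpefi$ outcome.
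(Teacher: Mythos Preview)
Your proposal is correct and follows exactly the same approach as the paper: the paper's proof is the identical two-case split on whether $\wmax(A,\bfp) \ge \wmax(A,\bfp')$ or $\wmax(A,\bfp) < \wmax(A,\bfp')$, invoking Lemma~\ref{lem:priceupdate:x} in the first case and Lemma~\ref{lem:wmax:termination} in the second. The only difference is that you spell out explicitly why $\wpefi$ implies $3\epsilon$-$\wpefi$, which the paper leaves tacit.
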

\begin{proof}
Suppose $\gamma = x$. If $\wmax(A,\bfp) \geq \wmax(A,\bfp')$, then $(A,\bfp')$ is $\wpefi$ by Lemma~\ref{lem:priceupdate:x}. If $\wmax(A,\bfp) < \wmax(A,\bfp')$, then $(A,\bfp')$ is $3\epsilon$-$\wpefi$ by Lemma \ref{lem:wmax:termination}. This proves the claim.  
\end{proof}

\subsubsection{Convergence of Algorithm \ref{alg:POwEF1}}
Using Lemmas \ref{lem:priceupdate:x} and \ref{lem:wmax:termination}, we will show that during its course, the algorithm repeatedly increases the minimum weighted spending $\wmin(A,\bfp)$ by a factor of $(1+\epsilon)$ within polynomial time, and this spending is bounded above by the function $\wmax(A,\bfp)$, which is non-increasing.
We then show that the algorithm indeed converges to a $3\epsilon$-$\wpefi$ allocation while keeping the MBB condition. 

To begin with, we show that the MBB condition is satisfied and each price is a power of $(1+\epsilon)$ at any point in the execution of the algorithm. Further, we obtain the monotonicity of the functions $\wmax(A,\bfp)$ and $\wmin(A,\bfp)$ by the observations we made in the previous subsection. 

\begin{lemma}\label{lem:alg:invariant}
During the execution of Algorithm \ref{alg:POwEF1}, the following statements hold: 
\begin{itemize}
\item[$($i$)$] $A$ is complete and satisfies the MBB condition, i.e., $A_i \subseteq \MBB_i(\bfp)$ for any $i \in N$. 
\item[$($ii$)$] If $\bfp$ is not the price vector at termination, then the price $p_o$ of each item $o \in O$ is a power of $(1+\epsilon)$, i.e., $p_o=(1+\epsilon)^{k}$ for some non-negative integer $k$.  
\item[$($iii$)$] Except possibly in the last iteration of the algorithm, the function $\wmax(A,\bfp)$ never increases.  
\item[$($iv$)$] The function $\wmin(A,\bfp)$ never decreases.
\end{itemize}
\end{lemma}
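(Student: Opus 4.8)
The plan is to establish all four statements simultaneously by induction on the number of completed iterations of the main \textbf{while} loop (Line~\ref{line:bigwhile}), since each invariant in one iteration is used to justify the others in the next, and each iteration executes exactly one of the two phases analyzed in the previous subsections. For the base case, consider the state right after the initialization in Lines~1--3: every item is assigned, so $A$ is complete, and since $p_o=\max_{i\in N}v_i(o)$ the assigned agent has bang per buck ratio $1$ while every other agent has ratio at most $1$, so $A_i\subseteq\MBB_i(\bfp)$ for all $i$, giving~(i). As each $v_i(o)$ is an integral power of $(1+\epsilon)$, so is $\max_{i\in N}v_i(o)$, giving~(ii); and (iii)--(iv) are vacuous before any iteration. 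The re-selection of $i^*$ at the top of the loop changes neither $A$ nor $\bfp$, so it preserves all invariants trivially.

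For the inductive step, assume (i)--(iv) hold at the start of an iteration and split on the phase executed. In a \emph{swap phase}, prices do not change, so (ii) is immediate; the only modification to $A$ is moving each $o_{h+1}$ from $i_h$ to $i_{h+1}$, which keeps the allocation complete and keeps each agent on her MBB items (agent $i_h$ loses $o_{h+1}$ and gains $o_h$, which lies in $\MBB_{i_h}(\bfp)$ because the arc $(o_h,i_h)$ is present in $D(A,\bfp)$), establishing~(i). For (iii) and (iv), the \textbf{while}-condition in Line~\ref{line:bigwhile} guarantees $\wmax(A,\bfp)>(1+3\epsilon)\wmin(A,\bfp)\ge(1+\epsilon)\wmin(A,\bfp)$ immediately before the swap, so Corollary~\ref{cor:swap:wmax} gives $\wmax(A,\bfp)\ge\wmax(A',\bfp)$ and Corollary~\ref{cor:swap:wmin} gives $\wmin(A,\bfp)\le\wmin(A',\bfp)$.

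In a \emph{price-rise phase}, $A$ is untouched so completeness is preserved, and Lemma~\ref{lem:price:MBB} gives $A_i\subseteq\MBB_i(\bfp')$ for every $i\in N$, establishing~(i). For~(ii): if $\gamma=x$ then Corollary~\ref{cor:x:termination} shows $(A,\bfp')$ is $3\epsilon$-$\wpefi$, so the loop exits and $\bfp'$ is the terminal price vector, under which (ii) imposes nothing; otherwise $\gamma\in\{y,1+\epsilon\}$, and since $y$ is a power of $(1+\epsilon)$ by Lemma~\ref{lem:xy}(i), multiplying the prices of the items in $\MBB(\bfp,I)$ by $\gamma$ keeps every price a power of $(1+\epsilon)$. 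For~(iv), I would argue directly using the inductive hypothesis: $i^*\in I$, and by invariant~(i) and $\MBB_i(\bfp)\subseteq\MBB(\bfp,I)$ for $i\in I$, every item owned by any agent in $I$ has its price scaled by $\gamma>1$, so each such agent's weighted spending rises from at least $\wmin(A,\bfp)$ to at least $\gamma\,\wmin(A,\bfp)\ge\wmin(A,\bfp)$; meanwhile, by Lemma~\ref{lem:xy}(ii) every agent outside $I$ keeps the prices of all her items, hence her weighted spending, which was already at least $\wmin(A,\bfp)$. Therefore $\wmin(A,\bfp')\ge\wmin(A,\bfp)$. Finally for~(iii): if the current iteration is not the last one, then neither $\gamma=x$ (which forces termination by Corollary~\ref{cor:x:termination}) nor $\wmax(A,\bfp)<\wmax(A,\bfp')$ (which forces termination by Lemma~\ref{lem:wmax:termination}) can occur, and hence $\wmax(A,\bfp)\ge\wmax(A,\bfp')$.

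The main point requiring care — the only place where the argument is not purely mechanical given the auxiliary lemmas of the previous two subsections — is matching the ``except possibly in the last iteration'' clause of~(iii) to the control flow of the algorithm: one must notice that \emph{every} scenario in which $\wmax$ could strictly increase during a price-rise phase (namely $\gamma=x$, or $\wmax$ increasing while $\gamma\neq x$) is exactly a scenario in which the subsequent loop test fails, so such an increase can occur only on the terminating iteration. For swap phases, Corollary~\ref{cor:swap:wmax} rules out any increase of $\wmax$ even on the last iteration. Collecting the two cases completes the induction and hence the proof of all four statements.
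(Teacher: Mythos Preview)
Your proposal is correct and follows essentially the same approach as the paper: induction over iterations of the main loop, splitting on swap versus price-rise phases, and invoking Corollaries~\ref{cor:swap:wmax} and~\ref{cor:swap:wmin} for swaps and Lemmas~\ref{lem:xy}, \ref{lem:price:MBB}, \ref{lem:wmax:termination}, and Corollary~\ref{cor:x:termination} for price rises. Your write-up is in fact more explicit than the paper's (which states~(iii) as an immediate consequence of Corollary~\ref{cor:swap:wmax} and Lemma~\ref{lem:wmax:termination}, and~(iv) from Corollary~\ref{cor:swap:wmin} plus the observation that no spending decreases in a price rise); the only minor imprecision is the phrase ``agent $i_h$ loses $o_{h+1}$ and gains $o_h$'' in the swap case, since $i_0$ gains nothing and the terminal agent on the path may only gain --- but the essential point, that any item received lies in the receiver's MBB set by the presence of the corresponding arc in $D(A,\bfp)$, is what matters and is correct.
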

\begin{proof}
\begin{itemize}
\item[$($i$)$] The allocation $A$ is clearly complete because each item is assigned to some agent at every step. 
Take any $i \in N$. We will inductively show that agents are kept assigned to their MBB items. 
Recall that at the initial allocation $A^{(0)}$ and price vector $\bfp^{(0)}$, every agent receives an item only if she has the highest valuation for the item, and the price for each item is set to be the maximum such value. This means that each agent $i$ has $\alpha_i(\bfp^{(0)})$ at most $1$ and is assigned to an item $o$ only if her bang per buck ratio for $o$ is exactly $1$. Thus, the claim holds for the initial allocation. Moreover, each agent remains assigned to her MBB items after the swap phase (between Line \ref{line:swap} and Line \ref{end:swap}). Also, by Lemma \ref{lem:price:MBB}, each agent remains assigned to her MBB items after the price-rise phase (between Line \ref{line:price-rise} and Line \ref{line:gamma}). This proves the claim. 

\item[$($ii$)$] Recall that the initial price of each item is set to be the maximum value of the agents for that item, which is $(1+\epsilon)^{k}$ for some $k \geq 0$. 
The price of each item never decreases, and increases by a power of $(1+\epsilon)$ at each price-rise phase by Lemma \ref{lem:xy}$($i$)$ and Corollary \ref{cor:x:termination} except possibly for the final iteration. Thus, the claim holds.  

\item[$($iii$)$] The claim immediately follows from Corollary \ref{cor:swap:wmax} and Lemma \ref{lem:wmax:termination}.

\item[$($iv$)$] The function $\wmin(A,\bfp)$ does not decrease due to Corollary \ref{cor:swap:wmin} and the fact that the spending of each agent does not decrease during the price-rise phase. \qedhere
\end{itemize}
\end{proof}

Now, we prove that not only is the function $\wmin(A,\bfp)$ weakly increasing, but it is also strictly increasing during the execution of the algorithm. We first consider the corner case where the weighted spending of the least weighted spender $i^*$ is zero. In this case, the value $\wmin(A,\bfp)$ may not increase even after the price-rise phase. However, we will show that the number of agents who can reach $i^*$ strictly increases, and eventually the algorithm transfers an item of positive price from a violator to $i^*$. 
By a \emph{time step} we refer to a point in the execution of the algorithm.

\begin{lemma}\label{lem:wmin:zero}
Suppose that at time step $t$, agent $i$ is chosen as the least weighted spender $i^*$ in Line~\ref{line:weightedleastspender} of Algorithm \ref{alg:POwEF1} but $p(A_{i})=0$. Then, after $n+1$ iterations of the \textbf{while} loop in Line~\ref{line:bigwhile}, either the (unweighted) spending of $i$ strictly increases by at least $1$, or the algorithm terminates. 
\end{lemma}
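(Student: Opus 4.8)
The plan is to exploit the invariant that, under the standing assumption that all inputs are integral powers of $(1+\epsilon)$, every item price stays at least $1$ throughout the execution of Algorithm~\ref{alg:POwEF1}: the initial price of $o$ is $\max_{k\in N}v_k(o)\ge 1$, and prices never decrease (Lemma~\ref{lem:alg:invariant}). Hence $p(A_{i^*})=0$ if and only if $A_{i^*}=\emptyset$, and in that case $\wmin(A,\bfp)=0$; in particular agent $i$ keeps qualifying as a least weighted spender in Line~\ref{line:weightedleastspender}, so $i^*$ is never reassigned and $\wmin(A,\bfp)$ stays $0$ for as long as $A_i$ remains empty. Since the allocation changes only during a swap phase and there the root agent $i^*=i_k$ only ever gains items (Lemma~\ref{lem:swap}), the quantity $p(A_i)$ is monotone non-decreasing, and the lemma will follow once I show that within $n+1$ iterations of the \textbf{while} loop in Line~\ref{line:bigwhile} either that loop halts or a swap phase occurs.

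First I would dispatch the swap phase. Suppose a swap phase is entered while $A_{i^*}=\emptyset$, along a shortest path $P=(i_0,o_1,i_1,\dots,o_k,i_k)$ with $i_k=i^*$. The inner loop (Line~\ref{line:swap1}) tests $\frac{1}{w_{i_h}}p(A_{i_h}\setminus\{o_{h+1}\})$ against $\frac{1+\epsilon}{w_{i^*}}p(A_{i^*})=0$. I claim this test is always passed: for $h=0$ it holds because $i_0$ is an $\epsilon$-path-violator, and for $h\ge 1$ it holds because at that moment $A_{i_h}$ already contains the just-transferred item $o_h$, which is distinct from $o_{h+1}$ (the $o_r$ are distinct vertices of a shortest path) and has price at least $1$. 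Therefore the inner loop runs until $h=k$, i.e., $o_k$ reaches $i^*=i$, so $p(A_i)$ jumps from $0$ to $p_{o_k}\ge 1$ --- the desired conclusion.

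The substantive case is a price-rise phase entered while $A_{i^*}=\emptyset$, so that $\wmin(A,\bfp)=0$ forces $\gamma=y$. Let $I$ be the set of agents who can reach $i^*$ before the price update and $I'$ the set afterwards. I would first argue $I\subseteq I'$: every directed path to $i^*$ in $D(A,\bfp)$ uses only items owned by agents of $I$, hence items of $\MBB(\bfp,I)$, and since $\MBB_a(\bfp)\subseteq\MBB_a(\bfp')$ for every $a\in I$ (shown inside the proof of Lemma~\ref{lem:price:MBB}) while ownership arcs are unchanged, every such path survives in $D(A,\bfp')$. I would then show the inclusion is strict: $y$ is defined as the minimum of $\alpha_{i'}(\bfp)p_o/v_{i'}(o)$ over $i'\in I$ and $o\in O\setminus\MBB(\bfp,I)$, so after scaling the prices of $\MBB(\bfp,I)$ by $y$ some minimizing pair yields a new arc $o\to i'$ with $i'\in I\subseteq I'$ and $o\notin A_{i'}$; the owner $j$ of $o$ cannot lie in $I$ (else $o\in A_j\subseteq\MBB(\bfp,I)$), yet now $j\to o\to i'\to\cdots\to i^*$ in $D(A,\bfp')$, so $j\in I'\setminus I$ and $|I'|\ge|I|+1$.

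To finish I would combine these observations. Lemma~\ref{lem:MBB:price} --- applicable at the start of every price-rise phase because the outer \textbf{while} condition guarantees an $\epsilon$-violator --- gives $N\setminus I\neq\emptyset$, i.e., $1\le|I|\le n-1$, so at most $n-2$ consecutive price-rise phases can occur before $|I|$ would overflow; hence among any $n+1$ consecutive iterations either the loop terminates or a swap phase must occur, and in the latter case $p(A_i)$ increases by at least $1$. The part I expect to be the main obstacle is exactly the price-rise analysis --- proving that $I$ is monotone and grows strictly --- since it depends on correctly pinning down which MBB arcs survive a uniform price increase on $\MBB(\bfp,I)$ and on locating the single new arc forced by the choice of $y$; by contrast, everything else reduces to the elementary observation that, with all prices at least $1$, zero spending means an empty bundle.
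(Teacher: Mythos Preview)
Your proposal is correct and follows essentially the same approach as the paper: both argue that a swap phase necessarily delivers an item of price at least $1$ to $i^*$, and that consecutive price-rise phases strictly enlarge the reachability set $I$ (bounded by $n-1$ via Lemma~\ref{lem:MBB:price}), so a swap must occur within the claimed number of iterations. Your handling of the swap phase checks the inner-loop condition directly rather than phrasing it through the path-violator notion as the paper does, and you spell out the inclusion $I\subseteq I'$ a bit more explicitly, but these are cosmetic differences.
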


\begin{proof}
Recall that the initial prices of all items are positive, so the prices of all items are positive at time step $t$. Thus, agent $i$ is not assigned to any item at time step $t$. Further, since allocation $A$ is complete at any point of the algorithm, there is always an agent $j \neq i$ with $|A_j| \geq 2$ as long as $p(A_i)=0$ and $\wmin(A,\bfp) < \wmax(A,\bfp)$. 

We first show that whenever the algorithm performs the swap phase, $i$ receives a new item, which means that $i$'s spending strictly increases and proves the desired claim. To see this, suppose that the algorithm enters the swap phase at some time step when $p(A_i)=0$. Observe that agents with at least one item have a positive spending as all prices are positive. Hence, the swap operation along the directed path $P=(i_0,o_1,i_1,\ldots,o_k,i_k)$ ends when it gives the last item $o_k$ to $i_k=i$, since if some intermediate agent $i_h$ with $0<h<k$ owns at least two items, then she becomes a path-violator for $i$. (Recall that by definition of $D(A,\bfp)$, before the swaps are performed, each of the agents $i_0,i_1,\dots,i_{k-1}$ has at least one item.) Thus, $i$'s spending becomes positive if the algorithm enters the swap phase, and the increase is at least $1$ since the prices are integral powers of $(1+\epsilon)$.

It remains to show that after at most $n$ consecutive iterations of the price-rise phase during which $p(A_i)=0$, some violator becomes a path-violator for $i$ and hence the algorithm enters the swap phase. 
To this end, it is sufficient to show that whenever the algorithm performs the price-rise phase, the number of agents who can reach $i$ (namely, agents in $I$) strictly increases.
Suppose that the algorithm increases the price of MBB items of the set $I$ of agents who can reach $i$. 
Since the set $I$ does not include an $\epsilon$-path-violator for $i$, no agent in $I$ has more than one item: Indeed, if some agent $j\in I$ has at least two items, $j$ still has a positive weighted spending even after removing any item in $A_j$; however, at least one of $j$'s items has a directed path to $i$ since $j\in I$, meaning that $j$ is an $\epsilon$-path-violator.
Further, when $p(A_i)=0$, the algorithm raises prices of MBB items of $I$ by scaling with $y$ defined in Line \ref{line:y}.
By definition of $y$, after the price-rise phase, some item $o$ not in $\MBB(\bfp,I)$ becomes a new MBB item for an agent in $I$; hence, some agent not in $I$ can now reach $i$, while all agents in $I$ can still reach $i$. It follows that the number of agents who can reach $i$ strictly increases after each price-rise phase. We conclude that the algorithm can perform at most $n$ consecutive price-rise phases. 
\end{proof}

The following lemmas further ensure that when $\wmin(A,\bfp)>0$, the minimum weighted spending strictly increases within a polynomial number of time steps.

\begin{lemma}\label{lem:identity}
During the execution of Algorithm \ref{alg:POwEF1}, suppose that agent $i$ ceases to be the least weighted spender $i^*$ just after time step $t_1$. 
Suppose further that $i$ is chosen again later as $i^*$, and let $t_2$ be the first time step when this happens.
Let $(A,\bfp)$ (respectively, $(A',\bfp')$) be the pair of allocation and price vector at time $t_1$ (respectively, time $t_2$). 
Then, it holds that $A_i \subsetneq A'_i$ or $\frac{1}{w_i}p'(A'_i) \geq \frac{1+\epsilon}{w_i}p(A_i)$. 
\end{lemma}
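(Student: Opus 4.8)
The plan is to follow agent $i$ from time step $t_1$, at which $i=i^*$ is a least weighted spender (so $\frac{1}{w_i}p(A_i)=\wmin(A,\bfp)$), through to time step $t_2$, and to show that a ``significant event'' must befall $i$ in the interim. We may assume $p(A_i)>0$, hence $\wmin(A,\bfp)>0$, since otherwise $\frac{1}{w_i}p'(A'_i)\ge\frac{1+\epsilon}{w_i}p(A_i)$ holds trivially. Throughout I would invoke the invariants of Lemma~\ref{lem:alg:invariant}: $\wmin$ is non-decreasing, item prices never decrease, and every agent's bundle always consists of MBB items.

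First I would identify the operation performed at iteration $t_1$. Since $i$ ceases to be a least weighted spender immediately afterwards, this operation cannot be a price-rise with scaling factor $\gamma=x$ (by Corollary~\ref{cor:x:termination} the algorithm would then terminate and $t_2$ would not exist), nor a price-rise with $\gamma=y$ or with $\wmin(A,\bfp)=0$ (by Lemma~\ref{lem:identity:leastspender}, $i^*$ would remain a least weighted spender), nor a swap whose path fails to reach $i=i^*$ (by Corollary~\ref{cor:swap:wmin}, neither $\wmin$ nor $A_i$ would change, so $i$ would remain a least weighted spender). Hence the operation at $t_1$ is either (a) a price-rise with $\gamma=1+\epsilon$, after which $i$'s bundle is still $A_i$ while $p(A_i)$ has grown by the factor $1+\epsilon$ (every item of $A_i$ gets rescaled, since $A_i\subseteq\MBB_i(\bfp)\subseteq\MBB(\bfp,I)$ as $i\in I$), or (b) a swap along a path $(i_0,o_1,\dots,o_k,i_k)$ with $i_k=i$ that reaches $i$, after which $i$ owns $A_i\cup\{o_k\}\supsetneq A_i$.

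Next I would track $i$ over the iterations $t_1,t_1+1,\dots,t_2-1$. Apart from the operation at $t_1$ (case (a) or (b) above), at every such iteration $i$ is not a least weighted spender, hence never the target of a swap; consequently $i$'s bundle can change only when $i$ participates in a swap as the source or as an intermediate agent, and $i$'s spending can change only through such swaps and through price-rises. From the inner while-condition in Line~\ref{line:swap1} and the definition of an $\epsilon$-path-violator, I would observe that immediately after any swap in which $i$ acts as the source, or as an intermediate agent past which the swap proceeds, $i$'s weighted spending exceeds $(1+\epsilon)$ times the value of $\wmin$ at the start of that swap, which is at least $\wmin(A,\bfp)=\frac{1}{w_i}p(A_i)$ by monotonicity of $\wmin$; whereas if instead the swap halts at an intermediate agent that happens to be $i$, then $i$ strictly gains an item. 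Moreover, between two consecutive swaps involving $i$ (and before the first such swap) $i$'s weighted spending does not decrease, since price-rises only raise it and swaps not involving $i$ leave it untouched. Combining these observations: if $i$ ever acts as the source or a pass-through intermediate agent of a swap during $t_1+1,\dots,t_2-1$, then $i$'s weighted spending at $t_2$ still exceeds $(1+\epsilon)\cdot\frac{1}{w_i}p(A_i)$, which is the second alternative; otherwise $i$ never loses an item from iteration $t_1$ onward, so its bundle only grows, and combining with case (b) gives $A_i\subsetneq A'_i$, while in case (a) either $A'_i\supsetneq A_i$, or else $A'_i=A_i$ and $p'(A'_i)\ge(1+\epsilon)p(A_i)$ because prices never decrease. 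In every case the lemma follows.

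The step I expect to be the main obstacle is the bookkeeping in the third paragraph: pinning down exactly what the inner while-loop of the swap phase (Line~\ref{line:swap1}) guarantees about the weighted spending of each agent it passes through, as opposed to the one at which it stops, and then stitching these guarantees together across a possibly long sequence of swaps in which $i$ alternately gains and loses items, so that the multiplicative bound $\frac{1}{w_i}p'(A'_i)>(1+\epsilon)\wmin(A,\bfp)$ --- rather than a mere set-inclusion, which fails as soon as $i$ loses an item --- propagates all the way to $t_2$.
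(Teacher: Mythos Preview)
Your proposal is correct and follows essentially the same approach as the paper's proof: split on whether $i$ loses an item between $t_1$ and $t_2$, and in the losing case use the $\epsilon$-path-violator condition at the last such loss together with the monotonicity of $\wmin$ (Lemma~\ref{lem:alg:invariant}(iv)) to get the $(1+\epsilon)$ factor, while in the non-losing case the event at $t_1$ (receiving an item or a price-rise by a factor $\ge 1+\epsilon$) already suffices. Your case analysis of the operation at $t_1$ is a bit more detailed than the paper's---you explicitly rule out $\gamma=y$ via Lemma~\ref{lem:identity:leastspender}, whereas the paper simply notes that any non-terminating price-rise factor is $\ge 1+\epsilon$---but the substance is the same.
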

\begin{proof}
Since agent $i$ is not the least weighted spender at time $t_1+1$, the weighted spending of $i$ strictly increases in going from $t_1$ to $t_1+1$, either by receiving a new item in Line \ref{line:transfer-item} or by the price-rise of her items in Line \ref{line:gamma}. 

First, consider the case when $i$ does not lose any item between $t_1$ and $t_2$. Then, the claim clearly holds when $i$ gains some new item at $t_1$. Thus, suppose that the algorithm raises prices at time $t_1$. By Corollary \ref{cor:x:termination}, it raises prices by a factor of either $y$ or $(1+\epsilon)$, both of which are powers of $(1+\epsilon)$. This implies that the weighted spending of $i$ strictly increases at least by a factor of $(1+ \epsilon)$ in going from $t_1$ to $t_2$. 

Now consider the case when $i$ loses some item between $t_1$ and $t_2$, which means that $i$ becomes an $\epsilon$-path-violator for the least weighted spender $i^*$ at some time step between $t_1$ and $t_2$. Let $t'$ be the last time step before $t_2$ when $i$ becomes an $\epsilon$-path-violator and loses an item $o$. Then, $i$'s weighted spending after giving item $o$ to her neighbor is strictly higher than the value of $(1+\epsilon)\wmin(A,\bfp)$ at time step~$t'$, which is greater than or equal to $(1+\epsilon)$ times $i$'s weighted spending at time $t_1$ by Lemma \ref{lem:alg:invariant}$($iv$)$. Further, the weighted spending of $i$ does not decrease between $t'$ and $t_2$. Hence, $i$'s weighted spending increases by a factor of at least $(1+\epsilon)$.  
\end{proof}

Due to Lemma \ref{lem:potential2}, the number of consecutive swap phases is bounded by a polynomial in the input size.

\begin{lemma}\label{lem:identity:swap}
During the execution of Algorithm \ref{alg:POwEF1}, the number of consecutive swap phases during which $i^*$ remains the chosen least weighted spender is at most $mn^2$. 
\end{lemma}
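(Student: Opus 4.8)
The plan is to control the number of consecutive swap phases (during which the chosen least weighted spender is fixed) by the potential function $\Phi_{i^*}$ introduced before Lemma~\ref{lem:potential}, exploiting the fact that each swap phase strictly decreases this potential while leaving the prices untouched.

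First I would observe that along any maximal run of consecutive iterations of the \textbf{while} loop in Line~\ref{line:bigwhile} that all execute the swap phase and that all keep the same chosen least weighted spender $i^*$, the price vector $\bfp$ never changes: prices are modified only in the price-rise phase (Line~\ref{line:gamma}), which is the \textbf{else} branch not taken in any of these iterations. Moreover, since $i^*$ remains a least weighted spender throughout the run, the conditional update in Line~\ref{line:weightedleastspender} is never triggered, so the root agent of the potential is a fixed agent, call it $i$, for the whole run, and $\Phi_i(A,\bfp)$ is evaluated at the \emph{same} price vector $\bfp$ at every step of the run.

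Next I would apply Lemma~\ref{lem:potential2} to each swap phase in the run: if $A$ and $A'$ denote the allocations immediately before and after one such swap phase, then $\Phi_i(A,\bfp)-1 \ge \Phi_i(A',\bfp)$, so the value of $\Phi_i(\cdot,\bfp)$ drops by at least $1$ at each swap phase. On the other hand, as noted when $\Phi_i$ was defined, $0 \le \Phi_i(A,\bfp) \le mn^2$ for every allocation $A$, since $1 \le h_i(j,A,\bfp) \le n-1$ and $0 \le |C_i(j,A,\bfp)| \le m$ for each of the $n-1$ agents $j \ne i$. A quantity that is at most $mn^2$ at the start of the run, remains non-negative, and decreases by at least $1$ at every swap phase can undergo at most $mn^2$ such steps; hence the run contains at most $mn^2$ swap phases, which is exactly the claim.

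I do not expect a serious obstacle here: the only points requiring care are (i) confirming that the price vector is genuinely constant during consecutive swap phases — immediate from the structure of Algorithm~\ref{alg:POwEF1}, as price changes occur exclusively in the price-rise branch — and (ii) confirming that the root agent used in the potential coincides with the fixed least weighted spender throughout the run, which is guaranteed by the hypothesis that $i^*$ does not change. Once these are in place, the bound is a direct consequence of Lemma~\ref{lem:potential2} together with the a priori bound $\Phi_i \le mn^2$.
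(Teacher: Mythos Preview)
Your proposal is correct and follows essentially the same argument as the paper: invoke Lemma~\ref{lem:potential2} to get a drop of at least $1$ in $\Phi_{i^*}(A,\bfp)$ per swap phase, and combine with the a priori bound $0\le\Phi_{i^*}\le mn^2$. The additional observations you make about the price vector and root agent being fixed during the run are accurate and justify the application of Lemma~\ref{lem:potential2}, though the paper's proof leaves these points implicit.
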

\begin{proof}
Let $i$ be the least weighted spender $i^*$ chosen in Line \ref{line:weightedleastspender} of Algorithm \ref{alg:POwEF1} at some time step~$t$. By Lemma \ref{lem:potential2}, each swap operation between Line \ref{line:swap1} and Line \ref{end:swap} causes a drop of at least $1$ in the value of the function $\Phi_{i}(A,\bfp)$, which is non-negative and bounded above by $mn^2$, implying that Algorithm \ref{alg:POwEF1} can perform at most $mn^2$ iterations of the swap phase during which $i$ is the least weighted spender. 
\end{proof}

\begin{lemma}\label{lem:wmin:increase}
During the execution of Algorithm \ref{alg:POwEF1}, after $\poly(m,n)$ time, either the algorithm terminates, or $\wmin(A,\bfp)$ strictly increases by a factor of at least $(1+\epsilon)$ . 
\end{lemma}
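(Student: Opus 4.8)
The plan is to fix an arbitrary time step, let $s := \wmin(A,\bfp)$ at that step, and bound the number of iterations of the main \textbf{while} loop (Line~\ref{line:bigwhile}) before either the algorithm terminates or $\wmin(A,\bfp)$ reaches $(1+\epsilon)s$. First I would reduce to the case $s>0$. As long as $\wmin(A,\bfp)=0$, the currently chosen $i^*$ owns no item (all prices remain positive by Lemma~\ref{lem:alg:invariant}(ii)), so Lemma~\ref{lem:wmin:zero} applies and within $n+1$ iterations either the algorithm terminates or $p(A_{i^*})$ becomes positive. Moreover, no agent's bundle can ever become empty once it is nonempty: in a swap phase the only agent who loses items is the path-violator $i_0$, and the \textbf{while}-condition in Line~\ref{line:swap1} that triggered its first swap forces $\frac{1}{w_{i_0}}p(A'_{i_0})>\frac{1+\epsilon}{w_{i^*}}p(A_{i^*})\ge 0$, while every intermediate agent on the path keeps at least one of $o_h,o_{h+1}$ (cf.\ the case analysis in Lemma~\ref{lem:swap}) and a price-rise phase changes no bundle. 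Hence the set of agents with empty bundles only shrinks, so Lemma~\ref{lem:wmin:zero} can be invoked at most $n$ times, and after $O(n^2)$ iterations either the algorithm terminates or $\wmin(A,\bfp)>0$; it therefore suffices to prove the claim assuming $s>0$.

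For $s>0$ I would call a \emph{round} a maximal block of consecutive iterations in which the chosen agent $i^*$ does not change, and restrict attention to the \emph{window} of iterations during which the algorithm has not terminated and $\wmin(A,\bfp)<(1+\epsilon)s$; throughout the window $\wmin(A,\bfp)\in[s,(1+\epsilon)s)$ by Lemma~\ref{lem:alg:invariant}(iv). The key local observation concerns a price-rise phase inside the window. If $\gamma=x$ the algorithm terminates by Corollary~\ref{cor:x:termination}, contradicting the window. If $\gamma=y$, then $y=\min\{x,y,z\}\ge 1+\epsilon$ (using Lemma~\ref{lem:xy}(i), which applies since $O\setminus\MBB(\bfp,I)\neq\emptyset$ by Lemma~\ref{lem:MBB:price}); since every agent in $I$ has its weighted spending scaled by $\gamma$ and every $j\notin I$ has weighted spending $\ge z\cdot\wmin(A,\bfp)\ge\gamma\cdot\wmin(A,\bfp)$, the new minimum weighted spending is $\ge\gamma s\ge(1+\epsilon)s$, again leaving the window. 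The only remaining case is $\gamma=1+\epsilon$, which by the \textbf{if}--\textbf{elif}--\textbf{else} structure (Lines~\ref{line:price-rise:x}--\ref{line:price:z}) forces $z<\min\{x,y\}$; if moreover $z\ge 1+\epsilon$ the same computation again yields a factor-$(1+\epsilon)$ jump, so inside the window $z\in[1,1+\epsilon)$ and then the $j\notin I$ attaining $z$ has unchanged weighted spending $z\cdot\wmin(A,\bfp)<(1+\epsilon)\wmin(A,\bfp)$, which is strictly below $i^*$'s new weighted spending; hence $i^*$ is re-chosen at the next iteration. In particular, inside the window any price-rise phase is the last iteration of its round.

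Combining this with Lemma~\ref{lem:identity:swap} (which bounds consecutive swap phases with a fixed $i^*$ by $mn^2$), every round inside the window has at most $mn^2$ swap phases followed by at most one price-rise phase, i.e.\ at most $mn^2+1$ iterations. To bound the number of rounds I would use Lemma~\ref{lem:identity}: if an agent $a$ is the chosen $i^*$ in two rounds, with $t_1$ the step just after it ceases to be $i^*$ following the earlier round and $t_2$ the first step of the later round, then either $A_a^{(t_1)}\subsetneq A_a^{(t_2)}$, or $a$'s weighted spending grows by a factor of at least $1+\epsilon$ between $t_1$ and $t_2$. The second alternative is impossible inside the window, because at $t_1$ agent $a$'s weighted spending is at least the value of $\wmin(A,\bfp)$ at the start of the earlier round, hence at least $s$, so at $t_2$ — where $a$ is the chosen least weighted spender — $\wmin(A,\bfp)$ equals $a$'s weighted spending there, which would be $\ge(1+\epsilon)s$. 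Thus $A_a^{(t_1)}\subsetneq A_a^{(t_2)}$, and since $a$ never loses items while it is $i^*$, the bundle sizes of $a$ at the ends of its successive rounds form a strictly increasing sequence in $\{0,1,\dots,m\}$, so $a$ can be chosen as $i^*$ at most $m+1$ times in the window. Therefore the window contains at most $n(m+1)$ rounds, hence at most $n(m+1)(mn^2+1)=O(m^2 n^3)$ iterations, each of which runs in $\poly(m,n)$ time; this establishes the lemma.

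The hardest part will be the bookkeeping in the last two paragraphs: cleanly delimiting the rounds, and verifying that each of the two ways $i^*$ can change inside the window (a swap phase that pushes $i^*$'s weighted spending past some other agent's, or a price-rise with $\gamma=1+\epsilon$ and $z\in[1,1+\epsilon)$) is consistent both with the ``at most one price-rise per round'' claim and with applying Lemma~\ref{lem:identity} at exactly the step $t_2$ of the agent's next selection. The $s=0$ corner case is also delicate, since it requires ruling out that empty bundles can reappear, so that Lemma~\ref{lem:wmin:zero} is only invoked $O(n)$ times.
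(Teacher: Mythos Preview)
Your proposal is correct and follows essentially the same approach as the paper: reduce to $\wmin>0$ via Lemma~\ref{lem:wmin:zero}, bound consecutive swap phases with a fixed $i^*$ via Lemma~\ref{lem:identity:swap}, and bound the number of times any agent can be re-selected as $i^*$ via Lemma~\ref{lem:identity} together with the bundle-size constraint. Your handling of the $s=0$ case---arguing that nonempty bundles never become empty so that Lemma~\ref{lem:wmin:zero} is invoked at most $n$ times---is in fact more careful than the paper's proof, which simply asserts ``without loss of generality'' that the initial $\wmin$ is positive.
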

\begin{proof}
Note that each swap and price-rise phase can be executed in $\poly(m,n)$ time: for the swap phase, finding a nearest $\epsilon$-path-violator $j$ as well as the associated path $P$ from $j$ to $i^*$ can be done in polynomial time (see Sections $4$ and $A.1$ in the extended version of \citep{barman2018finding} for details).
By Lemma \ref{lem:wmin:zero}, the minimum weighted spending becomes strictly positive after $n+1$ iterations of the \textbf{while} loop when $\wmin(A^{(0)},\bfp^{(0)})=0$ at the initial pair of allocation and price vector $(A^{(0)},\bfp^{(0)})$, so suppose without loss of generality that $\wmin(A^{(0)},\bfp^{(0)})>0$. 

Consider an arbitrary time step $t$ during the execution of the algorithm. If the algorithm performs the price-update without changing the identity of $i^*$ chosen in Line \ref{line:weightedleastspender} of Algorithm \ref{alg:POwEF1}, then the minimum weighted spending $\wmin(A,\bfp)$ increases by a factor of at least $(1+\epsilon)$ (if the algorithm raises prices by  a factor of $x$, then by Corollary~\ref{cor:x:termination}, the algorithm terminates immediately afterwards). Thus, consider the case where the identity of $i^*$ changes at every price-update. In this case, by Lemma \ref{lem:identity:swap}, the identity of $i^*$ must change after at most $mn^2$ consecutive swap phases. Observe that, after $(m+1)n$ identity changes, some agent $i$ becomes $i^*$ at least $m+1$ times by the pigeonhole principle. By Lemma \ref{lem:identity} and the fact that the size of $i$'s bundle can grow up to at most $m$, $i$'s weighted spending must increase by a factor of at least $(1+\epsilon)$ within $\poly(m,n)$ time.
\end{proof}

Let $v_{max}:=\max_{i \in N, o \in O}v_i(o)$ and $w_{max}:=\max_{i \in N}w_i$. We are ready to show the convergence of Algorithm \ref{alg:POwEF1}.

\begin{theorem}\label{thm:terminate:finite}
Algorithm \ref{alg:POwEF1} terminates in $\poly(m,n,\frac{1}{\epsilon},v_{max},w_{max})$ time. 
\end{theorem}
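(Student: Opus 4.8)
The plan is to bootstrap the monotonicity facts of Lemma~\ref{lem:alg:invariant} together with the per-phase progress guarantee of Lemma~\ref{lem:wmin:increase}, and then bound how much progress is possible before the algorithm must stop. Specifically, I would show that $\wmin(A,\bfp)$ can be multiplied by $(1+\epsilon)$ only $O\big(\tfrac1\epsilon\log(m\,v_{max}\,w_{max})\big)$ times before the termination test $\wmax(A,\bfp)\le(1+3\epsilon)\wmin(A,\bfp)$ fires, while each such multiplication costs only $\poly(m,n)$ time; multiplying these two bounds yields the claim.

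First I would handle the corner case $\wmin(A^{(0)},\bfp^{(0)})=0$. By Lemma~\ref{lem:wmin:zero}, within $n+1$ iterations of the main \textbf{while} loop the algorithm either terminates or the least weighted spender's (unweighted) spending becomes positive; since by Lemmas~\ref{lem:pathviolator}, \ref{lem:swap} and~\ref{lem:alg:invariant}(iv) no swap or price-rise can drive an already-positive spender back to zero, this can occur for at most $n$ distinct agents, so after $\poly(m,n)$ time the algorithm has reached (or terminated at) a configuration with $\wmin(A,\bfp)>0$. From there, I would apply Lemma~\ref{lem:wmin:increase} repeatedly: every $\poly(m,n)$ units of time the algorithm either halts or $\wmin(A,\bfp)$ grows by a factor of at least $(1+\epsilon)$.

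Next I would fix the two endpoints. Because the $\epsilon$-rounded instance has $v_i(o)=(1+\epsilon)^{a_{io}}$ with $a_{io}\ge 0$ and $w_i=(1+\epsilon)^{b_i}$ with $b_i\ge 0$, every initial price $p_o=\max_{i}v_i(o)$ satisfies $1\le p_o\le v_{max}$, and prices never decrease by Lemma~\ref{lem:alg:invariant}(ii); hence whenever $\wmin(A,\bfp)>0$ the least weighted spender owns a bundle of price at least $1$, so $\wmin(A,\bfp)\ge 1/w_{max}$. On the other side, $\wmax(A^{(0)},\bfp^{(0)})\le\sum_{o\in O}\max_i v_i(o)\le m\,v_{max}$ (using $w_i\ge 1$), and $\wmax(A,\bfp)$ is non-increasing except possibly in the terminating iteration by Lemma~\ref{lem:alg:invariant}(iii). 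Consequently, once $\wmin(A,\bfp)$ has been multiplied by $(1+\epsilon)$ more than $E:=\lceil\log_{1+\epsilon}(m\,v_{max}\,w_{max})\rceil$ times, we would have $\wmin(A,\bfp)\ge m\,v_{max}\ge\wmax(A,\bfp)$, so the algorithm must already have stopped. Since $\ln(1+\epsilon)\ge\epsilon/2$ for $\epsilon\in(0,1)$, $E=O\big(\tfrac1\epsilon\ln(m\,v_{max}\,w_{max})\big)$, and the total running time is $O\big((n+E)\cdot\poly(m,n)\big)=\poly\big(m,n,\tfrac1\epsilon,\log v_{max},\log w_{max}\big)$, which in particular lies in $\poly(m,n,\tfrac1\epsilon,v_{max},w_{max})$.

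I expect the delicate step to be the $\wmin=0$ accounting: chaining Lemma~\ref{lem:wmin:zero} across the (at most $n$) agents that can successively appear as a zero-spending least weighted spender without the total iteration count blowing up, and verifying that the invariants preclude a spender from cycling back to zero. The remainder is routine bookkeeping built on top of the already-established monotonicity of $\wmin$ and $\wmax$ and the polynomial per-phase progress bound.
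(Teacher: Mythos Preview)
Your proposal is correct and follows essentially the same route as the paper: invoke Lemma~\ref{lem:wmin:increase} for per-$\poly(m,n)$ multiplicative progress in $\wmin$, sandwich $\wmin$ between $1/w_{max}$ and $\wmax(A,\bfp)\le m\,v_{max}$ via Lemma~\ref{lem:alg:invariant}, and count the $(1+\epsilon)$-doublings. Your handling of the $\wmin=0$ corner case is more explicit than the paper's (which buries it inside Lemma~\ref{lem:wmin:increase}), but the concern you flag as ``delicate'' is in fact harmless: once an agent's spending is positive it cannot return to zero, since an agent can lose an item only as a path-violator, which requires her to still have positive spending after the removal.
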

\begin{proof}
By Lemma \ref{lem:wmin:increase}, after $\poly(m,n)$ time, either the algorithm terminates, or the minimum weighted spending $\wmin(A,\bfp)$ strictly increases by a factor of at least $(1+\epsilon)$ (or becomes positive if it is $0$; see the proof of Lemma~\ref{lem:wmin:increase}). 
Observe that $\wmin(A,\bfp)$ is at least $\frac{1}{w_{max}}$ at the initial outcome where it is positive, and during the execution of the algorithm, $\wmin(A,\bfp)$ is at most $\wmax(A,\bfp)$, which is bounded above by $mv_{max}$ due to Lemma \ref{lem:alg:invariant}$($iii$)$ and the choice of the initial price vector. 
Now, $\log_{(1+\epsilon)}(mv_{max}/w_{max}) = \log(mv_{max}/w_{max})/\log(1+\epsilon)$, and we have $\log(1+\epsilon) > \frac{\epsilon}{1+\epsilon}$; the latter holds since the function $f(x):= \log(1+x)-\frac{x}{1+x}$ satisfies $f(0)=0$ and $f'(x) > 0$ for all $x > 0$.
The desired running time follows.
\end{proof}

Similarly to \citet{barman2018finding}, we show that the output price vector can be bounded above by a product of weights and valuations. We start by showing the following auxiliary lemma.

\begin{lemma}\label{lem:price:upperbound:zero}
During the execution of Algorithm \ref{alg:POwEF1}, suppose that the price of item $o$ has been increased with the multiplicative factor $\gamma$ (line~\ref{line:gamma}) just after time step $t$. Let $(A,\bfp)$ be the pair of allocation and price vector at time $t$. Then, it holds that $\gamma \cdot p_o \leq mv^2_{max}w_{max}$.  
\end{lemma}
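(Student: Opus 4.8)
The plan is to read off $\gamma p_o$ from the maximum‑bang‑per‑buck condition and then bound it by comparison with an item whose price the update does not touch. Recall that $\gamma$ is one of the three factors $x$, $y$, $1+\epsilon$ defined in the price‑rise phase, and that in every branch $\gamma\le y$: in the $x$‑branch $x=\min\{x,y,z\}\le y$, and in the $(1+\epsilon)$‑branch $1+\epsilon\le y$ by Lemma~\ref{lem:xy}$($i$)$. (The degenerate case in which the set defining $y$ contains no pair of positive value — so $y=\infty$, which forces $\wmin(A,\bfp)>0$ and $\gamma\in\{x,1+\epsilon\}$ — is handled at the end.)

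First I would locate $o$. Since $o$'s price was increased, $o\in\MBB(\bfp,I)$, so there is $i\in I$ with $o\in\MBB_i(\bfp)$; by Lemma~\ref{lem:price:MBB} $o$ stays an MBB item for $i$, i.e. $o\in\MBB_i(\bfp')$, whence $\gamma p_o=p'_o=v_i(o)/\alpha_i(\bfp')$. So it suffices to lower‑bound $\alpha_i(\bfp')$. By Lemma~\ref{lem:MBB:price} there is an item $o''\in O\setminus\MBB(\bfp,I)$, and by Lemma~\ref{lem:xy}$($ii$)$ its price is unchanged, $p'_{o''}=p_{o''}$; choosing $o''$ with $v_i(o'')>0$ (hence $v_i(o'')\ge 1$ by integrality) gives
\[
\gamma p_o=\frac{v_i(o)}{\alpha_i(\bfp')}\le\frac{v_i(o)}{v_i(o'')/p'_{o''}}=\frac{v_i(o)}{v_i(o'')}\,p_{o''}\le v_{max}\,p_{o''}.
\]
Second, I would bound the price of a suitable such $o''$ by $m\,v_{max}\,w_{max}$. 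As $A$ is not $3\epsilon$-$\wpefi$, there is an $\epsilon$‑violator $j$; any violator has $|A_j|\ge 2$ (removing the lone item of a singleton bundle drops its weighted spending to $0\le\wmin(A,\bfp)$), and $j\notin I$, since otherwise $j$ would reach $i^*$ through some $o(j,P)\in A_j$ and, being an $\epsilon$‑violator, satisfy $\frac{1}{w_j}p(A_j\setminus\{o(j,P)\})>\frac{1+\epsilon}{w_{i^*}}p(A_{i^*})$, making $j$ an $\epsilon$‑path‑violator for $i^*$ — impossible during a price‑rise phase by the \textbf{if}-condition in Line~\ref{line:swap}. Hence every item of $A_j$ lies outside $\MBB(\bfp,I)$ (else $j\in I$). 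Taking $o''$ to be a non‑maximum‑priced item of $A_j$ on which $i$ has positive value,
\[
p_{o''}\le p(A_j)-\max_{o'\in A_j}p_{o'}=\min_{o'\in A_j}p(A_j\setminus\{o'\})\le w_j\,\wmax(A,\bfp)\le w_{max}\cdot m\,v_{max},
\]
using $\wmax(A,\bfp)\le m\,v_{max}$ (Lemma~\ref{lem:alg:invariant}$($iii$)$ and the choice of the initial price vector). Combining the two displays yields $\gamma p_o\le m\,v_{max}^2\,w_{max}$.

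The main obstacle is that the second step silently assumed that the agent $i$ with $o\in\MBB_i(\bfp)$ values some non‑maximum‑priced item of $A_j$ positively; this need not hold, and the same gap reappears when $y=\infty$, when $i$ owns a single item, or when $\wmin(A,\bfp)=0$. In those cases I would instead bound $1/\alpha_i(\bfp')\le 1/\alpha_i(\bfp)$ using the structure along a shortest path $P$ from $i$ to $i^*$: if $|A_i|\ge 2$ then all of $A_i$ has positive value for $i$, so $v_i(A_i)-v_i(o(i,P))\ge 1$, and since $i$ is not an $\epsilon$‑path‑violator, $p(A_i\setminus\{o(i,P)\})=(v_i(A_i)-v_i(o(i,P)))/\alpha_i(\bfp)\le(1+\epsilon)w_i\wmin(A,\bfp)$, giving $\gamma p_o\le(\wmax(A,\bfp)/\wmin(A,\bfp))\cdot v_{max}(1+\epsilon)w_i\wmin(A,\bfp)$, which is below $m\,v_{max}^2\,w_{max}$ once $\epsilon$ is small enough (note $\wmax$ is in fact at most $(m-1)v_{max}$); for $i=i^*$ one uses $p(A_{i^*})=w_{i^*}\wmin(A,\bfp)$, and a singleton bundle reduces to the agent one step closer to $i^*$ along $P$. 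Carrying out these reductions so that the constant comes out to exactly $m\,v_{max}^2\,w_{max}$ rather than something larger is the delicate part.
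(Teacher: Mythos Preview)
Your main argument is correct and is essentially the paper's proof: you bound $\gamma\le y$, pick an agent $i\in I$ for whom $o$ is MBB, find a violator $j\notin I$ with $|A_j|\ge 2$ and an item $o''\in A_j$ of price at most $w_j\wmax(A,\bfp)\le m\,v_{max}w_{max}$, and combine to get $\gamma p_o\le v_i(o)\,p_{o''}/v_i(o'')\le m\,v_{max}^2 w_{max}$. The paper does the same, routing through the definition of $y$ with the owner $i'$ of $o$ rather than through $\alpha_i(\bfp')$, but these are equivalent.

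Your ``main obstacle'' is a phantom. Lemma~\ref{lem:price:upperbound:zero} is stated \emph{during the execution of Algorithm~\ref{alg:POwEF1}}, whose \textsc{Require} line stipulates that every valuation has the form $v_i(o)=(1+\epsilon)^{a_{io}}$ with $a_{io}\ge 0$; hence $v_i(o'')\ge 1$ automatically, for every agent $i$ and every item $o''$. Consequently $y$ is always finite (since $O\setminus\MBB(\bfp,I)\ne\emptyset$ by Lemma~\ref{lem:MBB:price} and every term $\alpha_i(\bfp)p_{o''}/v_i(o'')$ is finite), and none of the edge cases you list---zero valuations, $y=\infty$, singleton bundles, $\wmin(A,\bfp)=0$---require separate treatment. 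Your two displayed inequalities already constitute a complete proof; the final paragraph can be dropped.
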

\begin{proof}
Suppose that at time step $t$, agent $i$ is chosen as the least weighted spender $i^*$ in Line~\ref{line:weightedleastspender} of Algorithm \ref{alg:POwEF1}. Thus, $\frac{1}{w_i}p(A_{i})=\wmin(A,\bfp)$. 
Let $x$ and $y$ be the multiplicative factors defined in Line \ref{line:x} and Line \ref{line:y} at time step $t$, respectively. 
By definition of $\gamma$, we have $\gamma\leq y$. Indeed, if $\gamma=(1+\epsilon)$, then $(1+\epsilon) \leq y$ because $y$ is a power of $(1+\epsilon)$ by Lemma \ref{lem:xy}$($i$)$; and if $\gamma=x$, then $x \leq y$ by definition of $\gamma$. 

Let $I$ be the set of agents who can reach $i$ at time step $t$. Since $o \in \MBB(\bfp,I)$ and $A$ is a complete allocation, we have that $o \in A_{i'}$ for some $i' \in I$. Now, since the algorithm did not terminate at time step $t$, there is a violator $j$ at time step $t$ such that even after removing any item $o' \in A_{j}$ from $j$'s bundle, the weighted spending of $j$ does not go below the least weighted spending, i.e., $\frac{1}{w_j}p(A_j \setminus \{o'\})> \wmin(A,\bfp)\geq 0$. In particular, this means that $A_j$ contains at least two items. Thus, there is an item $o^*\in A_j$ such that $p_{o^*} \leq p(A_j \setminus \{o'\}) \leq w_j \wmax(A,\bfp)$, which is bounded above by $mv_{max}w_{max}$ due to Lemma \ref{lem:alg:invariant}$($iii$)$ and the choice of the initial price vector.
We further observe that $A_j \cap \MBB(\bfp,I)=\emptyset$ since otherwise $j$ would become a path-violator for $i$, contradicting the fact that the algorithm performs the price-rise phase just after time step $t$. 
Thus, by definition of $y$ and the fact that $A_j \cap \MBB(\bfp,I)=\emptyset$, we have that
\begin{equation}\label{equation:y}
y \leq \alpha_{i'}(\bfp) \frac{p_{o^*}}{v_{i'}(o^*)} = \frac{v_{i'}(o)}{p_{o}} \frac{p_{o^*}}{v_{i'}(o^*)}. 
\end{equation}
By the fact that $\gamma\leq y$ and \eqref{equation:y}, we get
\[
\gamma \cdot p_o \leq y \cdot p_o \leq \frac{v_{i'}(o)}{v_{i'}(o^*)}p_{o^*}  \leq v_{i'}(o) p_{o^*},
\]
which is bounded above by $mv^2_{max}w_{max}$ since $v_{i'}(o)\leq v_{max}$ and $p_{o^*} \leq mv_{max}w_{max}$. 
\end{proof}

We are now ready to upper-bound the final price of each item by applying the above lemma.

\begin{lemma}\label{lem:price:upperbound}
Suppose that $(A,\bfp)$ is the output of Algorithm \ref{alg:POwEF1}. Then, $p_o \leq mv^2_{max}w_{max}$ for each $o \in O$. 
\end{lemma}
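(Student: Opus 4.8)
The plan is to deduce this directly from Lemma~\ref{lem:price:upperbound:zero}, which already bounds the price of an item immediately after any single price-rise step. The two facts I would first record are: (i) prices are non-decreasing throughout the execution of Algorithm~\ref{alg:POwEF1} --- in a swap phase no price changes, and in a price-rise phase each price in $\MBB(\bfp,I)$ is scaled by $\gamma>1$ while every other price stays fixed (cf.\ Lemma~\ref{lem:alg:invariant} and the proof of Lemma~\ref{lem:price:MBB}); and (ii) by Theorem~\ref{thm:terminate:finite} the algorithm performs only finitely many price-rise phases, so for each item there is a well-defined \emph{last} time step at which its price is raised.

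Then I would fix $o\in O$ and split into two cases. If the price of $o$ is never increased, it equals its initial value $p_o^{(0)}=\max_{i\in N}v_i(o)\le v_{max}\le m\,v_{max}^2\,w_{max}$, using $m\ge 1$ and $v_{max},w_{max}\ge 1$ (valuations and weights are integral powers of $(1+\epsilon)$ with non-negative exponents by the input requirement of Algorithm~\ref{alg:POwEF1}). Otherwise, let $t$ be the last time step at which $o$'s price is raised, say by a multiplicative factor $\gamma$, and let $(A',\bfp')$ be the pair of allocation and price vector at time $t$ (so $o\in\MBB(\bfp',I)$ for the corresponding set $I$). Lemma~\ref{lem:price:upperbound:zero} yields $\gamma\cdot p'_o\le m\,v_{max}^2\,w_{max}$. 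Since $t$ is the last such step and prices never decrease afterwards, the output price of $o$ is exactly $\gamma\cdot p'_o$, and the bound follows.

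There is no real obstacle once Lemma~\ref{lem:price:upperbound:zero} is available; the only points needing a word of care are (a) that Lemma~\ref{lem:price:upperbound:zero} also applies to the \emph{terminating} price-rise phase, where $\gamma$ may equal $x$ rather than $y$ or $1+\epsilon$ --- its proof uses only $\gamma\le y$, which holds in every case by the definition of $\gamma$ --- and (b) that price monotonicity is what makes "the last raise" determine the final price. Both are immediate from the preceding analysis, so the proof is short.
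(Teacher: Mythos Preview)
Your proposal is correct and follows essentially the same argument as the paper: split into the case where $o$'s price is never raised (bounded by its initial value $\le v_{max}$) and the case where it is raised at least once, then apply Lemma~\ref{lem:price:upperbound:zero} at the last such step. The additional remarks you include about price monotonicity, termination, and the applicability of Lemma~\ref{lem:price:upperbound:zero} to the terminating price-rise phase are accurate and simply make explicit what the paper leaves implicit.
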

\begin{proof}
If item $o$ never experiences the price-rise phase, then its final price $p_o$ is the same as the initial price, so $p_o \leq v_{max}$ by the definition of the initial price vector. Thus, suppose otherwise, namely, the price of $o$ increased at some time step. Let $t$ be the last such step. Then by Lemma \ref{lem:price:upperbound:zero}, the updated price of item $o$ is at most $mv^2_{max}w_{max}$, which proves the claim.
\end{proof}

\subsection{Approximate Instance}
\label{app:approx-instance}

As in \citet{barman2018finding}, given an arbitrary instance, we will consider an approximate version where the valuations as well as weights are integral powers of $(1+\epsilon)>1$. Formally, we define the {\em $\epsilon$-rounded instance} $\calI'=(N,O,\{v'_i\}_{i \in N},\{w'_i\}_{i \in N})$ of a given instance $\calI=(N,O,\{v_i\}_{i \in N},\{w_i\}_{i \in N})$ as follows: For each $i \in N$ and $o \in O$, the value $v'_i(o)$ is given by
\[
v'_i(o):=
\begin{cases}
(1+\epsilon)^{\ceil{\log_{1+\epsilon} v_i(o)}}~&\mbox{if $v_i(o)>0$};\\
0~&\mbox{otherwise}. 
\end{cases}
\]
For each $i \in N$, the weight $w'_i$ is given by
\[
w'_i:=(1+\epsilon)^{\floor{\log_{1+\epsilon} w_i}}
\]
Note that for each $i \in N$ and $o \in O$, we have 
\begin{equation}\label{eq:approximate:v}
v_i(o) \leq v'_i(o) \leq (1+\epsilon) v_i(o)
\end{equation}
and $\frac{1}{(1+\epsilon)}w_i \leq w'_i \leq w_i$, 
where the latter implies 
\begin{equation}\label{eq:approximate:w}
\frac{1}{w_i} \leq \frac{1}{w'_i} \leq \frac{(1+\epsilon)}{w_i}. 
\end{equation}
We will show below that the $\wefi$ property for the approximate instance translates into the $\wefi$ property for the original instance. We write the minimum positive difference of the weighted values as follows:
$$
\epsilon_{ij} := \min \left\{\, \frac{v_i(X)}{w_j} - \frac{v_i(Y) }{w_i} \,\middle|\, X,Y \subseteq O \land \frac{v_i(X)}{w_j}-\frac{v_i(Y)}{w_i}>0 \,\right\},
$$
and
\begin{equation}\label{eq:epsilon-min}
\epsilon_{min}:=\min_{i,j \in N} \epsilon_{ij}. 
\end{equation}
For the $\epsilon$-rounded instance $\calI'=(N,O,\{v'_i\}_{i \in N},\{w'_i\}_{i \in N})$ and a price vector $\bfp$, we denote the maximum bang per buck ratio for agent $i$ by $\alpha'_i(\bfp)=\max_{o \in O} \frac{v'_{i}(o)}{p_o}$ and the MBB set by
$$
\MBB'_i(\bfp)=\left\{\, o \in O \,\middle|\, \frac{v'_{i}(o)}{p_o}=\alpha'_i(\bfp) \,\right\}. 
$$

\begin{lemma}\label{lem:scale:wEF1}
Let $\calI'=(N,O,\{v'_i\}_{i \in N},\{w'_i\}_{i \in N})$ be the $\epsilon$-rounded instance of a given instance $\calI=(N,O,\{v_i\}_{i \in N},\{w_i\}_{i \in N})$. Suppose that $0<\epsilon < \min \left\{\frac{\epsilon_{min}}{63c},1\right\}$, where $c:=\max_{i \in N} \frac{v_i(O)}{w_i}$. Let $(A,\bfp)$ be a pair of a complete allocation and a price vector such that $A_i \subseteq \MBB'_i(\bfp)$ for each $i \in N$, and $A$ is $3\epsilon$-$\wpefi$ for $\calI'$ with respect to $\bfp$. Then $A$ is also $\wefi$ for $\calI$.
\end{lemma}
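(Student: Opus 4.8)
The goal is to transfer the $3\epsilon$-$\wpefi$ guarantee for the rounded instance $\calI'$ into an exact $\wefi$ guarantee for the original instance $\calI$. The plan is to first invoke Lemma~\ref{lem:MBBwpEF1} at the level of $\calI'$: since $A_i \subseteq \MBB'_i(\bfp)$ for every $i$ and $A$ is $3\epsilon$-$\wpefi$ (hence in particular $\wpefi$) with respect to $\bfp$ for $\calI'$, we get that $A$ is $\wefi$ for the rounded valuations and weights, i.e.\ for every pair $i,j$ with $A_j \neq \emptyset$ there is $o \in A_j$ with $\frac{v'_i(A_i)}{w'_i} \ge \frac{v'_i(A_j \setminus \{o\})}{w'_j}$. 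Actually, because $A$ is $3\epsilon$-$\wpefi$ and not merely $\wpefi$, the same computation as in Lemma~\ref{lem:MBBwpEF1} (multiplying through by $\alpha'_i(\bfp)$) yields the \emph{strengthened} inequality
\[
\frac{v'_i(A_i)}{w'_i} \ge \frac{1}{1+3\epsilon}\cdot\frac{v'_i(A_j \setminus \{o\})}{w'_j},
\]
and this slack of a factor $(1+3\epsilon)$ is exactly what we will spend to absorb the rounding error.

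\textbf{Converting rounded to original values.} The second step is to plug in the two-sided bounds \eqref{eq:approximate:v} and \eqref{eq:approximate:w}. From $v_i(A_i) \le v'_i(A_i)$, $\frac{1}{w'_i} \le \frac{1+\epsilon}{w_i}$, $v'_i(A_j\setminus\{o\}) \ge v_i(A_j\setminus\{o\})$, and $\frac{1}{w'_j} \ge \frac{1}{w_j}$, the displayed inequality gives
\[
\frac{(1+\epsilon)\,v_i(A_i)}{w_i} \;\ge\; \frac{v'_i(A_i)}{w'_i} \;\ge\; \frac{1}{1+3\epsilon}\cdot\frac{v'_i(A_j\setminus\{o\})}{w'_j} \;\ge\; \frac{1}{1+3\epsilon}\cdot\frac{v_i(A_j\setminus\{o\})}{w_j},
\]
so that
\[
\frac{v_i(A_i)}{w_i} \;\ge\; \frac{1}{(1+\epsilon)(1+3\epsilon)}\cdot\frac{v_i(A_j\setminus\{o\})}{w_j} \;\ge\; \frac{1}{(1+3\epsilon)^2}\cdot\frac{v_i(A_j\setminus\{o\})}{w_j}.
\]
Rearranging, the multiplicative error is at most an additive term: $\frac{v_i(A_i)}{w_i} \ge \frac{v_i(A_j\setminus\{o\})}{w_j} - \big(1-\frac{1}{(1+3\epsilon)^2}\big)\frac{v_i(A_j\setminus\{o\})}{w_j}$, and since $1-\frac{1}{(1+3\epsilon)^2} \le 6\epsilon + 9\epsilon^2 \le \cdots$ (some clean bound like $\le 63\epsilon$ after crude estimates, which is where the constant $63$ in the hypothesis comes from) and $\frac{v_i(A_j\setminus\{o\})}{w_j} \le \frac{v_i(O)}{w_j} \le \frac{v_i(O)}{w_i}\cdot\frac{w_i}{w_j}$, we should be able to bound the whole error term by something of the form $(\text{const}\cdot\epsilon)\cdot c$ where $c = \max_i \frac{v_i(O)}{w_i}$.

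\textbf{Killing the error with $\epsilon_{min}$.} The final step is the standard ``granularity'' argument: by the choice $\epsilon < \frac{\epsilon_{min}}{63c}$, the error term above is strictly smaller than $\epsilon_{min}$. Now suppose for contradiction that $A$ is \emph{not} $\wefi$ for $\calI$: then for this pair $i,j$ and \emph{every} $o \in A_j$ (in particular the witnessing $o$ above) we have $\frac{v_i(A_j\setminus\{o\})}{w_j} - \frac{v_i(A_i)}{w_i} > 0$, hence by definition of $\epsilon_{ij}$ and $\epsilon_{min}$ this quantity is at least $\epsilon_{min}$. But we just showed it is strictly less than $\epsilon_{min}$, a contradiction. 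Therefore $A$ is $\wefi$ for $\calI$. The one genuine subtlety to get right — and the main obstacle — is bookkeeping the constant: one must verify that combining the $(1+3\epsilon)$ slack from $3\epsilon$-$\wpefi$ with the two one-sided factors of $(1+\epsilon)$ from the valuation and weight rounding yields a total relative error bounded by $63\epsilon\cdot c$ (the factor $63$ being a safe over-estimate of something like $6$–$9$ after using $\epsilon<1$ to linearize), so that the hypothesis $\epsilon < \epsilon_{min}/(63c)$ is exactly strong enough. Everything else is routine substitution.
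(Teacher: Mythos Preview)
Your overall approach is the same as the paper's --- convert $3\epsilon$-$\wpefi$ to an approximate valuation inequality via the MBB ratio, absorb the rounding, then use $\epsilon_{min}$ to kill the slack --- but there are two concrete errors in the bookkeeping, one of which is a genuine gap.

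\textbf{First (minor):} your claim $\frac{(1+\epsilon)v_i(A_i)}{w_i} \ge \frac{v'_i(A_i)}{w'_i}$ is off by a factor of $(1+\epsilon)$. From \eqref{eq:approximate:v} and \eqref{eq:approximate:w} you get $v'_i(A_i) \le (1+\epsilon)v_i(A_i)$ \emph{and} $\frac{1}{w'_i} \le \frac{1+\epsilon}{w_i}$; both go the same way, so the correct bound is $\frac{v'_i(A_i)}{w'_i} \le \frac{(1+\epsilon)^2}{w_i}v_i(A_i)$. The total multiplicative factor is then $(1+\epsilon)^2(1+3\epsilon) \le (1+3\epsilon)^3$, and $(1+3\epsilon)^3 - 1 = 9\epsilon + 27\epsilon^2 + 27\epsilon^3 \le 63\epsilon$ for $\epsilon < 1$ --- this is where the $63$ actually comes from.

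\textbf{Second (the real gap):} your rearrangement puts the error term on the wrong side. You bound the slack by $\bigl(1-\tfrac{1}{(1+3\epsilon)^2}\bigr)\frac{v_i(A_j\setminus\{o\})}{w_j}$ and then try to control $\frac{v_i(A_j\setminus\{o\})}{w_j} \le \frac{v_i(O)}{w_j}$ by $c$. But $c = \max_k \frac{v_k(O)}{w_k}$ has \emph{matching} subscripts, whereas $\frac{v_i(O)}{w_j}$ mixes $i$'s valuation with $j$'s weight; if $w_j \ll w_i$ this ratio is unbounded by $c$, and the hypothesis $\epsilon < \epsilon_{min}/(63c)$ is not strong enough. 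The fix (which is what the paper does) is to rearrange the other way: from $(1+3\epsilon)^3\frac{v_i(A_i)}{w_i} \ge \frac{v_i(A_j\setminus\{o\})}{w_j}$ subtract $\frac{v_i(A_i)}{w_i}$ from both sides to get
\[
\frac{v_i(A_j\setminus\{o\})}{w_j} - \frac{v_i(A_i)}{w_i} \;\le\; \bigl((1+3\epsilon)^3 - 1\bigr)\frac{v_i(A_i)}{w_i} \;\le\; 63\epsilon \cdot \frac{v_i(O)}{w_i} \;\le\; 63\epsilon c \;<\; \epsilon_{min},
\]
and now the $\epsilon_{min}$-granularity argument goes through. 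With these two corrections your proof coincides with the paper's.
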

\begin{proof}
Take any pair of agents $i,j \in N$. Agent $i$ does not envy $j$ when $A_j=\emptyset$; so suppose $A_j\neq \emptyset$. 
Since $A$ is $3\epsilon$-$\wpefi$ for $\calI'$, there exists an item $o \in A_j$ such that
\[
\frac{1+3\epsilon}{w'_i}p(A_i) \geq \frac{1}{w'_j} p(A_j \setminus \{o\}). 
\]
Multiplying both sides of this inequality with $\alpha'_i(\bfp)$, we get 
\[
\frac{1+3\epsilon}{w'_i}v'_i(A_i) \geq \frac{1}{w'_j} \sum_{o' \in A_j \setminus \{o\}}p_{o'} \alpha'_i(\bfp) \geq \frac{1}{w'_j} v'_i(A_j \setminus \{o\}) . 
\]
Thus, we get 
\begin{align*}
&\frac{1+3\epsilon}{w'_i}v'_i(A_i) \geq \frac{1}{w'_j} v'_i(A_j \setminus \{o\})\\
&\Rightarrow \frac{(1+3\epsilon)^3}{w_i} v_i(A_i) \geq \frac{1}{w_j} v_i(A_j \setminus \{o\})\\
&\Rightarrow  \frac{63\epsilon}{w_i}v_i(A_i) \geq \frac{1}{w_j} v_i(A_j \setminus \{o\})  - \frac{1}{w_i}v_i(A_i)\\
&\Rightarrow  \epsilon_{min} >  \frac{63\epsilon}{w_i}v_i(A_i) \geq  \frac{1}{w_j} v_i(A_j \setminus \{o\})  - \frac{1}{w_i}v_i(A_i),
\end{align*}
where the second inequality follows from \eqref{eq:approximate:v} and \eqref{eq:approximate:w}, and for the third inequality we use $(1+3\epsilon)^3\leq 1+63\epsilon$, which follows from expansion and $\epsilon < 1$. By definition of $\epsilon_{min}$, we conclude that $\frac{1}{w_j} v_i(A_j \setminus \{o\}) - \frac{1}{w_i}v_i(A_i) \leq 0$, as desired.
\end{proof}

For small enough $\epsilon$, the MBB condition for $\calI'$ translates to the MBB condition for the original instance. 

\begin{lemma}[Proof of Lemma $5$ in the extended version of \citet{barman2018finding}]\label{lem:scale:PO}
Let $\bfp$ be a price vector such that $1 \leq p_o \leq \theta$ for each $o \in O$, for some $\theta>1$. Let $\calI'=(N,O,\{v'_i\}_{i \in N},\{w'_i\}_{i \in N})$ be the $\epsilon$-rounded instance of the given instance where $0< \epsilon < \frac{1}{\theta mv_{max}}$. If $A_i \subseteq \MBB'_{i}(\bfp)$ for each $i \in N$, then the allocation $A$ is Pareto-optimal for the original instance. 
\end{lemma}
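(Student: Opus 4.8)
The plan is to show that the MBB (maximum bang-per-buck) structure is robust to the $\epsilon$-rounding, so that an allocation which is MBB-optimal for the rounded instance $\calI'$ remains MBB-optimal for the original instance $\calI$ up to an arbitrarily small multiplicative slack; once this is established, Lemma~\ref{lem:MBB} (the First Welfare Theorem) immediately yields Pareto optimality for $\calI$. Recall the key inequality \eqref{eq:approximate:v}: $v_i(o) \le v'_i(o) \le (1+\epsilon)v_i(o)$ for every $i \in N$ and $o \in O$. This means that for any agent $i$ and any two items $o, o' \in O$, the bang-per-buck ratios under $v$ and $v'$ differ by at most a factor of $(1+\epsilon)$: $\frac{v_i(o)}{p_o} \le \frac{v'_i(o)}{p_o} \le (1+\epsilon)\frac{v_i(o)}{p_o}$. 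Hence if $o \in \MBB'_i(\bfp)$, i.e. $\frac{v'_i(o)}{p_o} = \alpha'_i(\bfp) \ge \frac{v'_i(o')}{p_{o'}}$ for all $o'$, then
\[
\frac{v_i(o)}{p_o} \ge \frac{1}{1+\epsilon}\cdot\frac{v'_i(o)}{p_o} \ge \frac{1}{1+\epsilon}\cdot\frac{v'_i(o')}{p_{o'}} \ge \frac{1}{1+\epsilon}\cdot\frac{v_i(o')}{p_{o'}},
\]
so $o$ is within a $(1+\epsilon)$ factor of being an MBB item for $i$ under the original valuation $v_i$.

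The next step is to convert this approximate-MBB guarantee into an exact one using the price bounds $1 \le p_o \le \theta$ and the smallness of $\epsilon$. The standard argument (following Barman et al.) is a Pareto-domination contradiction: suppose $A$ is not Pareto optimal for $\calI$, so some allocation $A'$ satisfies $v_i(A'_i) \ge v_i(A_i)$ for all $i$ with at least one strict inequality. Using that each $o \in A_i$ has original bang-per-buck ratio at least $\frac{\alpha'_i(\bfp)}{1+\epsilon}$ while every item has original bang-per-buck ratio at most $\alpha'_i(\bfp)$, one gets $p(A'_i) \ge \frac{1}{1+\epsilon}p(A_i)$ for all $i$ and, by examining the agent with strict improvement together with integrality/granularity of the values relative to $v_{max}$ and $\theta$, a genuine contradiction with $\sum_i p(A'_i) = \sum_i p(A_i) = p(O)$. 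The role of the hypothesis $\epsilon < \frac{1}{\theta m v_{max}}$ is precisely to make the $(1+\epsilon)$ slack smaller than the smallest possible gap that could arise in these spending comparisons: since each $v_i(o) \le v_{max}$ and $p_o \le \theta$, each agent's spending is bounded by $\theta m v_{max}$, so a slack of $(1+\epsilon)$ with $\epsilon$ below the reciprocal cannot bridge any nonzero discrepancy forced by a Pareto improvement.

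I would carry out the steps in this order: (1) record \eqref{eq:approximate:v} and derive the $(1+\epsilon)$-comparison of bang-per-buck ratios between $v$ and $v'$; (2) conclude that $A_i \subseteq \MBB'_i(\bfp)$ implies each $o \in A_i$ has $\frac{v_i(o)}{p_o} \ge \frac{\alpha'_i(\bfp)}{1+\epsilon}$; (3) assume toward contradiction a Pareto-dominating $A'$ for $\calI$, translate the value inequalities $v_i(A'_i) \ge v_i(A_i)$ into spending inequalities $p(A'_i) \ge \frac{1}{1+\epsilon}p(A_i)$, sharpening to a strict gap for the distinguished agent; (4) sum over agents, use completeness $p(O) = \sum_i p(A_i) = \sum_i p(A'_i)$, and invoke $\epsilon < \frac{1}{\theta m v_{max}}$ together with $p_o \ge 1$ to force a contradiction. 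I expect step (3)--(4) --- carefully quantifying how the small $(1+\epsilon)$ multiplicative errors accumulate across agents and showing they still cannot overcome the value gap guaranteed by a strict Pareto improvement --- to be the main obstacle, since this is where the precise choice of the threshold $\frac{1}{\theta m v_{max}}$ must be matched to the granularity of the valuations; the bang-per-buck comparison in steps (1)--(2) is routine. Since this argument is essentially identical to that of \citet{barman2018finding}, I would present the reduction to their Lemma~$5$ explicitly and only spell out the modifications needed to accommodate that $\bfp$ here arises from the weighted algorithm rather than the unweighted one (the weights do not affect the MBB structure, only the spending normalization, which is irrelevant to Pareto optimality).
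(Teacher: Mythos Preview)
The paper does not give its own proof of this lemma; it simply cites Lemma~5 of the extended version of \citet{barman2018finding}, noting (as you also observe) that the weights play no role because both Pareto optimality and the MBB condition are weight-independent. Your reconstruction of the underlying argument is correct and is exactly the standard one: the $(1+\epsilon)$-approximate MBB property together with a Pareto-domination contradiction and the completeness identity $\sum_i p(A_i)=\sum_i p(A'_i)=p(O)$.

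One point worth making fully explicit in your step~(4) is that the contradiction relies on the valuations being integers (this is implicit in the pseudo-polynomial setting of Theorem~\ref{thm:POwEF1}), so that $v_j(A'_j)>v_j(A_j)$ gives $v_j(A'_j)\ge v_j(A_j)+1$; combined with $\alpha'_j(\bfp)\le (1+\epsilon)v_{max}$ (from $p_o\ge 1$) and $p(O)\le m\theta$, summing the spending inequalities then forces $\epsilon\ge \frac{1}{\theta m v_{max}}$, contradicting the hypothesis. Your informal description of the ``granularity'' is pointing at precisely this step, but it should be stated outright rather than left as an expectation.
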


\subsection{Putting Things Together: Proof of Theorem \ref{thm:POwEF1}}

We assume without loss of generality that the valuations as well as weights are at least $1$; otherwise, one can scale up these values by multiplying with the same factor $M$ and obtain an equivalent instance. Now let $\epsilon$ be such that
$$
\epsilon= \frac{1}{2}\min \left\{\frac{1}{m^2v^3_{max}w_{max}},\frac{\epsilon_{min}}{63c},1\right\},
$$
where $c:=\max_{i \in N} \frac{v_i(O)}{w_i}$ and $\epsilon_{min}$ is defined as in (\ref{eq:epsilon-min}). Consider the $\epsilon$-rounded instance $\calI'$ of the given instance $\calI$. By Lemma \ref{lem:alg:invariant}$($i$)$ and Theorem \ref{thm:terminate:finite}, Algorithm \ref{alg:POwEF1} computes a pair $(A,\bfp)$ such that $A$ is a complete allocation that satisfies $3\epsilon$-$\wpefi$ and each agent $i \in N$ receives items in $\MBB'_i(\bfp)$ for $\calI'$. By Lemmas \ref{lem:price:upperbound}, \ref{lem:scale:wEF1} and \ref{lem:scale:PO}, $A$ is $\wefi$ and Pareto optimal for $\calI$. The bound on the running time follows from Theorem \ref{thm:terminate:finite}. 
\qed

\end{document}